\newcommand{\newparagraph}[1]{\vspace{1.5ex}\noindent\textbf{#1\hspace{1em}}}
\newtheorem{theorem}{Theorem}[section]
\newtheorem{cor}{Corollary}[section] 
\newcommand{\mc}{\mathcal}
\newcommand{\mbb}{\mathbb}
\newcommand{\gen}{\eps_{\mathrm{gen}}}
\newcommand\redout{\bgroup\markoverwith{\textcolor{red}{\rule[.5ex]{2pt}{0.4pt}}}\ULon}
\newcommand{\E}{\mbb{E}}
\newcommand{\eps}{\epsilon}
\newcommand{\Vast}{\bBigg@{5}}
\date{}
\title{
\vspace{-41pt}
\vspace{-0.5cm}Repeated Random Sampling for \\Minimizing the Time-to-Accuracy of Learning
}
\author{\vspace{-2cm}}
\begin{document}
\maketitle

\begin{center}
\vspace{-1.25cm}
\begin{tabular}
{>{\centering}m{5.8cm} >{\centering}m{5.8cm}}
Patrik Okanovic\footnotemark
\\\texttt{\small ETH Z{\"u}rich}
&
Roger Waleffe\footnote[1]{}\footnotemark
\\\texttt{\small University of Wisconsin-Madison}
\tabularnewline \multicolumn{2}{c}{\vspace{-6pt}} \tabularnewline
Vasilis Mageirakos
\\\texttt{\small ETH Z{\"u}rich}
&
Konstantinos E. Nikolakakis
\\ \texttt{\small Yale} 
\tabularnewline \multicolumn{2}{c}{\vspace{-6pt}} \tabularnewline
Amin Karbasi
\\ \texttt{\small Google Research, Yale} 
&
Dionysis Kalogerias
\\ \texttt{\small Yale}
\tabularnewline \multicolumn{2}{c}{\vspace{-6pt}} \tabularnewline
Nezihe Merve G{\"u}rel
\\ \texttt{\small TU Delft}
&
Theodoros Rekatsinas\footnotemark
\\ \texttt{\small ETH Z{\"u}rich}
\end{tabular}
\vspace{2.5bp}
\end{center}
\footnotetext[1]{Equal contribution}
\footnotetext[2]{Corresponding author: waleffe@wisc.edu}
\footnotetext[3]{Currently at Apple}

\begin{abstract}
Methods for carefully selecting or generating a small set of training data to learn from, i.e., data pruning, coreset selection, and data distillation, have been shown to be effective in reducing the ever-increasing cost of training neural networks. Behind this success are rigorously designed strategies for identifying informative training examples out of large datasets. However, these strategies come with additional computational costs associated with subset selection or data distillation before training begins, and furthermore, many are shown to even under-perform random sampling in high data compression regimes. As such, many data pruning, coreset selection, or distillation methods may not reduce `time-to-accuracy', which has become a critical efficiency measure of training deep neural networks over large datasets. In this work, we revisit a powerful yet overlooked random sampling strategy to address these challenges and introduce an approach called \emph{Repeated Sampling of Random Subsets} (RSRS or RS2), where we randomly sample the subset of training data for each epoch of model training. We test RS2 against thirty state-of-the-art data pruning and data distillation methods across four datasets including ImageNet. Our results demonstrate that RS2 significantly reduces time-to-accuracy compared to existing techniques. For example, when training on ImageNet in the high-compression regime (using less than 10\% of the dataset each epoch), RS2 yields accuracy improvements up to 29\% compared to competing pruning methods while offering a runtime reduction of 7$\times$. Beyond the above meta-study, we provide a convergence analysis for RS2 and discuss its generalization capability. The primary goal of our work is to establish RS2 as a competitive baseline for future data selection or distillation techniques aimed at efficient training.

\end{abstract}
\vspace{-0.1cm}\textbf{Keywords:} data pruning, coreset selection, dataset distillation, random sampling, time-to-accuracy

\section{Introduction}
\label{sec:intro}

Deep learning is continually achieving impressive results, from image classification~\cite{he2016deep, dosovitskiy2020image} to speech recognition~\cite{chiu2018state} and natural language processing~\cite{brown2020language, Radford2019LanguageMA, openai2023gpt4}. Much of this success can be attributed to training large neural networks over datasets with millions or billions of examples~\cite{ILSVRC15, Gokaslan2019OpenWeb, brown2020language, radford2021learning}. However, these network and dataset sizes lead to model training that requires weeks or months and yields significant monetary and computational costs~\cite{mindermann2022prioritized, brown2020language}. Such costs nearly prohibit further model refinement through hyperparameter search or neural architecture search. As a result, there has been an arms race to minimize the required training time to reach a given accuracy, i.e., time-to-accuracy.

To reduce time-to-accuracy, recent works focus on decreasing the amount of training data used for model learning during each epoch. More specifically, given a large, labeled dataset, these works aim to maximize end-model accuracy and minimize runtime when training for multiple rounds, where training within each round is performed only on a small set of examples equal in size to a fraction $r$ of the full dataset. The set of examples used for training at each round can be either chosen once before learning begins or periodically recomputed between rounds based on model updates (e.g., as in ~\cite{mirzasoleiman2020coresets, killamsetty2021glister}). Existing methods in this framework span two main categories: 1) \textit{data pruning} methods which aim to reduce time-to-accuracy by selecting a subset of the most informative examples for training~\cite{welling2009herding, bachem2015coresets, bateni2014distributed, chen2010super, killamsetty2021glister, paul2021deep, mirzasoleiman2020coresets, sorscher2022beyond}; 2) \textit{dataset distillation} methods which aim to achieve the same by generating small sets of synthetic examples that summarize training instances from the full dataset~\cite{wang2018dataset, yu2023dataset}. We review data pruning and distillation methods in Section~\ref{sec:prelim}.
Briefly, these methods have demonstrated strong competitiveness in minimizing the time-to-accuracy of learning and influenced a large number of subsequent works, including ours. However, several challenges persist in the pursuit of minimizing time-to-accuracy using these methods. One major challenge lies in the time efficiency aspect, as there is a notable overhead associated with subset selection. For example, many methods require pretraining an auxiliary model on the full dataset for a few epochs in order to select the subset, a task which we find can take roughly 250 minutes on ImageNet (Figure~\ref{fig:imagenet-time-to-acc}) while training itself with $r=1$ and 10\% takes only 40 and 400 minutes respectively. Even when efforts are made to address this issue, score-based data pruning has fallen short in achieving high accuracy, particularly in the high compression regime (small $r$)~\cite{ayed2023data}, a practical regime enabling ML practitioners to efficiently perform tasks such as hyperparameter selection and architecture search.

With these challenges in mind, in this work, we revisit the random sampling baseline for subset selection and introduce an intuitive and powerful extension of this method for optimizing the time-to-accuracy of learning. Random sampling is widely accepted as a competitive baseline, particularly for high compression regimes~\cite{ayed2023data, sorscher2022beyond}, whose success lies in the ability to select representative data examples for training, thus preventing overfitting. Typically, a static subset of the complete dataset is sampled once before the learning begins~\cite{guo2022deepcore, park2022active}. 
However, we believe that a better way to utilize this baseline is by repeatedly sampling data instances at each epoch, as this allows the learner to explore more previously unseen examples throughout training. Random exploration has already proven advantageous for data pruning methods, particularly in the high compression regime~\cite{ayed2023data}, allowing them to calibrate for distribution shift (caused by discarded examples). Moreover, adversarial training has also experienced time-to-accuracy reduction with random exploration~\cite{kaufmann2022efficient}. Surprisingly, however, this method has yet to be studied or evaluated for standard training of deep neural networks. 
Motivated by this gap, we introduce Repeated Sampling of Random Subsets (RSRS = RS2), where we sample a subset uniformly at random for each training epoch. The main contribution of this paper is an in-depth analysis of RS2. We demonstrate that RS2 surpasses state-of-the-art data pruning and distillation methods in accuracy and runtime across a wide range of subset sizes. In addition, we show that it outperforms all prior methods in the high compression regime, thus, posing a strong benchmark to beat for minimizing the time-to-accuracy of learning. In Section~\ref{sec:method}, we provide a detailed explanation of RS2, and in Section~\ref{sec:theory} we discuss its convergence rate and generalization error.

We extensively evaluate the time-to-accuracy of RS2 and compare it against twenty-two proposed data pruning and eight data distillation methods from the literature (Section~\ref{sec:exp}). We find that RS2 outperforms existing methods with respect to runtime and accuracy across varying subset selection sizes and datasets, including CIFAR10, CIFAR100, ImageNet30, and ImageNet itself. For example, when training a ResNet-18 in the high-compression regime (with $r = 10\%$) on ImageNet, RS2 yields a model with 66\% accuracy, 11 points higher than the next-best method and only 3.5 points less than training with the entire dataset every round. Yet, RS2 reaches this accuracy $9\times$ faster than standard full-dataset training. With $r = 1\%$, RS2 still reaches 47\% accuracy, while the next-best method achieves only 18\%. Finally, we present an extension of RS2 beyond supervised learning by evaluating its performance on self-supervised pretraining of GPT2~\cite{Radford2019LanguageMA}, a setting for which existing works have yet to study. We again find that using RS2 can reduce training time by 3-10$\times$ while yielding models within 2 and 5 points of the full-dataset accuracy and perplexity respectively.

\section{Preliminaries}
\label{sec:prelim}
We first present a unified framework for the problem of reducing time-to-accuracy by training on less data each epoch and then review existing data pruning and distillation methods.

\newparagraph{Problem Statement}
Given a large, labeled dataset $S = \{\mathbf{x}_i, y_i\}_{i=1}^N$, where each training example consists of an input feature vector $\mathbf{x}_i$ and a given ground truth label $y_i$, our goal is to minimize runtime and maximize accuracy when training for $X$ rounds, with the training of each round performed on a set of examples $S'$ with size $|S'| = r \cdot |S|$ for $r \in (0, 1]$.

We highlight two important points: First, it is generally assumed that $X$ is chosen such that training proceeds for the same number of rounds as when training on the full dataset, otherwise the computational benefits are reduced (e.g., $r=50$\% with $X=200$ is the same amount of computation as $r=100$\% with $X=100$). Second, note that the subset $S'$ may be static or vary across rounds. In fact, some existing methods periodically recompute $S'$ between rounds~\cite{mirzasoleiman2020coresets, killamsetty2021glister}, while others do not~\cite{sorscher2022beyond}. Given the primary goal of minimizing time-to-accuracy, either choice is valid so long as the time to generate the subset $S'$ at each round is included in the overall runtime.

\newparagraph{Related Work}
To minimize time-to-accuracy, \textit{data pruning} methods attempt to find a subset (also called a \textit{coreset}) of informative examples $S' \subset S$ such that a model trained on $S'$ achieves similar accuracy to a model trained on $S$~\cite{guo2022deepcore, park2022active}. Numerous metrics have been proposed to quantify importance: Uncertainty based methods such as Least Confidence, Entropy, and Margin~\cite{sachdeva2021svp} assume examples with lower confidence will have higher impact on training. 
Loss and error based methods operate on a similar principle. Forgetting Events~\cite{toneva2018empirical}, GraNd, EL2N~\cite{paul2021deep}, and others~\cite{bachem2015coresets, munteanu2018coresets, dasgupta2019teaching, liu2021just} are examples of these methods. Other techniques for subset selection, such as CRAIG~\cite{mirzasoleiman2020coresets} and GradMatch~\cite{pmlr-v139-killamsetty21a}, focus on gradient matching, where the goal is to construct a subset of examples such that a weighted sum of the model gradients on the subset matches the overall gradient on the full dataset. A different class of methods focuses on feature geometry for data subset selection.  
A number of geometry-based methods have been proposed, such as Herding~\cite{welling2009herding, chen2010super}, K-Center Greedy~\cite{sener2018active}, and prototypes~\cite{sorscher2022beyond}. Additional data pruning algorithms attempt to find the training examples closest to the decision boundary (e.g., Adversarial Deepfool~\cite{ducoffe2018adversarial} and Contrastive Active Learning~\cite{liu2021just}), pose subset selection as a bilevel optimization problem (e.g., Retrieve~\cite{killamsetty2021retrieve} and Glister~\cite{killamsetty2021glister}), or connect subset selection to maximization of a submodular function (e.g., GraphCut, Facility Location, and Log Determinant~\cite{iyer2021submodular}). Active learning methods (which aim to minimize labeling cost by selecting an informative subset given a large unlabeled dataset), can also be used in the presence of a labeled dataset when the goal is to reduce time-to-accuracy. In fact, active learning has been shown to outperform existing methods in this setting~\cite{park2022active}. We refer the reader to recent surveys~\cite{guo2022deepcore} for more detailed descriptions on the above methods and for comparisons between them.

In contrast to data pruning which assumes $S'$ to be a subset of $S$, \textit{dataset distillation} methods use $S$ to generate a small set of synthetic examples $S'$ that aims to summarize $S$. Dataset distillation methods can be split into three groups: 1) Performance matching methods~\cite{wang2018dataset, deng2022remember} aim to optimize the synthetic examples in $S'$ such that models trained on $S'$ achieve the lowest loss on the original data $S$. 2) Parameter matching techniques~\cite{zhao2021DC, lee2022dataset, kim2022dataset, cazenavette2022dataset} focus instead on matching the parameters of a network trained on $S'$ with those of a network trained on $S$ by training both models for a number of steps. 3) Finally, the distribution matching approach~\cite{zhao2023dataset} to dataset distillation attempts to obtain synthetic examples in $S'$ such that the distribution of $S'$ matches the distribution of $S$. We refer the reader to~\cite{yu2023dataset} for a detailed survey on dataset distillation. 

\section{RS2: Repeated Random Sampling for Reducing Time-to-Accuracy}
\label{sec:method}
We introduce the RS2 algorithm and discuss how it yields efficient training by reducing the amount of training data used at each round of model learning.

\newparagraph{Repeated Sampling of Random Subsets (RS2)} As discussed in Section~\ref{sec:prelim}, we assume access to a large, labeled dataset $S$ and aim to minimize runtime while maximizing accuracy by training for $X$ rounds. 
\textit{We define RS2 as follows: Rather than traininng a model on the full dataset $S$ for $X$ rounds (i.e., epochs), train the same model for $X$ rounds with the training of each round performed on a subset $S'$ (of size $|S'| = r \cdot |S|$) sampled randomly from $S$ (Algorithm~\ref{alg:general_rs2}).} Training within a round proceeds exactly the same when using $S$ or $S'$, except that $S'$ has fewer examples (e.g., mini batches). We next describe RS2 in more detail and discuss two variants of the sampling strategy and the importance of appropriate learning rate scheduling.

\begin{wrapfigure}{R}{0.5\textwidth}
\vspace{-0.3in}
\begin{minipage}{0.5\textwidth}
    \begin{algorithm}[H]
    \caption{RS2 General Algorithm}
    \label{alg:general_rs2}
    \begin{algorithmic}[1]
    \small
    \Require Dataset $S = \{\mathbf{x}_i, y_i\}_{i=1}^N$, selection ratio $r\in(0, 1]$, batch size $b$, initial model $w^0$, $X$ rounds
    \State $T \gets \lceil N/b \rceil$
    \State $t \gets 1$
    \For{round $j=1$ to $X$}
    \State $S' \gets randomly\_sample\_subset(S,\; r)$
    \For{$k=1$ to $r\cdot T$}
    \State batch $m \gets S'[(k-1) \cdot b:k \cdot b]$
    \State $w^{t} \gets train\_on\_batch(w^{t-1},\; m)$
    \State $t \gets t+1$
    \EndFor
    \EndFor
    \Return $w^t$
    \end{algorithmic}
    \end{algorithm}
\end{minipage}
\vspace{-0.1in}
\end{wrapfigure}

\newparagraph{RS2 With Replacement} The simplest version of RS2 samples $S'$ with replacement across rounds---sampling can be stratified. This means that examples included in the subset of previous rounds are replaced in $S$ and eligible to be resampled when constructing $S'$ for the current round, i.e., $S'$ is always constructed by sampling uniformly from all examples in $S$.

\newparagraph{RS2 Without Replacement} A second variant of RS2 samples $S'$ without replacement across rounds. That is, examples in $S$ that have been included in the subset during previous rounds are not considered when sampling $S'$ for the current round. This continues until all examples from $S$ have been included in $S'$ at some round, at which point all examples are once again eligible for subset selection and the process repeats. Observe that sampling $S'$ without replacement across $X$ rounds is equivalent to training on the full dataset $S$ for $r\cdot X$ rounds (i.e., equivalent to \textit{early stopping} after $r\cdot X$ rounds on the full dataset assuming the same randomn seed). 
In other words, RS2 without replacement can be implemented as follows: Given a random permutation of the full dataset $S=\{\mathbf{x}_i, y_i\}_{i=1}^N$, we generate mini batches of size $b$ for training by first traversing the full dataset. That is, we select $b$ sequential examples (first mini batch), then the next $b$ examples (second mini batch), and so on. After iterating over the full dataset, we generate a new random permutation and repeat the procedure (see Appendix \ref{section_pseudo_app}, Algorithm~\ref{alg:minibatch_sgd_no_momentum}). The algorithm early stops after $r \cdot T \cdot X$ gradient updates (with $T = N / b$).

\newparagraph{RS2 Hyperparameters} For both RS2 variants, we assume that training proceeds using the same hyperparameters (e.g., batch size, optimizer, etc.) as those used when training on the full dataset with one exception: the learning rate schedule. The reason for this is that state-of-the-art training procedures often slowly decay the learning rate after each SGD step. We show in Figure~\ref{fig:learning_rate} the common cosine annealing learning rate schedule~\cite{loshchilov2016sgdr} used to train ResNets on CIFAR10. We include a vertical line showing the point of early stopping when running RS2 without replacement for $r=10$\%. Observe that data pruning with RS2 (or any method) leads to fewer SGD iterations (even when training for the same $X$ rounds). Thus, if the data pruning method uses the same learning rate schedule as when training on the full dataset, the learning rate may not decay to a sufficiently small value to achieve high end-model accuracy. We refer to this setting as \textit{naive early stopping}. Instead, we train both RS2 variants with the same kind of learning rate schedule as when training the full dataset (e.g., cosine annealing), but we decay the learning rate faster with the decay rate inversely proportional to the subset size $r$ (e.g., green line in Figure~\ref{fig:learning_rate}). This is standard across existing data pruning methods~\cite{guo2022deepcore}.

\begin{wrapfigure}{R}{0.4\textwidth}
\vspace{-0.1in}
\begin{minipage}{0.4\textwidth}
\centering
    \includegraphics[width=1.0\linewidth]{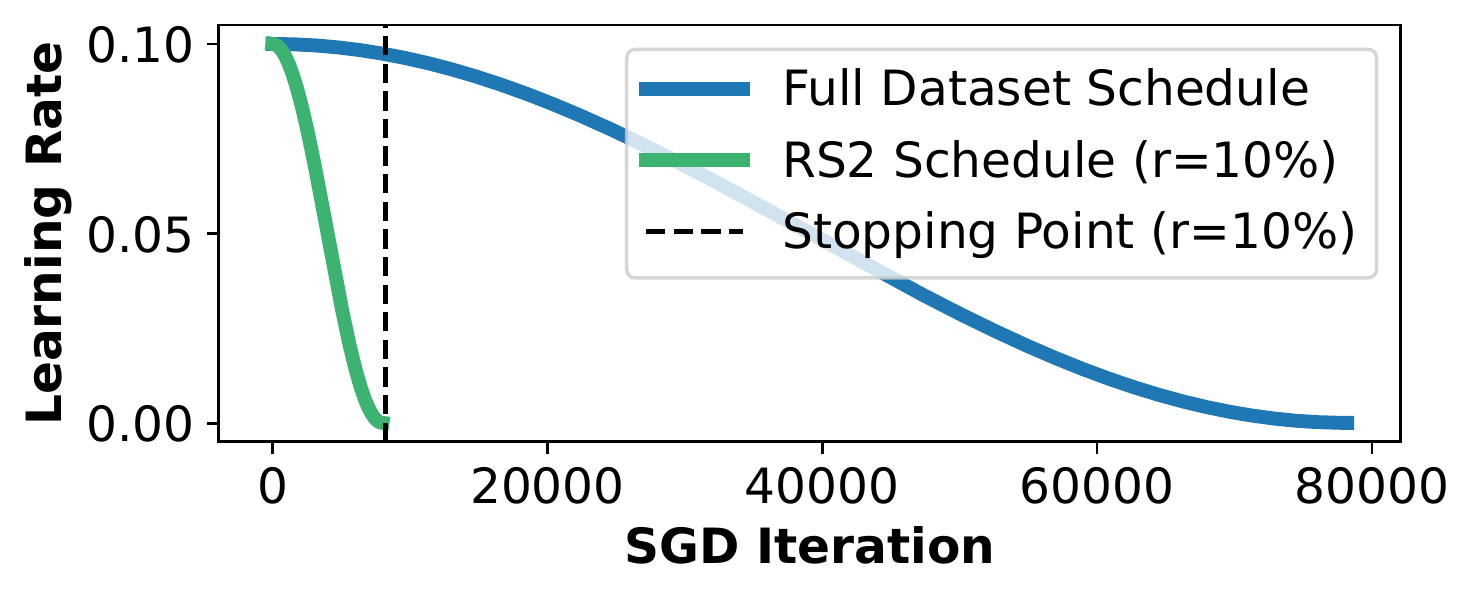}
    \caption{Learning rate schedules on CIFAR10 with and without data pruning.}
    \label{fig:learning_rate}
\end{minipage}
\end{wrapfigure}

Naive early stopping results in worse accuracy than RS2 with the modified learning rate schedule. For example, on CIFAR10 with $r=10\%$, the former achieves $83.9\%$ accuracy while RS2 with replacement reaches $89.7\%$ and RS2 without replacement (i.e., early stopping with faster learning rate decay) reaches $91.7\%$. While we generally see RS2 without replacement outperform RS2 with replacement in terms of accuracy, we do not claim that one variant is strictly better in this paper. We refer the reader to existing works which study this problem~\cite{haochen2019random, lu2022grab, NEURIPS2020_42299f06}. 

\section{Theoretical Analysis of RS2}
\label{sec:theory}
With time-to-accuracy in mind, we now study the convergence rate and generalization error of RS2.

\newparagraph{RS2 Convergence Rate}
We show that RS2 without replacement, under the assumptions described in Appendix~\ref{app:convergence}, converges at the same rate with respect to SGD iterations as standard training on the full dataset.
For the convergence rate analysis, we assume $train\_on\_batch(w^{t-1},\; m)$ in Algorithm~\ref{alg:general_rs2} uses Nesterov's accelerated gradient descent update~\cite{nesterov1983method} as shown in Algorithm~\ref{alg:minibatch_sgd}.
Following the analysis of recent works on the performance of accelerated mini-batch SGD~\cite{ghadimi2016accelerated}, we have:
\begin{cor}
\label{thm:convergence}
Suppose the loss $l(w)$ is nonconvex, has $\beta$-Lipschitz continuous gradients, and is bounded below. Let $g (w, \xi_t)$ at step $t$ represent the gradient estimate used when updating the model as in Algorithm~\ref{alg:minibatch_sgd} in the Appendix. Assume the gradient estimate satisfies $\mathbb{E} \left[ || g (w, \xi_t) - \nabla l (w)  ||^2 \right ] \leq \sigma ^2$, and $\mathbb{E} [g(w, \xi_t)] = \nabla l(w)$, where $\xi_t$ are random vectors whose distributions are supported on $\Xi_t \in \mathbb{R}^d$. 
With the previous assumptions, using a selection ratio $r \in (0,1]$ and mini batch of size $b$, RS2 produces an iterate $w$ after $X$ rounds, with $rT$ batches per round, such that:
\begin{equation}
\small
    \mathbb{E} \left[|| \nabla l(w) ||^2 \right ] \leq \mathcal{O} \left( \frac{\beta (l(w^0) - l(w^*))}{r \cdot T \cdot X} + \frac{\sigma\sqrt{\beta(l(w^0) - l(w^*))}}{\sqrt{b \cdot r \cdot T \cdot X}} \right).
\end{equation}
\end{cor}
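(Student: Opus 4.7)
The plan is to deduce Corollary~\ref{thm:convergence} as a direct instantiation of the accelerated nonconvex stochastic optimization bound of Ghadimi and Lan~\cite{ghadimi2016accelerated}, once I identify the effective number of stochastic updates that RS2 performs. First I would observe that Algorithm~\ref{alg:general_rs2} executes exactly $K := r \cdot T \cdot X$ mini-batch updates in total: each of the $X$ rounds performs $rT$ batches, and each batch triggers one update step, which by the hypothesis of the corollary is the Nesterov accelerated stochastic step from Algorithm~\ref{alg:minibatch_sgd} in the appendix.

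Next I would verify the stochastic first-order oracle conditions required by the Ghadimi--Lan analysis. By assumption, the per-sample estimator $g(w,\xi_t)$ is unbiased with variance at most $\sigma^2$. Averaging $b$ such estimators over a mini-batch yields a new estimator that remains unbiased, with conditional variance at most $\sigma^2/b$. Hence each accelerated step takes an unbiased stochastic gradient whose noise is controlled by $\sigma^2/b$, matching precisely the hypotheses of their accelerated nonconvex theorem.

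Applying that theorem to $K$ iterations with smoothness $\beta$ and effective noise $\sigma^2/b$, and instantiating the accelerated step sizes and momentum weights as prescribed there (as a function of $K$, $\beta$, and $\sigma/\sqrt{b}$), one obtains an iterate $w$---either sampled uniformly at random from the trajectory or taken as the best along it---for which
\begin{equation*}
\mathbb{E}\!\left[\|\nabla l(w)\|^2\right] \leq \mathcal{O}\!\left( \frac{\beta\bigl(l(w^0)-l(w^*)\bigr)}{K} + \sigma\sqrt{\frac{\beta\bigl(l(w^0)-l(w^*)\bigr)}{b\,K}} \right).
\end{equation*}
Substituting $K = rTX$ then recovers the stated bound, so the body of the argument is essentially a repackaging of the Ghadimi--Lan corollary under this particular choice of total step count.

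The step I expect to be the main obstacle is justifying the oracle conditions for the \emph{without-replacement} variant of RS2, in which mini-batches within a single pass over the data are drawn from one random permutation and are therefore not fully independent across consecutive steps. To address this I would either (i) argue that, conditional on the past iterates, each mini-batch still has the uniform marginal distribution over size-$b$ subsets of $S$, so it remains an unbiased gradient estimator whose conditional variance is no larger than $\sigma^2/b$ (sampling without replacement cannot exceed the with-replacement variance), which is all that Ghadimi--Lan's argument actually uses in its martingale-difference formulation; or (ii) invoke the random-reshuffling results cited in Section~\ref{sec:method} to absorb the small dependence-induced cross terms into constants. Either route preserves the stated rate and completes the corollary.
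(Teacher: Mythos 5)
Your proposal matches the paper's proof in substance: the appendix derivation simply re-runs the Ghadimi--Lan accelerated nonconvex analysis with the total iteration count replaced by $K = rTX$ and then invokes the $\sigma^2/b$ variance reduction for mini-batches of size $b$, exactly as you describe (the paper writes out the full recursion rather than citing the theorem as a black box, but the content is identical). The without-replacement subtlety you flag as the main obstacle is not actually confronted in the paper either --- the unbiasedness and bounded-variance conditions on $g(w,\xi_t)$ are taken as hypotheses of the corollary rather than verified for the reshuffling sampler, so your instantiation is, if anything, more careful than the paper's own treatment.
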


We discuss the assumptions and prove Corollary~\ref{thm:convergence} in Appendix~\ref{app:convergence}. 
We find that the convergence rate of RS2 compared to the full dataset convergence rate~\cite{ghadimi2016accelerated} has a scaling factor $r$ in front of the total number of iterates, while the bound remains consistent with respect to all the other parameters; With $r=1$ we recover the results from previous work~\cite{ghadimi2016accelerated}. When $r<1$ the gradient bound after $X$ rounds increases compared to training with the full dataset for $X$ rounds, but this is intuitive as each round contains fewer mini batches ($rT$ with $r<1$ instead of $T$). If RS2 with $r < 1$ is instead allowed to train for more rounds, specifically $X_{new} = \frac{X}{r}$, then both training RS2 for $X_{new}$ rounds and training on the full dataset for $X$ rounds result in the same number of mini-batch iterations ($TX$). In this case, the gradient is bounded by the same value, implying that RS2 and training on the full dataset converge with respect to \textit{mini-batch iterations} at the same rate. Overall, \Cref{thm:convergence} ties the convergence behavior of RS2 directly to the amount of pruning $r$ and to training on the full dataset.

\newparagraph{RS2 Generalization Error} 
We now provide an upper bound on the generalization error of RS2. For this analysis, we relax the update rule from Algorithm~\ref{alg:general_rs2} to a standard gradient update without momentum. Recall that as $r$ decreases, RS2 results in a smaller total number of gradient steps after $X$ rounds compared to $r=1$. While this may lead to an increase in optimization error, the generalization error is expected to be smaller than that of the full dataset schedule (shorter training time gives a smaller generalization error). This phenomenon has been characterized rigorously in prior works~\cite{hardt2016train} for vanilla SGD with batch size $b=1$, however it does not directly apply to larger mini batch sizes and general selection rules. As such, we show an extension of known generalization error bounds that also holds for RS2 with mini batch size $b$. Before we proceed, we first introduce some notation for brevity. We define the training dataset $S\triangleq (z_1,z_2,\ldots,z_N)$, for which $z_i\triangleq (\mathbf{x}_i ,y_i)$ for $i\in \{1,\ldots , n\}$ and the (empirical) loss $l(w)\triangleq \frac{1}{N}\sum^N_{i=1} f(w , z_i ) $, where $f: \mbb{R}^d \times \mc{Z} \rightarrow \mbb{R}^+$. Let $z_1,z_2,\ldots,z_N,z$ be i.i.d random variables with respect to an unknown distribution $\mc{D}$. Then for any stochastic algorithm $A$ with input $S$, and output $A(S)$, the generalization error $\epsilon_{\mathrm{gen}}$ is defined as the difference between the empirical and population loss~\cite{hardt2016train}:
\vspace{-0.25em}
\begin{align}
\small
   \gen (f,\mc{D}, A) &\triangleq  \E_{S,A,z} \big[  f(A (S), z )\big] -  \E_{S,A} \Big[\frac{1}{N}\sum^N_{i=1} f(A (S), z_i ) \Big]. \label{eq"gen_def}
\end{align} 
We now proceed with an upper bound on the generalization error of RS2. The next result follows from recent work~\cite{nikolakakis2023select}, and applies to RS2 with no momentum and any batch size $b$.
\begin{theorem}[Generalization error of standard gradient RS2,~\cite{nikolakakis2023select} Theorem 8]\label{thm:generalization}
    Let the function $f$ be nonconvex, $L_f$-Lipschitz and $\beta_f$-smooth. Then the generalization error of the standard gradient RS2 algorithm with a decreasing step-size $\eta_t\leq C/t$ (for $C<1/\beta_f$), is bounded as: 
\begin{align}
\small
      |\gen (f, \mc{D} , \mathrm{RS2} ) |\leq \frac{1}{N} \cdot 2 C e^{C\beta_f} L_f^2 {(r \cdot T \cdot X)}^{C\beta_f }  \min\Big\{  1+ \frac{1}{C \beta_f} , \log(e\cdot r\cdot T\cdot X) \Big\}.\label{eq:gen_bound}
\end{align}
\end{theorem}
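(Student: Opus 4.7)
The plan is to derive the bound via algorithmic stability, following the standard Hardt--Recht--Singer template adapted to the mini-batch sampling structure of RS2. Recall that a stochastic algorithm $A$ is $\varepsilon$-uniformly stable if, for any pair of datasets $S,S^{(i)}$ differing in a single example, one has $\sup_z \lvert \E_A[f(A(S),z)] - \E_A[f(A(S^{(i)}),z)] \rvert \le \varepsilon$, and such stability implies $\lvert \gen(f,\mc{D},A) \rvert \le \varepsilon$. Therefore the task reduces to controlling, in expectation over the algorithm's randomness, the quantity $\delta_T \triangleq \E \lVert w_T - w_T^{(i)} \rVert$, where $w_t$ and $w_t^{(i)}$ are the iterates produced by RS2 run with shared randomness on $S$ and $S^{(i)}$ respectively, then multiplying by $L_f$ via Lipschitzness.

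First I would couple the two executions so that at every iteration $t\in\{1,\ldots,r\cdot T\cdot X\}$ the same mini batch indices are drawn for both runs. Let $E_t$ be the event that the differing index $i$ lies in the batch at step $t$. Using $\beta_f$-smoothness, the standard ``non-expansiveness modulo smoothness'' argument yields, on $E_t^c$, $\lVert w_t - w_t^{(i)}\rVert \le (1+\eta_t \beta_f)\,\lVert w_{t-1}-w_{t-1}^{(i)}\rVert$, while on $E_t$ the Lipschitz bound on the per-sample gradient contributes an extra additive term of order $2\eta_t L_f/b$ (the $1/b$ coming from the $b$-sample mini-batch average). Taking expectations and using $\P[E_t]\le b/N$ for uniform sampling (this holds identically for either RS2 variant under the coupling), one obtains the recursion
\begin{equation*}
\delta_t \le (1+\eta_t \beta_f)\,\delta_{t-1} + \frac{2\eta_t L_f}{N}.
\end{equation*}
Note that the $1/b$ from the batched update cancels the $b/N$ from the inclusion probability, which is the key reason the bound is independent of $b$ and matches the single-sample SGD result.

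Next I would unroll the recursion under $\eta_t \le C/t$. Using $1+x\le e^x$ and $\sum_{s=t+1}^{\,T'} \eta_s \beta_f \le C\beta_f \log(T'/t)$ with $T'=r\cdot T\cdot X$ gives
\begin{equation*}
\delta_{T'} \le \frac{2L_f}{N}\sum_{t=1}^{T'}\eta_t \exp\!\Big(\sum_{s=t+1}^{T'}\eta_s\beta_f\Big) \le \frac{2L_f}{N}\sum_{t=1}^{T'} \frac{C}{t}\Big(\frac{T'}{t}\Big)^{C\beta_f}.
\end{equation*}
Evaluating $\sum_{t=1}^{T'} t^{-(1+C\beta_f)}$ two ways — by the tail integral $\int_1^\infty t^{-(1+C\beta_f)}\,dt = 1/(C\beta_f)$, and crudely by $\sum_{t=1}^{T'} 1/t \le \log(e T')$ after factoring out $(T')^{C\beta_f}$ — gives the $\min\{1+1/(C\beta_f),\,\log(e\cdot r\cdot T\cdot X)\}$ factor. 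Absorbing the constant $e^{C\beta_f}$ (which appears when one handles the $t=1$ term of the product more carefully) and multiplying by $L_f$ yields the stated bound.

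The main obstacle is the step in which the mini-batch inclusion probability interacts with the $1/b$ scaling of the batched update: one must keep track of whether the coupling is performed batch-wise or index-wise, and verify that the inclusion probability $b/N$ holds uniformly across the within-round sampling (trivially for RS2 with replacement, and by symmetry across the $X$ rounds for RS2 without replacement, using that each example appears equally often in expectation). Once this is in place, the remaining computation is a routine unrolling of a linear recursion and the choice of the two bounding strategies for the harmonic-type sum that produces the minimum appearing in \eqref{eq:gen_bound}.
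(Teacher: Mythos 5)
Your proposal is correct, but it takes a more self-contained route than the paper does. The paper's own proof is essentially a reduction: it verifies that both RS2 variants select batches \emph{non-adaptively and data-independently} (with replacement: a fresh uniform subset each round; without replacement: round-robin over a shuffled permutation), so that RS2 falls within the class of ``general batch schedules'' of the cited reference, and then invokes that reference's stability lemma and growth recursion verbatim with $r\cdot T\cdot X$ total steps. You instead reconstruct the entire argument: the coupled runs, the growth recursion $\delta_t \le (1+\eta_t\beta_f)\delta_{t-1} + \P[E_t]\cdot 2\eta_t L_f/b$, the cancellation of the $1/b$ batch-averaging factor against the $b/N$ inclusion probability, and the two evaluations of $\sum_t t^{-(1+C\beta_f)}$ that produce the $\min\{1+1/(C\beta_f),\log(e\cdot r\cdot T\cdot X)\}$ factor. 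Notably, the one piece of content the paper's proof actually supplies---that the per-batch inclusion probability of the perturbed index is (at most) $b/N$ uniformly for both sampling variants, by exchangeability of the permutation in the without-replacement case---is precisely what you single out as the main obstacle and resolve correctly. Your approach buys a verifiable, citation-free derivation and makes the $b$-independence of the bound transparent; the paper's buys brevity and inherits the slightly stronger on-average-stability formulation of the reference (you use uniform stability, which also suffices for the expected generalization gap in \eqref{eq"gen_def}). Two cosmetic points: the recursion should be stated pathwise with the indicator $\mathbf{1}_{E_t}$ before taking expectations, so that only the marginal $\P[E_t]$ is needed (this matters for the without-replacement variant, where the events $E_t$ are dependent within a pass); and the factor $e^{C\beta_f}$ is exactly the slack needed to absorb the boundary terms you wave at, so your derivation matches the stated constant.
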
 
\vspace{-0.25em}
The proof of Theorem \ref{thm:generalization} follows from very recent work~\cite{nikolakakis2023select} (Appendix~\ref{app:generalization_error}). Observe that, as above for the convergence rate, when comparing the generalization error of RS2 to that of the full dataset~\cite{nikolakakis2023select}, the dependence on all parameters remains the same except that the number of iterates for RS2 is scaled by $r$. 
The generalization error of RS2 relies on the fact that the batch at each iteration is selected \emph{non-adaptively and in a data-independent fashion}. However, most data pruning methods adopt data-dependent strategies, and deriving the relationship between the generalization error of RS2 and that of any arbitrary data-dependent mechanism can be challenging. A recent finding on the generalization of data-dependent pruning, particularly in the small $r$ regime, shows data-dependency may worsen the generalization~\cite{ayed2023data} due to the distribution shift caused by discarding a large number of data examples during training, thus leading to inferior performance compared to random sampling.  
\section{Evaluation}
\label{sec:exp}
We evaluate RS2 on four common benchmarks for supervised learning and compare against existing data pruning and distillation methods. We show that:
\begin{enumerate}[leftmargin=*, itemsep=-0.1ex, topsep=0.5ex]
    \item Across a wide range of pruning ratios, RS2 reaches higher accuracy than all existing methods.
    \item For a given pruning ratio, RS2 also trains the fastest, and thus has the fastest time-to-accuracy.
    \item In the presence of noisy labels, RS2 is the most robust data pruning method; It achieves the highest end-model accuracy and lowest relative drop in performance vs. training on the clean dataset.
\end{enumerate}
We also show that RS2 can extend beyond conventional supervised learning and reduce the self-supervised pretraining cost of GPT models with little model quality loss.

\subsection{Experimental Setup}
\label{subsec:setup}
We first discuss the setup used in the experiments. More details can be found in Appendix~\ref{appendix:setup}.

\newparagraph{Datasets, Models, and Metrics}
We benchmark RS2 against baseline methods using CIFAR10~\cite{krizhevsky2009learning}, CIFAR100~\cite{krizhevsky2009learning}, ImageNet30 (a subset of ImageNet)~\cite{hendrycks2019using}, and ImageNet~\cite{ILSVRC15} itself. We train ResNet models~\cite{he2016deep} representative of modern state-of-the-art convolutional neural networks. Beyond the standard supervised setting, we also evaluate RS2 when training GPT2~\cite{Radford2019LanguageMA} on the OpenWebText dataset~\cite{Gokaslan2019OpenWeb}. In this setting, we measure zero-shot final word prediction accuracy on the LAMBADA dataset~\cite{paperno2016lambada} and perplexity on LAMBADA and WikiText103~\cite{merity2016pointer}. For all experiments we measure subset selection overhead, overall training time (including the total time for subset selection across all rounds and the total training time on selected subsets), and end-model accuracy.

\newparagraph{Baselines}
We compare RS2 against 22 data pruning methods and eight data distillation methods from the literature. A full list and their abbreviations can be found in Appendix~\ref{appendix:setup}. We include data pruning methods from different classes, including uncertainty-based, loss-based, gradient matching, and geometry-based methods. All baselines are used for the smallest dataset (i.e., CIFAR10), but some methods do not scale to larger datasets (e.g., ImageNet). We utilize existing open source implementations and results of these methods where applicable~\cite{park2022active, guo2022deepcore} and implement RS2 within the same code for equal comparison. We compare to existing methods which sample a static subset once before learning begins, and to methods which repeatedly sample the subset each epoch.

\newparagraph{Training Details}
We use standard hyperparameters for each dataset from prior works known to achieve high accuracy. We use the same hyperparameters for all methods where applicable (e.g., batch size, initial learning rate, number of training rounds, etc.). Hyperparameters individual to each baseline are set based on the best known values from prior works~\cite{guo2022deepcore}. Full details are provided in Appendix~\ref{appendix:setup}.
We run image classification experiments on a university cluster with job isolation and NVIDIA RTX 3090 GPUs. For GPT2 training we utilize a machine with NVIDIA V100 GPUs.

\subsection{End-to-End Data Pruning Experiments}
\label{subsec:exp_e2e}
We discuss end-to-end comparisons of RS2 with data pruning baselines on supervised learning benchmarks. Results are shown in Figures~\ref{fig:acc}-\ref{fig:time-to-acc} and Table~\ref{tab:per_epoch}.

\newparagraph{Accuracy}
In Figure~\ref{fig:acc} we show the end-model accuracy of RS2 compared to existing methods on CIFAR10 and ImageNet for varying selection ratios. We use the combined baseline methods and setting from recent studies~\cite{guo2022deepcore, park2022active} together with newer prototype-based data pruning methods~\cite{sorscher2022beyond}. We include the exact accuracies for these Figures in Tables~\ref{tab:cifar10_acc} and~\ref{tab:imagenet_acc} in Appendix~\ref{appendix:results}. Here, we follow the setting proposed by these works: for all baselines we sample a static subset once before training starts. The repeated sampling of RS2 leads to accuracy improvements of at least 7\% in the high compression regime ($r\leq 10\%)$. For example, on CIFAR10 with 5\% of the training data each epoch, RS2 without replacement achieves 87.1\% accuracy while the next closest baseline reaches just 65.7\%. 
Similar results hold on CIFAR100 and ImageNet30 (Appendix Table~\ref{tab:acc_cfiar100_imagenet30}). Figure~\ref{fig:imagenet_acc} shows that RS2 also outperforms existing methods for the much larger ImageNet dataset. 
For example, RS2 end-model accuracy with $r=1$\% is 46.96\% while the next closest baseline trains to only 18.1\%. Moreover, the end-model accuracy of RS2 is actually on par with the training on the full dataset for non-trivial selection ratios (e.g., $r=10$\%), offering a potential practical solution to reduce the cost of training in some applications (e.g., neural architecture search) (see also runtime reductions below). 

\begin{figure}
  \centering
  \begin{subfigure}[t]{0.48\textwidth}
     \centering
     \includegraphics[width=\textwidth]{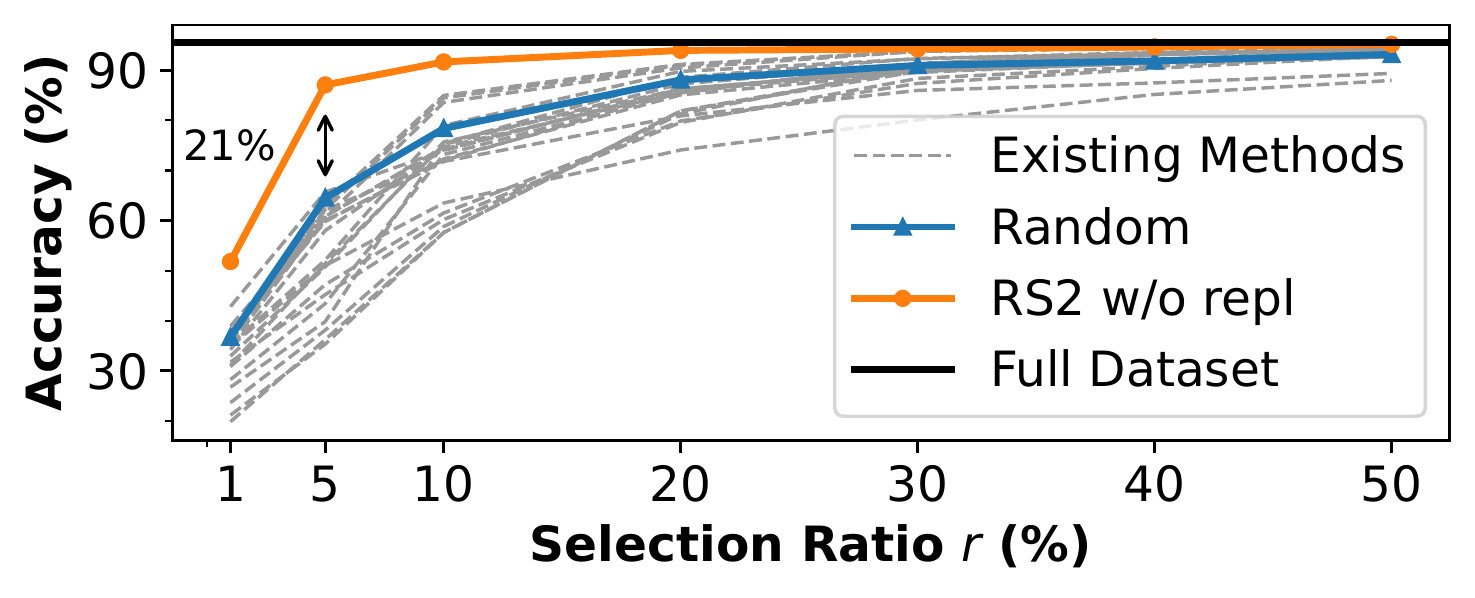}
     \caption{CIFAR10 accuracy}
     \label{fig:cifar10_acc}
  \end{subfigure}
  \hfill
  \begin{subfigure}[t]{0.48\textwidth}
     \centering
     \includegraphics[width=\textwidth]{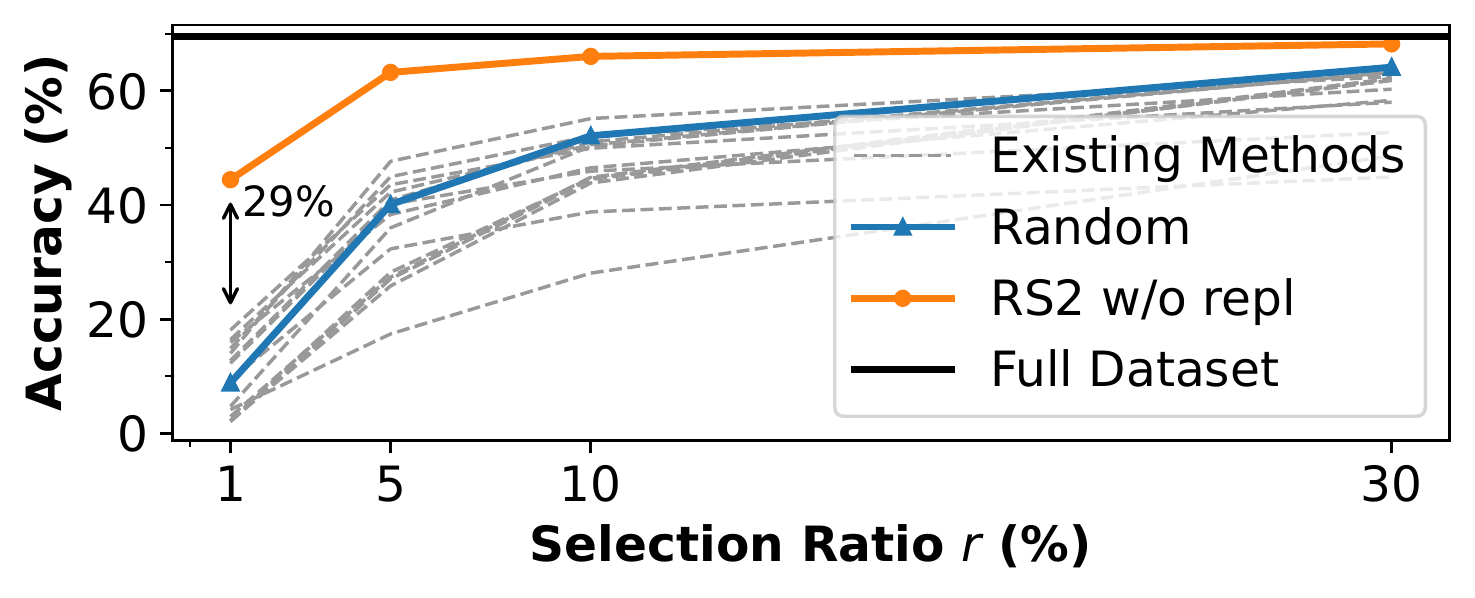}
     \caption{ImageNet accuracy}
     \label{fig:imagenet_acc}
  \end{subfigure}
  \caption{Accuracy achieved by data pruning methods when training ResNet-18 on CIFAR10 and ImageNet. Repeated Sampling of Random Subsets (RS2) outperforms existing methods.}
  \vspace{-0.20in}
  \label{fig:acc}
\end{figure}

Next, we extend baselines to also perform repeated sampling. Our goal is to examine if the prior observations are attributed only to the fact that RS2 performs repeated sampling while the above baselines do not. To do so, we take all baseline methods and where applicable modify them as follows: For a given method $M$ 1) if $M$ computes a numerical example importance for each training instance, we view these values as a probability distribution over the examples and \textit{resample} subsets according to this distribution after each round (called $M$-RS) and 2) if $M$ computes example importance based on model outputs or gradients, we \textit{recompute} example importance after each round using the current model and then choose the most important examples as the subset for the next round (called $M$-RC). Note that the latter set of methods (RC) are unlikely to be able to improve the efficiency of training as they require computing the model forward pass for every example between each round, but we include their accuracy for completeness. Results on CIFAR10 (for computational considerations we do not run these methods on ImageNet) are shown in Table~\ref{tab:per_epoch}. While updating the subset for existing methods each round improves their accuracy, RS2 still reaches the highest end-model accuracy for $r \leq 10$\%. This experiment highlights the importance of \textit{random} sampling, while the results above highlight the importance of \textit{repeated} sampling (e.g., Random vs RS2 in Figure~\ref{fig:acc}). 

\begin{table}[t]
\scriptsize
\caption{Accuracy achieved by data pruning methods with per-round sampling when training ResNet-18 on CIFAR10. The training subset is update for all methods after each round, either by resampling from a static example importance distribution (RS, left) or by recomputing example importance based on model updates (RC, right). Repeated Sampling of Random Subsets (RS2) outperforms repeated sampling based on example importance. Best method bolded; Next best underlined.}
\vspace{4pt}
\label{tab:per_epoch}
\centering
{
\resizebox{0.49\textwidth}{!}{
\begin{tabular}{llll}
\toprule
Selection Ratio ($r$) & 5\% & 10\% & 30\% \\
\midrule
CD-RS & - & - & - \\
Herding-RS & - & - & -  \\
K-Center Greedy-RS & - & - & - \\
Least Confidence-RS & 67.6$\pm$5.1 & 83.4$\pm$4.9 & 93.7$\pm$0.4 \\
Entropy-RS & 85.2$\pm$0.9 & 89.8$\pm$0.4 & \underline{94.4$\pm$0.3} \\
Margin-RS & 84.3$\pm$2.7 & \underline{90.4$\pm$1.0} & \underline{94.4$\pm$0.2} \\
Forgetting-RS & 81.9$\pm$3.1 & 88.3$\pm$2.4 & 94.0$\pm$0.1 \\
GraNd-RS & 86.2$\pm$2.1 & 90.1$\pm$0.9 & \textbf{94.5$\pm$0.1} \\
CAL-RS & 81.1$\pm$3.0 & 86.6$\pm$0.7 & 93.3$\pm$0.1 \\
Craig-RS & \underline{86.7$\pm$0.8} & 89.8$\pm$0.2 & 94.3$\pm$0.1 \\
Glister-RS & - & - & - \\
SP-Easy-RS & 84.0$\pm$4.3 & 88.4$\pm$0.1 & 93.6$\pm$0.3 \\
\midrule
RS2 w/o repl & \textbf{87.1$\pm$0.8} & \textbf{91.7$\pm$0.5} &  94.3$\pm$0.2 \\
\bottomrule
\end{tabular}
}
}
{
\resizebox{0.49\textwidth}{!}{
\begin{tabular}{llll}
\toprule
Selection Ratio ($r$) & 5\% & 10\% & 30\% \\
\midrule
CD-RC & 75.2$\pm$2.2 & \underline{83.1$\pm$0.7} & 87.5$\pm$0.2 \\
Herding-RC & 30.1$\pm$2.6 & 40.6$\pm$8.4 & 81.0$\pm$0.9 \\
K-Center Greedy-RC & 78.1$\pm$1.5 & 82.3$\pm$0.5 & 86.3$\pm$0.4\\
Least Confidence-RC & 44.8$\pm$12 & 76.7$\pm$3.9 &  \underline{88.3$\pm$0.3} \\
Entropy-RC & 41.4$\pm$6.9 & 78.4$\pm$2.9 &  86.9$\pm$0.1 \\
Margin-RC & \underline{79.7$\pm$1.4} & 82.8$\pm$1.4 & 86.8$\pm$0.2 \\
Forgetting-RC & 28.7$\pm$0.8 & 40.7$\pm$6.5 & 78.8$\pm$4.3 \\
GraNd-RC & 15.5$\pm$1.8 & 24.1$\pm$6.0 & 75.2$\pm$5.0 \\
CAL-RC & 66.7$\pm$1.7 & 74.5$\pm$0.8 & 84.8$\pm$0.4 \\
Craig-RC &  70.3$\pm$13 & 80.3$\pm$0.8 & 85.5$\pm$0.3 \\
Glister-RC & 72.5$\pm$0.6 & 81.4$\pm$0.7 & 86.6$\pm$0.5 \\
SP-Easy-RC & - & - & - \\
\midrule
RS2 w/o repl & \textbf{87.1$\pm$0.8} & \textbf{91.7$\pm$0.5} &  \textbf{94.3$\pm$0.2} \\
\bottomrule
\end{tabular}
}
}
\vspace{-0.15in}
\end{table}

\newparagraph{Training Time}
We now study the training time of RS2 compared to existing methods on CIFAR10 and ImageNet. In particular, we focus on time-to-accuracy to quantify efficient training. As runtime measurements have generally not been reported in the literature, we train all methods from scratch on NVIDIA 3090 GPUs for these experiments. We use all baseline methods from Figure~\ref{fig:cifar10_acc} and Figure~\ref{fig:imagenet_acc} for each dataset, respectively, which do not give GPU out-of-memory. We show the time-to-accuracy on CIFAR10 in Figure~\ref{fig:cifar10-time-to_acc} and on ImageNet in Figure~\ref{fig:imagenet-time-to-acc} using $r=10$\% for both datasets. 
We report the total time for subset selection on CIFAR10 for all methods in Appendix Table~\ref{tab:cifar10_subset_selection} and for baselines which utilize per-round sampling in Appendix Table~\ref{tab:cifar10_per_epoch_subset_selection}. We also include the time-to-accuracy measurements on CIFAR10 and ImageNet for different pruning ratios in Appendix~\ref{appendix:results}.

Figures~\ref{fig:cifar10-time-to_acc} and \ref{fig:imagenet-time-to-acc} show that RS2 provides the fastest time-to-accuracy when compared to previous data pruning methods. Note that the repeated subset selection in RS2 leads to negligible overhead compared to training on a static random subset (Figure~\ref{fig:time-to-acc}) and to the total training time: For example, the total subset selection time for RS2 on CIFAR10 with $r=10\%$ is less than one second, yet the total runtime is 750 seconds. Existing methods, however, are primarily limited by the fact that they require \textit{pretraining} an auxiliary model on the full dataset for a few epochs in order to rank example importance. 
For example, on ImageNet the fastest baseline begins training after 250 minutes, yet training itself only requires 400 minutes. With $r=1$\%, the training time drops to just 40 minutes; in this case the 250 minute overhead implies the fastest baseline is over 7$\times$ slower than RS2 (250+40=290 vs 40). Even if the pretraining overhead is amortized by fixing the subset for the remaining rounds, or by resampling from the importance distribution after each round, as in our `RS' baseline methods in Table~\ref{tab:per_epoch}, the initial overhead of these methods is still orders of magnitude higher than the total overhead of RS2 across all rounds (e.g., Table~\ref{tab:cifar10_subset_selection}-\ref{tab:cifar10_per_epoch_subset_selection}). Moreover, Figures~\ref{fig:cifar10-time-to_acc} and \ref{fig:imagenet-time-to-acc} highlight the practical potential of RS2 to reduce the computational cost of training high-accuracy models: For CIFAR10, RS2 reaches 91.7\% accuracy 4.3$\times$ faster than standard training on the full dataset, while for ImageNet, RS2 reaches 66\% accuracy 9$\times$ faster than standard full dataset training.

\begin{figure}
  \vspace{0.2in}
  \centering
  \begin{subfigure}[t]{0.48\textwidth}
     \centering
     \includegraphics[width=\textwidth]{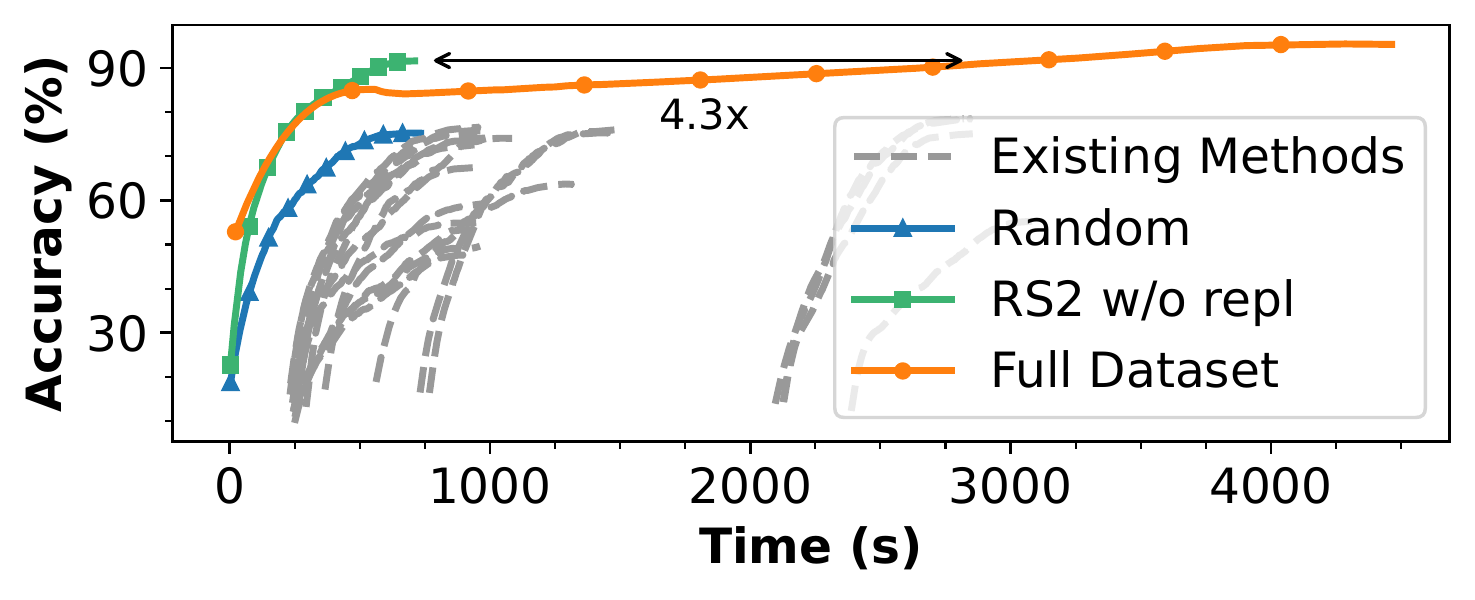}
     \caption{CIFAR10 time-to-accuracy}
     \label{fig:cifar10-time-to_acc}
  \end{subfigure}
  \hfill
  \begin{subfigure}[t]{0.48\textwidth}
     \centering
     \includegraphics[width=\textwidth]{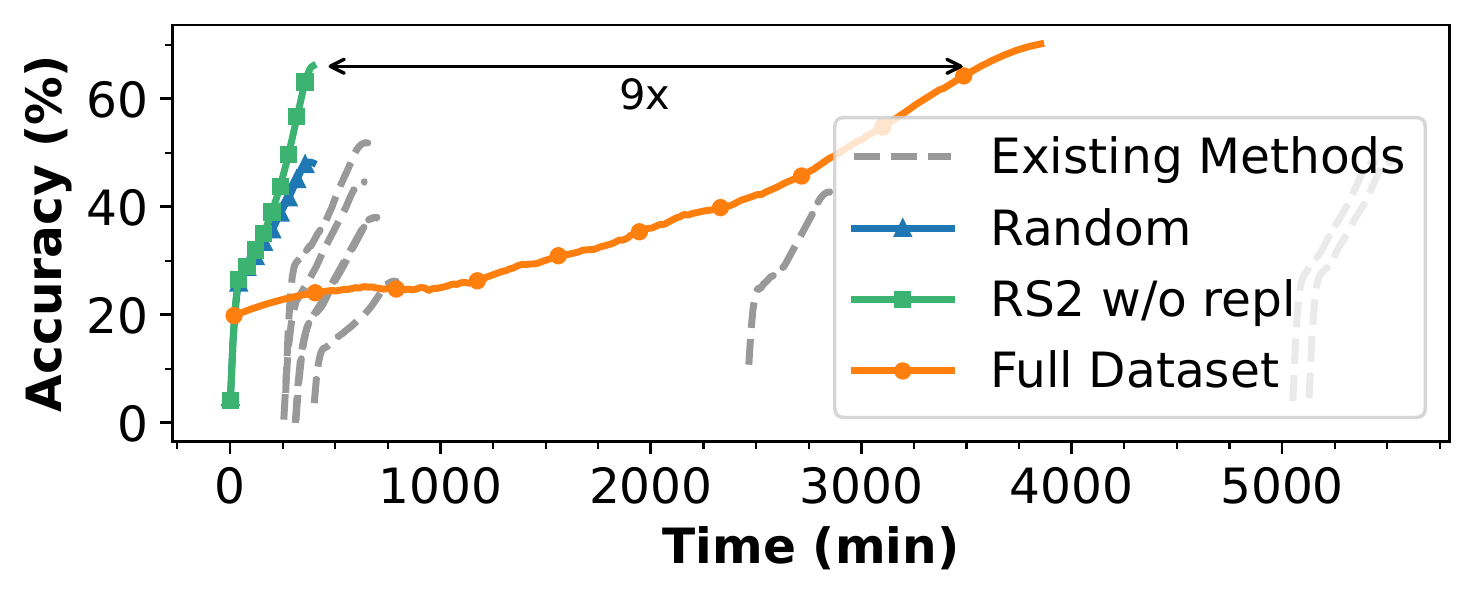}
     \caption{ImageNet time-to-accuracy}
     \label{fig:imagenet-time-to-acc}
  \end{subfigure}
  \caption{Time-to-accuracy for repeated sampling of random subsets (RS2) vs. existing data pruning methods, 
  a static random subset, and standard training on the full dataset. We use a selection ratio of $r=10$\%. RS2 is both the fastest and highest accuracy data pruning method.}
  \label{fig:time-to-acc}
  \vspace{-0.15in}
\end{figure}

\newparagraph{Takeaway}
The above results show that RS2 outperforms existing data pruning methods with respect to end-model accuracy by up to 29\%. RS2 also has the lowest subset selection overhead resulting in the best time-to-accuracy across small (CIFAR10) and large (ImageNet) datasets.

\subsection{Comparison to Dataset Distillation}
We also compare RS2 to dataset distillation methods which generate subsets of synthetic examples. We show the accuracy of RS2 with respect to these baselines on CIFAR10, CIFAR100, and Tiny ImageNet (rather than ImageNet30 or ImageNet for computational reasons) in Table~\ref{tab:dataset_distillation}. For these experiments, given the small selection ratios, we use no data augmentation when training with RS2. We also use a ConvNet model rather than a ResNet to be consistent with existing dataset distillation evaluations and because such methods can be to computationally expensive to run on complex architectures. While dataset distillation methods generally outperform the data pruning methods from the prior section (e.g., in Table~\ref{tab:dataset_distillation} a static random subset on CIFAR10 with $r=1\%$ reaches 43.4\% accuracy while data distillation methods reach up to 71.6\%), they have several drawbacks. First, subsets generated by these methods are model specific, i.e., the subset must be regenerated for every model one wishes to train. The most prominent issue, however, is the computation required to generate each subset. In fact, most methods are already too expensive to run on Tiny ImageNet, even when generating only a few examples per class. The best performing method, Trajectory Matching (TM), requires 133, 317, and 433 minutes to generate the subset with 50 images per class on CIFAR10, CIFAR100, and Tiny ImageNet, respectively. In comparison, RS2 requires just seven, 33, and 187 minutes for end-to-end training in these settings. Yet RS2 outperforms Trajectory Matching with respect to end-model accuracy for eight of the nine selection ratio/dataset combinations in Table~\ref{tab:dataset_distillation}. In the extreme compression regime ($r=0.2\%$) on Tiny ImageNet, RS2 outperforms TM by 14.7\%.

\begin{table}[t]
\caption{Accuracy achieved by dataset distillation methods, RS2, and Random data pruning when training a ConvNet model. We select the specified number of images per class (Img/Cls) corresponding to the given selection ratio on the full dataset. Best method bolded. Next best underlined.}
\vspace{4pt}
\label{tab:dataset_distillation}
\resizebox{\textwidth}{!}{
\begin{tabular}{cccccccccccccc}
\toprule
& \multirow{2}{*}{Img/Cls} & \multirow{2}{*}{Ratio \%} & \multicolumn{1}{c}{\multirow{2}{*}{Random}} & \multicolumn{8}{c}{Dataset Distillation Methods} & \multicolumn{1}{c}{\multirow{2}{*}{RS2 w/ repl}} & \multicolumn{1}{c}{\multirow{2}{*}{Full Dataset}} \\
& & & & DD & LD & DC  & DSA & DM  & CAFE & CAFE+DSA & TM & & \multicolumn{1}{c}{} \\ 
\midrule
\multirow{3}{*}{CIFAR10} 
& 1 & 0.02 & 14.4$\pm$2.0 & - & 25.7$\pm$0.7 & 28.3$\pm$0.5 & 28.8$\pm$0.7 & 26.0$\pm$0.8 & 30.3$\pm$1.1 & 31.6$\pm$0.8 & \underline{46.3$\pm$0.8} & \textbf{54.7$\pm$0.5} & \multirow{3}{*}{84.8$\pm$0.1} \\
& 10 & 0.2 & 36.8$\pm$1.2 & 36.8$\pm$1.2 & 38.3$\pm$0.4 & 44.9$\pm$0.5 & 52.1$\pm$0.5 & 48.9$\pm$0.6 & 46.3$\pm$0.6 & 50.9$\pm$0.5 & \underline{65.3$\pm$0.7} & \textbf{72.7$\pm$0.1} & \multicolumn{1}{c}{} \\
& 50 & 1 & 43.4$\pm$1.0 & - & 42.5$\pm$0.4 & 53.9$\pm$0.5 & 60.6$\pm$0.5 & 63.0$\pm$0.4 & 55.5$\pm$0.6 & 62.3$\pm$0.4 & \underline{71.6$\pm$0.2} & \textbf{76.5$\pm$0.3} & \multicolumn{1}{c}{} \\ 
\midrule
\multirow{3}{*}{CIFAR100} 
& 1 & 0.2 & 4.2$\pm$0.3 & - & 11.5$\pm$0.4 & 12.8$\pm$0.3 & 13.9$\pm$0.3 & 11.4$\pm$0.3 & 12.9$\pm$0.3 & 14.0$\pm$0.3 & \underline{24.3$\pm$0.3} & \textbf{37.4$\pm$0.4} & \multirow{3}{*}{56.2$\pm$0.3} \\
& 10 & 2 & 14.6$\pm$0.5 & - & - & 25.2$\pm$0.3 & 32.3$\pm$0.3 & 29.7$\pm$0.3 & 27.8$\pm$0.3 & 31.5$\pm$0.2 & \underline{40.1$\pm$0.4} & \textbf{43.9$\pm$0.4} & \\
& 50 & 10 & 30.0$\pm$0.4 & - & - & - & 42.8$\pm$0.4 & 43.6$\pm$0.4 & 37.9$\pm$0.3 & 42.9$\pm$0.2 & \textbf{47.7$\pm$0.2} & \underline{44.6$\pm$0.3} & \\ 
\midrule
\multirow{3}{*}{Tiny ImageNet} 
& 1 & 0.2 & 1.4$\pm$0.1 & - & - & - & - & 3.9$\pm$0.2  & - & - & \underline{8.8$\pm$0.3} & \textbf{23.5$\pm$0.2} & \multirow{3}{*}{37.6$\pm$0.4} \\
& 10 & 2 & 5.0$\pm$0.2 & - & - & - & - & 12.9$\pm$0.4 & - & - & \underline{23.2$\pm$0.2} & \textbf{27.4$\pm$0.1} & \\
& 50 & 10 & 15.0$\pm$0.4 & - & - & - & - & 24.1$\pm$0.3 & - & - & \underline{28.0$\pm$0.3} & \textbf{28.6$\pm$0.4} & \\ 
\bottomrule
\end{tabular}
}
\vspace{-0.10in}
\end{table}

\subsection{Beyond Standard Supervised Learning Benchmarks}
\label{subsec:beyond_sl}
We now consider two extensions of RS2 beyond standard supervised benchmarks: We examine 1) the robustness of data pruning methods and RS2 against noisy labels and 2) explore the benefits that RS2 can have on improving time-to-accuracy when training generative pretrained transformers.

\newparagraph{Robustness of RS2}
We evaluate the robustness of data pruning methods when the operate on a training dataset with noisy labels. To do so, we randomly flip some percentage $p$ of the labels in CIFAR10 and then run data pruning methods with these labels. We evaluate on the regular test set. Accuracy for RS2 and baseline methods when using a selection ratio of $r=10$\% and varying noise percentages $p$ is shown in Appendix Table~\ref{tab:robustness}. For each method we report end-model accuracy/raw accuracy drop compared to $p=0$/relative accuracy drop compared to $p=0$ (as a percentage of the $p=0$ accuracy). Just as for the results above on the noiseless datasets, Table~\ref{tab:robustness} shows that RS2 achieves higher end-model accuracy in the presence of noisy labels compared to existing data pruning methods. For example, with 30\% of the training examples mislabeled, RS2 without replacement achieves 74.4\% accuracy while the next closest baseline---our modified per-round version of supervised prototypes with easy examples (SP-Easy-RS)---achieves just 63.4\%. Moreover, RS2 is generally the most robust method in that it suffers the lowest relative drop in performance when presented with noisy labels. We discuss these results in more detail in Appendix~\ref{appendix:results}.

\newparagraph{RS2 for Language Model Pretraining}
One benefit of RS2 is that it can be easily generalized to settings beyond standard supervised learning. In this section, we use RS2 to reduce the cost of pretraining a large GPT2 language model. We extend RS2 to this setting as follows: we repeatedly sample random subsets of text from the dataset and use this data for next token prediction (the standard GPT2 pretraining task). We train RS2 for $r \cdot 600$k iterations for $r=[0.1, 0.3]$ and compare to training with the full dataset for 600k iterations (recall the connection between RS2 and early stopping). We also compare RS2 to random data pruning, i.e., training for $r \cdot 600$k iterations on a static fraction $r$ of the full dataset selected once before learning begins. We are not aware of existing data pruning methods being evaluated in this setting and present these results as an initial baseline.

Results of pretraining GPT2 on OpenWebText according to different strategies are shown Table~\ref{tab:gpt}. We report accuracy (higher is better) and perplexity (lower is better) on the LAMBADA~\cite{paperno2016lambada} benchmark as well as perplexity on WikiText103~\cite{merity2016pointer}. Observe that RS2 leads to better model quality but not cost compared to a static random sample. 
Moreover, we see that RS2 leads to near matching accuracy and perplexity compared to training using the full dataset for $r=30$\%. This result again highlights the practical potential of RS2 to enable faster and cheaper training, hyperparameter tuning, or neural arcitecture search for large language model pretraining, currently one of the most expensive and time consuming training paradigms in machine learning.

\begin{table}[t]
\tiny
\caption{Zero-shot results of GPT2 pretrained using RS2, a static random subset, and the full dataset. We report accuracy (ACC; higher is better) and perplexity (PPL; lower is better).}
\vspace{4pt}
\label{tab:gpt}
\centering
\begin{tabular}{cccccc}
\toprule
\multirow{2}{*}{Method} & \multirow{2}{*}{Selection Ratio ($r$)} & 2023 AWS & \multicolumn{3}{c}{Task} \\
& & Training Cost & LAMBADA (ACC $\uparrow$) & LAMBADA (PPL $\downarrow$) & WikiText103 (PPL $\downarrow$) \\ 
\midrule
Random & 10\% & \$520 & 43.37 & 45.02 & 53.36 \\
Random & 30\% & \$1,560 & 44.63 & 41.99 & 46.44 \\
RS2 w/ repl & 10\% & \$520 & 44.42 & 41.67 & 45.72 \\
RS2 w/ repl & 30\% & \$1,560 & 45.29 & 40.51 & 42.58 \\
\midrule
Full Dataset & - & \$5,200 & 46.61 & 40.30 & 40.55\\
\bottomrule
\end{tabular}
\vspace{-0.1in}
\end{table}
\section{Discussion and Conclusion}
\label{sec:discussion}
We end by discussing the limitations of RS2 before concluding and presenting future directions.

\newparagraph{RS2 Limitations}
In this work, we focused on minimizing time-to-accuracy when training over a large, labeled dataset. When the full dataset is not labeled, or when the primary metric of interest is something other than reducing time-to-accuracy, RS2 may not outperform existing methods. In particular, RS2 is likely to be a weaker baseline when the goal is to minimize the cost of labeling  examples for training by selecting a subset from a large, unlabeled dataset. We refer the reader to active learning based methods~\cite{park2022active, ren2021survey} for this regime. The assumption of labels for the full dataset is not unique to our work, however, as most recent methods for reducing time-to-accuracy utilize the labels of the full dataset $S$ to create the subset $S'$~\cite{toneva2018empirical, mirzasoleiman2020coresets, sachdeva2021svp, paul2021deep, pmlr-v139-killamsetty21a} for training.

\newparagraph{Conclusion and Future Work}
Through extensive experiments, we have shown that training on random subsets repeatedly sampled (RS2) from a large dataset results in reduced runtime and higher end-model accuracy when compared to existing data pruning and distillation methods. While the impressive performance of RS2 may provide a practical solution for reducing time-to-accuracy, e.g., for hyperparameter tuning or neural architecture search, we also hope that our findings serve as a baseline for future research to minimize time-to-accuracy through data subset selection. Specifically, we are excited for future work focused on the following question: How can we further close the gap between RS2 and training on the full dataset? Interesting sub directions to answering this question include: 1) further study of importance sampling-based methods for reducing time-to-accuracy and 2) improving the subset training procedure (independent of the method) to benefit the end-model accuracy. The key issue with the latter is that training on a subset results in fewer total SGD iterations when compared to training on the full dataset for the same number of rounds. Can we overcome this limitation of data pruning when training with SGD without eliminating the runtime benefits? We encourage new research into these questions to enable further reductions in time-to-accuracy.

\bibliographystyle{plainurl}
\bibliography{sections/ref}
\medskip

\newpage
\appendix
\section*{Appendix}
\label{appendix}

\section{A Motivating Experiment}
We have shown in the main body of the paper that Repeated Sampling of Random Subsets (RS2) allows for faster training and more accurate models when compared to existing data pruning and dataset distillation techniques. In this section, we discuss a simple experiment that helped motivate our work.

Existing data pruning methods are primarily based on the intuition that a small subset $S'$ of `difficult'~\cite{toneva2018empirical, paul2021deep} (or sometimes `easy'~\cite{sorscher2022beyond}) examples contained in the full dataset $S$ are close to (far from) the decision boundary and thus likely to be the most informative for learning. During our initial investigation into data pruning methods, we empirically studied this intuition. Calculating the distance between a training example and the decision boundary, however, can be challenging because the decision boundary is not known until training completes, and because the location of the decision boundary in high dimensional space can be computationally intensive to compute. Thus, we consider the following proxy measurement: To decide whether a training example $x$ is close to the decision boundary, we find the nearest neighbor (e.g., $L_2$ distance) from the full dataset and check whether it has the same label as $x$. If not, then the decision boundary in the input feature space must be between the two points (i.e., they are `close' to the boundary). 

We evaluated the above proxy measurement for all examples in the CIFAR10 dataset to decide whether each one was close to the decision boundary. Surprisingly, we found that the nearest neighbor for 65\% of the training examples had a different label than the example itself. In other words, in the raw feature space, this experiment provides some evidence that a majority of examples may be needed for learning the final decision boundary. This observation motivates RS2 as a strong data pruning baseline because it satisfies two desired properties: 1) it maximizes overall data coverage by periodically resampling the subset and 2) it provides representative examples from the dataset without overfitting. We remark that a majority of points are unlikely to be on the decision boundary if we first encode the input examples $x$ into a more semantically meaningful feature space. Learning such an encoding, however, requires first learning a decision boundary over the raw features and must be done during the model training itself. We leave a detailed study of this experiment, and the implications of this observation on selecting hard/easy examples for importance-sampling based data pruning to future work.

\section{Additional Experimental Setup}
\label{appendix:setup}
We expand on the experimental setup described in Section~\ref{subsec:setup} of the main body of the paper.

\subsection{Data Pruning Baselines}
We consider the following 22 data pruning baselines. We refer the reader to existing studies for more detailed descriptions of these methods~\cite{guo2022deepcore}.
\begin{enumerate}[itemsep=-0.1ex]
    \item Random: standard baseline; sample a static random subset of the dataset once before training
    \item Contextual Diversity (CD)~\cite{agarwal2020contextual}
    \item Herding~\cite{welling2009herding, chen2010super}
    \item K-Center Greedy~\cite{sener2018active}
    \item Least Confidence~\cite{sachdeva2021svp}
    \item Entropy~\cite{sachdeva2021svp}
    \item Margin~\cite{sachdeva2021svp}
    \item Forgetting~\cite{toneva2018empirical}
    \item GraNd~\cite{paul2021deep}
    \item Contrastive Active Learning (CAL)~\cite{liu2021just}
    \item Craig~\cite{mirzasoleiman2020coresets}
    \item GradMatch~\cite{pmlr-v139-killamsetty21a}
    \item Glister~\cite{killamsetty2021glister}
    \item Facility Location (FL)~\cite{iyer2021submodular}
    \item GraphCut~\cite{iyer2021submodular}
    \item Active Learning with confidence-based example informativeness (AL (Conf))~\cite{park2022active}
    \item Active Learning with loss-based example informativeness (AL (LL))~\cite{park2022active}
    \item Active Learning with margin-based example informativeness (AL (Margin))~\cite{park2022active}
    \item Self-supervised prototypes with easy examples (SSP-Easy)~\cite{sorscher2022beyond}
    \item Self-supervised prototypes with hard examples (SSP-Hard)~\cite{sorscher2022beyond}
    \item Supervised prototypes with easy examples (SP-Easy)~\cite{sorscher2022beyond}
    \item Supervised prototypes with hard examples (SP-Hard)~\cite{sorscher2022beyond}
\end{enumerate}

\subsection{Dataset Distillation Baselines}
We compare against the following eight dataset distillation methods.
\begin{enumerate}[itemsep=-0.1ex]
    \item Dataset Distillation (DD)~\cite{wang2018dataset}
    \item Flexible Dataset Distillation (LD)~\cite{bohdal2020flexible}
    \item Dataset Condensation (DC)~\cite{zhao2021DC}
    \item Differentiable Siamese Augmentation (DSA)~\cite{zhao2021DSA}
    \item Distribution Matching (DM)~\cite{zhao2023dataset}
    \item Aligning Features (CAFE)~\cite{wang2022cafe}
    \item Aligning Features + Differentiable Siamese Augmentation (CAFE+DSA)~\cite{wang2022cafe}
    \item Trajectory Matching (TM)~\cite{cazenavette2022dataset}
\end{enumerate}

\subsection{Additional Training Details}
For all experiments (except GPT2 due to cost considerations) we conduct three runs using different random seeds and report the average accuracy and runtime. We include additional details on the hyperparameters and hardware used below.

\newparagraph{Hyperparameters}
We use the following hyperparameters for our experiments: For CIFAR10 and CIFAR100 experiments, we use SGD as the optimizer with batch size 128, initial learning rate 0.1, a cosine decay learning rate schedule, momentum 0.9, weight decay 0.0005, and 200 training epochs. For data augmentation, we apply random cropping and horizontal flipping with four-pixel padding on the 32$\times$32 training images. For ImageNet30 and ImageNet, we use the same hyerparameters as above except for a larger batch size on ImageNet (256). We also use different data augmentation: training images are randomly resized and cropped to 224$\times$224 with random horizontal flipping. Further details can be found in the source code.

\newparagraph{Hardware Setup}
As described in the main body of the paper, we run image classification experiments on a university cluster with job isolation and NVIDIA RTX 3090 GPUs. We run GPT2 experiments using AWS P3 GPU instances with eight NVIDIA V100 GPUs (as GPT2 experiments require more compute power). Utilizing the former allows us to reduce the cost of our experiments (e.g., compared to training entirely using AWS), but introduces the potential for increased variance compared to training with completely dedicated hardware---Even though all experiments run with exclusive access to one GPU and a set of CPU cores, cluster load can influence runtime measurements. We observe small variance across multiple runs of the same experiment on small datasets (e.g., on CIFAR10 the three run standard deviation is generally less than one percent of the total runtime), but larger variance on ImageNet, likely do to an increased load on the shared file system and longer experiment runtimes. As such, we calculate the runtime of each method on ImageNet as follows: We calculate the minimum time per mini-batch using all runs across \textit{all} methods, and then use this value to compute individual method runtimes by multiplying by the total number of batches during training and adding any necessary overheads for subset selection. More specifically, we have: the total runtime of any method $T_{\text{total}} = T_{\text{total\_subset\_selection}} + T_{\text{total\_training\_time}}$ with $T_{\text{total\_training\_time}} = T_{\text{global\_minimum\_batch\_runtime}} \times \text{total\_number\_of\_batches}$. Note that this means runtimes differ only due to subset selection overhead as expected (once a subset has been selected, all methods train on the same number of examples per round using the same hardware, and thus should have the same per round training time). Furthermore, we calculate $T_{\text{total\_subset\_selection}}$ as the minimum subset selection time observed across three runs of each method. The above runtime calculation allows us to minimize the affect of cluster noise on our experiments and ensure a fair comparison for the ImageNet time-to-accuracy reported in the paper.

\section{Additional Experimental Results}
\label{appendix:results}
Here we include additional evaluation result comparing Repeated Sampling of Random Subsets (RS2) to existing data pruning and dataset distillation methods. These results extend those presented in Section~\ref{sec:exp} of the main paper. We briefly discuss each result (table) in turn and how it connects to the arguments made in Section~\ref{sec:exp}.

First, in Table~\ref{tab:robustness} we show the robustness of RS2 and existing data pruning methods against noisy labels. We include existing methods which sample static subsets, as well as our modified version of the recent prototype-based data pruning method which utilizes repeated subset selection between each round (SP-Easy-RS) (see Section~\ref{subsec:exp_e2e}). As discussed in Section~\ref{subsec:beyond_sl} in the main paper, we evaluate the robustness of data pruning methods as follows: We randomly flip some percentage $p$ of the labels in CIFAR10 and then run data pruning methods with these labels. We use a subset selection ratio $r=10$\% for all methods and evaluate on the regular test set. For each method we report end-model accuracy/raw accuracy drop compared to $p = 0$/relative accuracy drop compared to $p = 0$ (as a percentage of the $p = 0$ accuracy). Table~\ref{tab:robustness} shows that RS2 achieves higher end-model accuracy than existing data pruning methods in the presence of noisy labels. RS2 is also the most robust method (lowest relative accuracy drop) when the noise ratio is 10\% and 30\%. Interestingly, the GraNd baseline actually gets better as the noise ratio increases. While surprising, the overall end-model quality of GraNd is still limited, however, as the GraNd accuracy begins to decrease again as the noise increases beyond 50\% and all noise ratios result in lower accuracy than training on clean data. We leave a detailed study of these observations and robust data pruning methods for future work.

\begin{table}[t]
\scriptsize
\caption{Accuracy achieved by different data pruning methods when training ResNet-18 on the CIFAR10 dataset with $p$ percent of the train set labels randomly flipped (noise ratio). Data pruning methods use a selection ratio of 10\%. We test on the normal test set. We report raw end-model accuracy/accuracy drop compared to $p=0$/relative accuracy drop compared to $p=0$ (as a percentage of the $p=0$ accuracy). Best method (highest accuracy) bolded; Next best underlined. Most robust method (lowest relative accuracy drop) starred.}
\vspace{4pt}
\label{tab:robustness}
\centering
\begin{tabular}{cccc}
\toprule
Noise Ratio & 10\% & 30\% & 50\% \\ 
\midrule
Random & 50.7$\pm$1.0/27.7/35.4 & 40.9$\pm$2.3/37.5/47.8 &  37.2$\pm$1.1/41.2/52.6 \\
CD & 31.2$\pm$3.9/27.6/47.0 & 30.3$\pm$0.6/28.5/48.4 &  29.7$\pm$2.9/29.1/49.5 \\
Herding & 15.9$\pm$2.2/47.6/74.9 & 17.1$\pm$1.1/46.4/73.1 &  16.1$\pm$1.8/47.4/74.7 \\
K-Center Greedy & 41.6$\pm$1.7/33.6/44.6 &  33.1$\pm$0.7/42.1/56.0 & 31.1$\pm$0.4/44.1/58.6 \\
Least Confidence & 27.4$\pm$0.7/30.2/52.4 &  25.2$\pm$2.1/32.4/56.2 & 24.5$\pm$2.6/33.1/57.5 \\
Entropy & 26.9$\pm$4.2/30.7/53.2 & 24.9$\pm$3.7/32.7/56.8 &  23.3$\pm$2.0/34.3/59.5 \\
Margin & 29.2$\pm$2.2/44.0/60.1 & 27.4$\pm$2.2/45.8/62.6 &  28.5$\pm$3.3/44.7/61.0 \\
Forgetting & 47.5$\pm$1.6/31.5/39.9 & 48.3$\pm$1.7/30.7/38.9 &  49.0$\pm$1.6/30.0/37.9 \\
GraNd & 34.5$\pm$3.2/40.9/54.3 & \underline{49.4$\pm$5.6/26.0/34.5} & \underline{60.0$\pm$1.3/15.4/20.4$^*$} \\
CAL & 45.4$\pm$2.5/26.4/36.8 & 39.9$\pm$1.2/31.9/44.4 &  35.5$\pm$0.4/36.3/50.5 \\
Craig & 50.3$\pm$0.7/9.9/16.5 & 39.5$\pm$0.8/20.7/34.4 & 37.3$\pm$1.5/22.9/38.0 \\
Glister & 49.8$\pm$2.5/25.9/34.2 & 40.1$\pm$1.5/35.6/47.1 &  37.9$\pm$1.5/37.8/50.0 \\
GraphCut & 50.6$\pm$1.4/23.4/31.7 & 42.0$\pm$0.7/32.0/43.2 &  37.7$\pm$1.1/36.3/49.0 \\
FL & 50.4$\pm$2.1/24.3/32.5 & 41.6$\pm$0.8/33.1/44.3 &  36.8$\pm$0.7/37.9/50.7 \\
AL (Conf) & 53.5$\pm$3.1/30.1/36.0 & 47.1$\pm$0.9/36.5/43.7 &  37.2$\pm$1.4/46.4/55.5 \\
AL (LL) & 57.1$\pm$0.4/27.9/32.8 & 45.1$\pm$1.8/39.9/46.9 &  38.2$\pm$0.6/46.8/55.1 \\
AL (Margin) & \underline{57.6$\pm$0.5/26.9/31.8} & 46.1$\pm$1.2/38.4/45.4 &  36.9$\pm$1.0/47.6/56.3 \\
SSP-Easy & 51.1$\pm$1.5/20.9/29.0 & 40.7$\pm$1.9/31.3/43.4 &  36.7$\pm$1.0/35.3/49.1 \\
SSP-Hard & 50.4$\pm$0.3/23.9/32.2 & 40.9$\pm$1.5/33.4/44.9 &  36.3$\pm$1.9/38.0/51.2 \\
SP-Easy & 48.4$\pm$2.4/23.9/33.0 & 40.0$\pm$0.4/32.3/44.6 &  37.7$\pm$1.0/34.6/47.8 \\
SP-Hard & 47.5$\pm$2.4/26.6/35.9 & 39.7$\pm$1.3/34.4/46.4 &  34.3$\pm$2.0/39.8/53.7 \\ 
\midrule
SP-Easy-RS & \underline{74.2$\pm$0.6/14.2/16.0} & \underline{63.4$\pm$1.0/25.0/28.3} & 57.8$\pm$0.7/30.6/34.7 \\
\midrule
RS2 w/ repl & 77.5$\pm$1.0/12.2/13.6$^*$ & 69.9$\pm$0.4/19.8/22.1 & 64.6$\pm$1.5/25.1/28.0 \\
RS2 w/ repl (stratified) & 76.1$\pm$0.5/13.7/15.3 & 68.7$\pm$0.6/21.1/23.5 & 65.0$\pm$1.4/24.8/27.7 \\
RS2 w/o repl & \textbf{78.7$\pm$0.8/13.0/14.1} & \textbf{74.4$\pm$0.6/17.3/18.9$^*$} & \textbf{69.0$\pm$0.9/22.7/24.8} \\ 
\bottomrule
\end{tabular}
\end{table}

In Tables~\ref{tab:cifar10_acc}-\ref{tab:imagenet_acc} we show the end-model accuracy of RS2 and existing data pruning methods for varying selection ratios on CIFAR10 and ImageNet respectively. The numbers in these tables were used to create Figure~\ref{fig:acc} in the main body of the paper. Recall from the discussion of Figure~\ref{fig:acc} in Section~\ref{subsec:exp_e2e} that we use the combined baseline methods from recent studies~\cite{park2022active, guo2022deepcore} together with newer prototype-based data pruning methods~\cite{sorscher2022beyond}. Recall also that for these tables, we use the setting proposed by these works: for all baselines, we sample a static subset once before training starts. We use all baseline methods for CIFAR10, but some methods do not scale to the larger ImageNet dataset. We show in Table~\ref{tab:cifar10_subset_selection} that active learning already takes more than eight hours for subset selection in some settings on CIFAR10 and we are not aware of a scalable implementation of prototype-based methods that would allow for training on ImageNet. As in Figure~\ref{fig:acc}, Tables~\ref{tab:cifar10_acc}-\ref{tab:imagenet_acc} show that the repeated sampling of RS2 leads to accuracy improvements compared to existing data pruning methods which sample a static subset (see discussion in Section~\ref{subsec:exp_e2e}). Interestingly, while RS2 generally outperforms existing methods in the high compression regime ($r \leq 10$\%), for extreme compression ratios, like $r=0.1$\% on ImageNet, we find RS2 to be inferior to existing methods. We hypothesize that this occurs because in these extreme regimes, only a few examples are shown to the model for each class and these examples likely have large variance when using repeated random sampling coupled with data augmentation. In this setting, it may be best to select a static subset of only the easiest examples as highlighted in recent work~\cite{sorscher2022beyond}, however the significance of this regime is debatable given the low end-model accuracy of all methods. Improving the performance in these regimes is of interest for future work.

\begin{table}[t]
\scriptsize
\caption{Accuracy achieved by different data pruning methods when training ResNet-18 on CIFAR10 for different subset selection sizes. Best method bolded; Next best underlined.}
\vspace{4pt}
\label{tab:cifar10_acc}
\centering
\resizebox{\textwidth}{!}{
\begin{tabular}{ccccccccc}
\toprule
Selection Ratio ($r$) & 1\% & 5\% & 10\% & 20\% & 30\% & 40\% & 50\% & 100\% \\ 
\midrule
Random & 36.7$\pm$1.7 & 64.5$\pm$1.1 & 78.4$\pm$0.9 & 88.1$\pm$0.5 & 91.0$\pm$0.3 & 91.9$\pm$0.2 & 93.2$\pm$0.3 & 95.5$\pm$0.2 \\
CD & 23.6$\pm$1.9 & 38.1$\pm$2.2 & 58.8$\pm$2.0 & 81.3$\pm$2.5 & 90.8$\pm$0.5 & 93.3$\pm$0.4 & 94.3$\pm$0.2 & 95.5$\pm$0.2 \\
Herding & 34.8$\pm$3.3 & 51.0$\pm$3.1 & 63.5$\pm$3.4 & 74.1$\pm$2.5 & 80.1$\pm$2.2 & 85.2$\pm$0.9 & 88.0$\pm$1.1 & 95.5$\pm$0.2 \\
K-Center Greedy & 31.1$\pm$1.2 & 51.4$\pm$2.1 & 75.2$\pm$1.7 & 87.3$\pm$1.0 & 91.2$\pm$0.6 & 92.2$\pm$0.5 & 93.8$\pm$0.5 & 95.5$\pm$0.2 \\
Least Confidence & 19.8$\pm$2.2 & 36.2$\pm$1.9 & 57.6$\pm$3.1 & 81.9$\pm$2.2 & 90.3$\pm$0.4 & 93.1$\pm$0.5 & 94.5$\pm$0.1 & 95.5$\pm$0.2 \\
Entropy & 21.1$\pm$1.3 & 35.3$\pm$3.0 & 57.6$\pm$2.8 & 81.9$\pm$0.4 & 89.8$\pm$1.6 & 93.2$\pm$0.2 & 94.4$\pm$0.3 & 95.5$\pm$0.2\\
Margin & 28.2$\pm$1.0 & 43.4$\pm$3.3 & 73.2$\pm$1.3 & 85.5$\pm$0.9 & 91.3$\pm$0.5 & 93.6$\pm$0.3 & 94.5$\pm$0.2 & 95.5$\pm$0.2 \\
Forgetting & 35.2$\pm$1.6 & 52.1$\pm$2.2 & 79.0$\pm$1.0 & 89.8$\pm$0.9 & 92.3$\pm$0.4 & 93.6$\pm$0.4 & 93.8$\pm$0.3 & 95.5$\pm$0.2 \\
GraNd & 26.7$\pm$1.3 & 39.8$\pm$2.3 & 75.4$\pm$1.2 & 88.6$\pm$0.6 & 92.4$\pm$0.4 & 93.3$\pm$0.5 & 94.2$\pm$0.4 & 95.5$\pm$0.2 \\
CAL & 37.8$\pm$2.0 & 60.0$\pm$1.4 & 71.8$\pm$1.0 & 80.9$\pm$1.1 & 86.0$\pm$1.9 & 87.5$\pm$0.8 & 89.4$\pm$0.6 & 95.5$\pm$0.2 \\
Craig & 31.7$\pm$1.1 & 45.2$\pm$2.9 & 60.2$\pm$4.4 & 79.6$\pm$3.1 & 88.4$\pm$0.5 & 90.8$\pm$1.4 & 93.3$\pm$0.6 & 95.5$\pm$0.2 \\
GradMatch & 30.8$\pm$1.0 & 47.2$\pm$0.7 & 61.5$\pm$2.4 & 79.9$\pm$2.6 & 87.4$\pm$2.0 & 90.4$\pm$1.5 & 92.9$\pm$0.6 & 95.5$\pm$0.2 \\
Glister & 32.9$\pm$2.4 & 50.7$\pm$1.5 & 75.7$\pm$1.0 & 86.3$\pm$0.9 & 90.1$\pm$0.7 & 91.5$\pm$0.5 & 93.3$\pm$0.6 & 95.5$\pm$0.2 \\
FL & 38.9$\pm$1.4 & 60.8$\pm$2.5 & 74.7$\pm$1.3 & 85.6$\pm$1.9 & 91.4$\pm$0.4 & 93.2$\pm$0.3 & 93.9$\pm$0.2 & 95.5$\pm$0.2 \\
GraphCut & \underline{42.8$\pm$1.3} & \underline{65.7$\pm$1.2} & 74.0$\pm$1.5 & 86.3$\pm$0.9 & 90.2$\pm$0.5 & 91.5$\pm$0.4 & 93.8$\pm$0.5 & 95.5$\pm$0.2 \\
AL (Conf) & 35.2$\pm$1.5 & 60.6$\pm$3.1 & 83.6$\pm$0.7 & 90.5$\pm$0.4 & 93.8$\pm$0.4 & 94.8$\pm$0.3 & 95.1$\pm$0.3 & 95.5$\pm$0.2 \\
AL (LL) & 37.5$\pm$4.3 & 63.1$\pm$2.0 & \underline{85.0$\pm$0.9} & \underline{91.2$\pm$0.7} & 93.8$\pm$0.6 & 94.4$\pm$0.5 & 95.0$\pm$0.4 & 95.5$\pm$0.2 \\
AL (Margin) & 36.7$\pm$0.8 & 62.2$\pm$1.1 & 84.5$\pm$0.7 & 91.0$\pm$0.5 & \underline{93.9$\pm$0.4} & \underline{94.5$\pm$0.3} & \textbf{95.3$\pm$0.2} & 95.5$\pm$0.2 \\
SSP-Easy & 35.6$\pm$1.7 & 62.1$\pm$1.2 & 72.0$\pm$0.8 & 85.9$\pm$0.4 & 90.0$\pm$0.2 & 91.5$\pm$0.4 & 92.7$\pm$0.0 & 95.5$\pm$0.2 \\
SSP-Hard & 34.2$\pm$1.1 & 58.0$\pm$2.4 & 74.3$\pm$1.7 & 86.1$\pm$1.3 & 90.3$\pm$0.4 & 91.9$\pm$0.3 & 93.3$\pm$0.2 & 95.5$\pm$0.2 \\
SP-Easy & 37.1$\pm$1.4 & 59.8$\pm$0.5 & 72.3$\pm$2.9 & 85.1$\pm$1.0 & 89.6$\pm$0.2 & 91.6$\pm$0.2 & 92.7$\pm$0.2 & 95.5$\pm$0.2 \\
SP-Hard & 35.0$\pm$0.7 & 60.9$\pm$1.8 & 74.1$\pm$1.1 & 86.3$\pm$0.3 & 89.8$\pm$0.6 & 91.5$\pm$0.3 & 93.0$\pm$0.3 & 95.5$\pm$0.2 \\ 
\midrule
RS2 w/ repl & 51.1$\pm$3.5 & 86.7$\pm$0.8 & 89.7$\pm$0.2 & 93.5$\pm$0.3 & 94.2$\pm$0.1 & 94.6$\pm$0.2 & 95.1$\pm$0.2 & 95.5$\pm$0.2 \\
RS2 w/ repl (stratified) & 51.1$\pm$4.5 & 86.6$\pm$0.5 & 89.8$\pm$0.4 & 93.4$\pm$0.1 & \textbf{94.5$\pm$0.1} & \textbf{94.8$\pm$0.1} & 95.1$\pm$0.3 & 95.5$\pm$0.2   \\ 
RS2 w/o repl & \textbf{51.8$\pm$2.0} & \textbf{87.1$\pm$0.8} & \textbf{91.7$\pm$0.5} & \textbf{94.0$\pm$0.5} & 94.3$\pm$0.2 & 94.7$\pm$0.1 & \underline{95.2$\pm$0.1} & 95.5$\pm$0.2 \\
\bottomrule
\end{tabular}
}
\end{table}

\begin{table}[t]
\scriptsize
\caption{Accuracy achieved by different data pruning methods when training ResNet-18 on ImageNet for different subset selection sizes. Repeatedly Sampling Random Subsets (RS2) considerably outperforms existing methods for realistic selection ratios. Best method bolded; Next best underlined.}
\vspace{4pt}
\label{tab:imagenet_acc}
\centering
\resizebox{\textwidth}{!}{
\begin{tabular}{cccccccc}
\toprule
Select Ratio ($r$) & 0.1\% & 0.5\% & 1\% & 5\% & 10\% & 30\% & 100\% \\ 
\midrule
Random & 0.76$\pm$0.01 & 3.78$\pm$0.14 & 8.85$\pm$0.46 & 40.09$\pm$0.21 & 52.1$\pm$0.22 & \underline{64.11$\pm$0.05} & 69.52$\pm$0.45 \\
CD & 0.76$\pm$0.01 & 1.18$\pm$0.06 & 2.16$\pm$0.18 & 25.82$\pm$2.02 & 43.84$\pm$0.12 & 62.13$\pm$0.45 & 69.52$\pm$0.45 \\
Herding & 0.34$\pm$0.01 & 1.7$\pm$0.13  & 4.17$\pm$0.26 & 17.41$\pm$0.34 & 28.06$\pm$0.05 & 48.58$\pm$0.49 & 69.52$\pm$0.45 \\
K-Center Greedy & 0.76$\pm$0.01 & 1.57$\pm$0.09 & 2.96$\pm$0.24 & 27.36$\pm$0.08 & 44.84$\pm$1.03 & 62.12$\pm$0.46 & 69.52$\pm$0.45 \\
Least Confidence & 0.29$\pm$0.04 & 1.03$\pm$0.25 & 2.05$\pm$0.38 & 27.05$\pm$3.25 & 44.47$\pm$1.42 & 61.8$\pm$0.33 & 69.52$\pm$0.45 \\
Entropy& 0.31$\pm$0.02 & 1.01$\pm$0.17 & 2.26$\pm$0.3 & 28.21$\pm$2.83 & 44.68$\pm$1.54 & 61.82$\pm$0.31 & 69.52$\pm$0.45 \\
Margin & 0.47$\pm$0.02 & 1.99$\pm$0.29 & 4.73$\pm$0.64 & 35.99$\pm$1.67 & 50.29$\pm$0.92 & 63.62$\pm$0.15 & 69.52$\pm$0.45 \\
Forgetting & 0.76$\pm$0.01 & 4.69$\pm$0.17 & 14.02$\pm$0.13 & \underline{47.64$\pm$0.03} & \underline{55.12$\pm$0.13} & 62.49$\pm$0.11 & 69.52$\pm$0.45 \\
GraNd & 1.04$\pm$0.04 & 7.02$\pm$0.05 & \underline{18.1$\pm$0.22} & 43.53$\pm$0.19 & 49.92$\pm$0.21 & 57.98$\pm$0.17 & 69.52$\pm$0.45 \\
CAL & \textbf{1.29$\pm$0.09} & 7.5$\pm$0.26 & 15.94$\pm$1.3 & 38.32$\pm$0.78 & 46.49$\pm$0.29 & 58.31$\pm$0.32 & 69.52$\pm$0.45 \\
Craig & 1.13$\pm$0.08 & 5.44$\pm$0.52 & 9.4$\pm$1.69 & 32.3$\pm$1.24 & 38.77$\pm$0.56 & 44.89$\pm$3.72 & 69.52$\pm$0.45 \\
GradMatch & 0.93$\pm$0.04 & 5.2$\pm$0.22 & 12.28$\pm$0.49 & 40.16$\pm$2.28 & 45.91$\pm$1.73 & 52.69$\pm$2.16 & 69.52$\pm$0.45 \\
Glister & 0.98$\pm$0.06 & 5.91$\pm$0.42 & 14.87$\pm$0.14 & 44.95$\pm$0.28 & 52.04$\pm$1.18 & 60.26$\pm$0.28 & 69.52$\pm$0.45 \\
FL & \underline{1.23$\pm$0.03} & 5.78$\pm$0.08 & 12.72$\pm$0.21 & 40.85$\pm$1.25 & 51.05$\pm$0.59 & 63.14$\pm$0.03 & 69.52$\pm$0.45 \\
GraphCut & 1.21$\pm$0.09 & \underline{7.66$\pm$0.43} & 16.43$\pm$0.53 & 42.23$\pm$0.6 & 50.53$\pm$0.42 & 63.22$\pm$0.26 & 69.52$\pm$0.45 \\ 
\midrule
RS2 w/ repl & 0.17$\pm$0.03 & 16.35$\pm$0.56 & 44.45$\pm$0.07 & 45.4$\pm$7.18 & 64.87$\pm$0.10 & 68.23$\pm$0.07 & 69.52$\pm$0.45 \\
RS2 w/ repl (stratified) & 0.18$\pm$0.02 & \textbf{33.66$\pm$0.13} & \textbf{46.96$\pm$0.13} & 62.32$\pm$0.08  & 64.92$\pm$0.10 & \textbf{68.24$\pm$0.08}  & 69.52$\pm$0.45 \\
RS2 w/o repl & 0.19$\pm$0.02 & 18.2$\pm$0.35 & 44.42$\pm$0.04 & \textbf{63.2$\pm$0.07} & \textbf{66.0$\pm$0.18} & 68.19$\pm$0.06 & 69.52$\pm$0.45 \\
\bottomrule
\end{tabular}
}
\end{table}

In Table~\ref{tab:acc_cfiar100_imagenet30}, we include additional end-model accuracy results for RS2 and existing data pruning methods on two datasets, CIFAR100 and ImageNet30, not included in the main paper due to space considerations. For these experiments, we include a representative set of baseline methods which sample static subsets, together with our modified version of the recent prototype-based data pruning method which utilizes repeated subset selection between each round (SP-Easy-RS) (see Section~\ref{subsec:exp_e2e}). Thus, Table~\ref{tab:acc_cfiar100_imagenet30} extends the end-model accuracy results presented previously for CIFAR10 and ImageNet in Figure~\ref{fig:acc} and Tables~\ref{tab:per_epoch},~\ref{tab:cifar10_acc}, and~\ref{tab:imagenet_acc}. Observe that RS2 also outperforms existing methods on these datasets. For example, in the high compression regime ($r=10$\%), RS2 without replacement reaches 73\% accuracy on CIFAR100, while the best baseline method, our per-round prototype-based data pruning method reaches only 66\%. Existing methods which sample static subsets only once before training begins reach just 36\% in this setting. 

\begin{table}[t]
\caption{Accuracy achieved by select data pruning methods when training ResNet-18 on CIFAR100 and ImageNet30. Best method bolded; Next best underlined.}
\vspace{4pt}
\label{tab:acc_cfiar100_imagenet30}
\resizebox{\columnwidth}{!}{
\begin{tabular}{lcllllllllll}
\toprule
Dataset & Select Ratio ($r$) & \multicolumn{1}{c}{10\%} & \multicolumn{1}{c}{20\%} & \multicolumn{1}{c}{30\%} & \multicolumn{1}{c}{40\%} & \multicolumn{1}{c}{50\%} & \multicolumn{1}{c}{60\%} & \multicolumn{1}{c}{70\%} & \multicolumn{1}{c}{80\%} & \multicolumn{1}{c}{90\%} & \multicolumn{1}{c}{100\%} \\ 
\midrule
\multicolumn{1}{l}{\multirow{17}{*}{CIFAR100}} & Random & 32.0$\pm$0.9 & 53.6$\pm$0.6 & 63.6$\pm$0.5 & 67.2$\pm$0.5 & 71.0$\pm$0.3 & 73.1$\pm$0.4 & 75.2$\pm$0.2 & 76.1$\pm$0.3 & 77.5$\pm$0.2 & 78.7$\pm$0.2 \\
\multicolumn{1}{l}{} & K-Center Greedy & 33.9$\pm$1.5 & 56.2$\pm$0.9 & 64.5$\pm$0.6 & 69.8$\pm$0.4 & 72.1$\pm$0.5 & 74.3$\pm$0.4 & 75.8$\pm$0.3 & 77.2$\pm$0.2 & 77.8$\pm$0.2 & 78.7$\pm$0.2 \\  
\multicolumn{1}{l}{} & Margin & 18.7$\pm$2.1 & 38.2$\pm$1.6 & 58.1$\pm$0.8 & 65.1$\pm$0.6 & 70.1$\pm$0.5 & 73.3$\pm$0.3 & 75.4$\pm$0.3 & 76.9$\pm$0.4 & \underline{78.5$\pm$0.2} & 78.7$\pm$0.2 \\
\multicolumn{1}{l}{} & Forgetting & 35.4$\pm$1.0 & 54.7$\pm$0.9 & 64.6$\pm$0.7 & 68.6$\pm$0.8 & 71.5$\pm$0.4 & 73.7$\pm$0.5 & 75.5$\pm$0.3 & 76.1$\pm$0.3 & 76.9$\pm$0.3 & 78.7$\pm$0.2 \\
\multicolumn{1}{l}{}& GraNd & 30.8$\pm$1.9 & 49.4$\pm$1.0 & 62.8$\pm$0.9 & 68.1$\pm$0.6 & 70.5$\pm$0.3 & 72.5$\pm$0.4 & 74.5$\pm$0.3 & 76.4$\pm$0.2 & 77.8$\pm$0.2 & 78.7$\pm$0.2 \\
\multicolumn{1}{l}{} & Glister & \underline{36.4$\pm$1.0} & 55.5$\pm$1.0 & 63.9$\pm$0.8 & 69.1$\pm$0.7 & 71.2$\pm$0.6 & 73.5$\pm$0.4 & 75.0$\pm$0.3 & 76.9$\pm$0.2 & 77.6$\pm$0.2 & 78.7$\pm$0.2 \\
\multicolumn{1}{l}{} & GraphCut & 36.3$\pm$1.1 & 56.0$\pm$0.8 & 65.5$\pm$0.6 & 69.5$\pm$0.4 & 71.1$\pm$0.4 & 73.8$\pm$0.4 & 75.4$\pm$0.2 & 76.4$\pm$0.2 & 78.0$\pm$0.2 & 78.7$\pm$0.2 \\
\multicolumn{1}{l}{} & AL (Conf) & 36.1$\pm$1.6 & 55.7$\pm$1.0  & 65.8$\pm$0.7 & \underline{70.6$\pm$0.5} & \underline{73.7$\pm$0.4} & \underline{76.1$\pm$0.5} & 77.1$\pm$0.3 & 78.0$\pm$0.2 & 78.4$\pm$0.2 & 78.7$\pm$0.2 \\
\multicolumn{1}{l}{} & AL (LL) & 33.1$\pm$1.9 & 55.3$\pm$1.3 & 64.9$\pm$0.8 & 70.3$\pm$0.7 & 73.1$\pm$0.5 & 75.9$\pm$0.5 & 77.0$\pm$0.3 & \underline{78.2$\pm$0.3} & \underline{78.5$\pm$0.2} & 78.7$\pm$0.2 \\
\multicolumn{1}{l}{} & AL (Margin) & 36.0$\pm$1.0 & \underline{57.3$\pm$0.5} & \underline{66.0$\pm$0.6} & 70.4$\pm$0.5 & 73.6$\pm$0.5& \underline{76.1$\pm$0.4} & \underline{77.2$\pm$0.3} & \underline{78.2$\pm$0.3} & \underline{78.5$\pm$0.2} & 78.7$\pm$0.2 \\
\multicolumn{1}{l}{} & SSP-Easy & 32.8$\pm$2.0 & 50.0$\pm$1.5 & 62.5$\pm$1.5 & 67.9$\pm$0.3 & 70.2$\pm$0.2 & 73.4$\pm$0.3 & 75.0$\pm$0.7 & 76.3$\pm$0.6 & 77.4$\pm$0.1 & 78.7$\pm$0.2 \\
\multicolumn{1}{l}{} & SSP-Hard & 29.7$\pm$1.5 & 53.3$\pm$0.6 & 63.2$\pm$0.5 & 67.8$\pm$0.2 & 71.3$\pm$0.2 & 72.9$\pm$0.2 & 74.8$\pm$0.1 & 75.9$\pm$0.8 & 77.1$\pm$0.2 & 78.7$\pm$0.2 \\
\multicolumn{1}{l}{} & SP-Easy & 33.6$\pm$0.9 & 53.0$\pm$2.0 & 63.0$\pm$1.0 & 67.4$\pm$1.0 & 70.5$\pm$0.3 & 73.3$\pm$0.2 & 74.9$\pm$0.2 & 76.3$\pm$0.6 & 76.9$\pm$0.3 & 78.7$\pm$0.2 \\
\multicolumn{1}{l}{} & SP-Hard & 31.2$\pm$2.7 & 53.6$\pm$0.4 & 63.0$\pm$0.6 & 68.0$\pm$0.8 & 71.1$\pm$0.3 & 73.0$\pm$0.4 & 74.6$\pm$0.8 & 75.8$\pm$0.9 & 77.4$\pm$0.4 & 78.7$\pm$0.2 \\ 
\cmidrule(lr){2-12}
\multicolumn{1}{l}{} & SP-Easy-RS & 66.1$\pm$1.8 & 72.7$\pm$0.6 & 74.6$\pm$0.5 & 75.5$\pm$0.2 & 76.3$\pm$0.3 & 76.9$\pm$0.4 & 77.6$\pm$0.1 & 78.0$\pm$0.1 & 78.3$\pm$0.3 & 78.7$\pm$0.2 \\ 
\cmidrule(lr){2-12}
\multicolumn{1}{l}{} & RS2 w/ repl & 68.8$\pm$1.5 & 74.4$\pm$0.1 & \textbf{76.1$\pm$0.3} & 76.8$\pm$0.1 & \textbf{77.6$\pm$0.2} & 77.7$\pm$0.0 & \textbf{78.3$\pm$0.3} & \textbf{78.4$\pm$0.2} & \textbf{78.7$\pm$0.1} & 78.7$\pm$0.2 \\
\multicolumn{1}{l}{} & RS2 w/ repl (stratified) & 68.6$\pm$2.1 & 74.6$\pm$0.7 & 75.9$\pm$0.2 & 76.7$\pm$0.2 & 77.5$\pm$0.1 & 77.7$\pm$0.1 & 78.1$\pm$0.3 & 78.2$\pm$0.2 & 78.3$\pm$0.3 & 78.7$\pm$0.2 \\ 
\multicolumn{1}{l}{} & RS2 w/o repl & \textbf{73.0$\pm$0.3} &  \textbf{74.9$\pm$0.7} &  \textbf{76.1$\pm$0.5} &  \textbf{77.1$\pm$0.1} &  77.5$\pm$0.4 &  \textbf{78.0$\pm$0.1} &  \textbf{78.3$\pm$0.2} &  78.3$\pm$0.2 &  78.4$\pm$0.3 & 78.7$\pm$0.2 \\
\midrule
\multicolumn{1}{l}{\multirow{17}{*}{ImageNet30}} & Random & 69.3$\pm$0.7 & 83.7$\pm$0.5 & 86.9$\pm$0.4 & 90.3$\pm$0.3 & 92.2$\pm$0.3 & 93.0$\pm$0.2 & 94.6$\pm$0.3 & 95.2$\pm$0.2 & 95.4$\pm$0.2 & 96.1$\pm$0.1 \\
\multicolumn{1}{l}{} & K-Center Greedy & 69.7$\pm$0.9 & 84.1$\pm$0.5 & 88.9$\pm$0.4 & 91.6$\pm$0.3 & 93.4$\pm$0.2 & 94.4$\pm$0.3 & 95.1$\pm$0.2 & 95.3$\pm$0.2 & 95.6$\pm$0.2 & 96.1$\pm$0.1 \\ 
\multicolumn{1}{l}{} & Margin & 56.9$\pm$1.1 & 77.3$\pm$0.7 & 83.7$\pm$0.5 & 90.5$\pm$0.4 & 92.9$\pm$0.2 & 94.4$\pm$0.3 & 95.1$\pm$0.2 & \textbf{95.8$\pm$0.2} & \textbf{96.0$\pm$0.1} & 96.1$\pm$0.1 \\
\multicolumn{1}{l}{} & Forgetting & 64.1$\pm$0.9 & 85.4$\pm$0.7 & 87.3$\pm$0.5 & 90.9$\pm$0.3 & 93.6$\pm$0.4 & 94.8$\pm$0.2 & 94.9$\pm$0.2 & 95.1$\pm$0.2 & 95.3$\pm$0.2 & 96.1$\pm$0.1 \\
\multicolumn{1}{l}{} & GraNd & 69.3$\pm$0.9 & 85.7$\pm$0.5 & 90.0$\pm$0.5 & 92.4$\pm$0.4 & 93.6$\pm$0.3 & 94.7$\pm$0.4 & 95.1$\pm$0.2 & 95.5$\pm$0.2 & 95.7$\pm$0.1 & 96.1$\pm$0.1 \\
\multicolumn{1}{l}{} & Glister & \underline{72.4$\pm$0.7} & 82.9$\pm$0.5 & 87.0$\pm$0.4 & 91.2$\pm$0.3 & 92.7$\pm$0.3 & 93.3$\pm$0.3 & 94.2$\pm$0.2 & 95.0$\pm$0.2 & 95.8$\pm$0.2 & 96.1$\pm$0.1 \\
\multicolumn{1}{l}{} & GraphCut & 71.9$\pm$0.6 & 83.0$\pm$0.3 & 88.5$\pm$0.3 & 91.2$\pm$0.3 & 92.9$\pm$0.2           & 93.7$\pm$0.3 & 94.4$\pm$0.2 & 95.3$\pm$0.2 & 95.6$\pm$0.2 & 96.1$\pm$0.1 \\
\multicolumn{1}{l}{} & AL (Conf) & 70.7$\pm$1.1 & \underline{87.0$\pm$0.5} & \underline{90.3$\pm$0.5} & 93.1$\pm$0.4 & 94.3$\pm$0.3 & 95.1$\pm$0.2 & 95.5$\pm$0.4 & \underline{95.7$\pm$0.2} & \textbf{96.0$\pm$0.1} & 96.1$\pm$0.1 \\
\multicolumn{1}{l}{} & AL (LL) & 68.4$\pm$1.5 & 85.5$\pm$0.7 & 89.3$\pm$0.6 & 93.1$\pm$0.5 & \underline{94.7$\pm$0.2} & \underline{95.3$\pm$0.2} & \underline{95.6$\pm$0.3} & \textbf{95.8$\pm$0.2} & \textbf{96.0$\pm$0.2} & 96.1$\pm$0.1 \\
\multicolumn{1}{l}{} & AL (Margin) & 71.9$\pm$0.9 & 86.7$\pm$0.5 & 90.1$\pm$0.4 & \underline{93.3$\pm$0.4} & 94.5$\pm$0.3 & 95.1$\pm$0.2 & \underline{95.6$\pm$0.3} & \textbf{95.8$\pm$0.2} & \textbf{96.0$\pm$0.2} & 96.1$\pm$0.1 \\
\multicolumn{1}{l}{} & SSP-Easy & 71.3$\pm$0.5 & 81.5$\pm$2.0 & 87.4$\pm$0.7 & 90.2$\pm$0.3 & 92.0$\pm$0.5 & 93.1$\pm$0.4 & 94.2$\pm$0.2 & 94.9$\pm$0.1 & 95.3$\pm$0.2 & 96.1$\pm$0.1 \\
\multicolumn{1}{l}{} & SSP-Hard & 70.4$\pm$1.7 & 83.0$\pm$0.7 & 87.4$\pm$0.3 & 91.1$\pm$0.2 & 92.9$\pm$0.2 & 93.2$\pm$0.7 & 94.5$\pm$0.2 & 94.9$\pm$0.4 & 95.2$\pm$0.2 & 96.1$\pm$0.1 \\
\multicolumn{1}{l}{} & SP-Easy & 70.0$\pm$1.5 & 82.4$\pm$0.3 & 87.1$\pm$1.3 & 89.9$\pm$0.6& 92.0$\pm$0.4 & 93.4$\pm$0.2 & 94.3$\pm$0.3 & 94.6$\pm$0.2 & 95.4$\pm$0.0 & 96.1$\pm$0.1 \\
\multicolumn{1}{l}{} & SP-Hard & 68.0$\pm$1.2 & 81.6$\pm$0.3 & 87.6$\pm$0.7 & 90.8$\pm$0.8 & 92.7$\pm$0.6 & 93.7$\pm$0.3 & 94.3$\pm$0.2 & 94.8$\pm$0.4 & 95.3$\pm$0.2 & 96.1$\pm$0.1 \\ 
\cmidrule(lr){2-12}
\multicolumn{1}{l}{} & SP-Easy-RS & 89.1$\pm$0.9 & 92.3$\pm$0.1 & 93.2$\pm$0.5 & 93.8$\pm$0.4 & 94.5$\pm$0.3 & 95.0$\pm$0.3 & 94.9$\pm$0.2 & 95.4$\pm$0.3 & 95.7$\pm$0.2 & 96.1$\pm$0.1 \\ 
\cmidrule(lr){2-12}
\multicolumn{1}{l}{} & RS2 w/ repl & 91.7$\pm$0.6 & 93.7$\pm$0.2 & 94.2$\pm$0.6 & \textbf{94.9$\pm$0.2} & \textbf{95.3$\pm$0.2} & 95.0$\pm$0.1 & 95.4$\pm$0.4 & 95.6$\pm$0.2 & \underline{95.9$\pm$0.1} & 96.1$\pm$0.1 \\
\multicolumn{1}{l}{} & RS2 w/ repl (stratified) & 91.7$\pm$0.6 & 93.6$\pm$0.3 & \textbf{94.8$\pm$0.4} & \textbf{94.9$\pm$0.3} & 95.2$\pm$0.1 & \textbf{95.4$\pm$0.3} & 95.5$\pm$0.0 & 95.6$\pm$0.3 & \underline{95.9$\pm$0.2} & 96.1$\pm$0.1 \\
\multicolumn{1}{l}{} & RS2 w/o repl & \textbf{92.0$\pm$0.4} &  \textbf{94.0$\pm$0.4} &  94.6$\pm$0.2 &  94.5$\pm$0.3 &  95.2$\pm$0.3 &  95.3$\pm$0.2 &  \textbf{95.7$\pm$0.1} &  \textbf{95.8$\pm$0.3} &  95.8$\pm$0.1 & 96.1$\pm$0.1 \\
\bottomrule
\end{tabular}
}
\end{table}

We now focus on additional results to accompany the runtime and time-to-accuracy results presented in the main body of the paper. Specifically, in Table~\ref{tab:cifar10_subset_selection}, we show the total time needed for subset selection on CIFAR10 across all rounds for RS2 and compare to the total time needed for subset selection for existing data pruning methods which sample a static subset once before learning begins. In Table~\ref{tab:cifar10_per_epoch_subset_selection} we show the same measurement for our baseline methods which utilize repeated sampling between each round. Note that the differences presented in these tables are the dominant factor leading to differences in end-to-end runtime between methods: Once a subset has been selected for training at each round, all methods train on the same number of examples, and thus have the same per-round training time (assuming there is no noise). Thus the method with the lowest subset selection overhead will also be the fastest method for end-to-end training.

Table~\ref{tab:cifar10_subset_selection} shows that sampling a static random subset once before training leads to the lowest total subset selection time, but that repeated random sampling (RS2) also has low subset selection overhead, i.e., generally less than one second on CIFAR10. The subset selection overhead of RS2 is orders-of-magnitude less than existing methods, even though they sample the subset only once at the beginning of training. For example, most existing methods require over 200 seconds for subset selection because they require pretraining an auxiliary model on the full dataset for a few epochs in order to rank example importance. Some methods, however, require even more time for subset selection; Active Learning based methods can require more than 32,000 seconds to select a subset with $r=50$\%. Once example importance has been calculated, Table~\ref{tab:cifar10_per_epoch_subset_selection} shows that this information can be used to resample the subset for training between each round (our -RS baseline methods, see Section~\ref{subsec:exp_e2e}) with little additional overhead. All such methods, however, still require orders of magnitude more time for subset selection compared to RS2 due to the initial pretraining
\footnote{We note that the pretraining overhead of GraNd in Table~\ref{tab:cifar10_subset_selection} uses the default hyperparameters from~\cite{guo2022deepcore} in which the results from 10 pretrained auxiliary models are averaged, but for GraNd-RS in Table~\ref{tab:cifar10_per_epoch_subset_selection} we use only one model for consistency across all -RS methods.}. 
On the other hand, recomputing the most  important examples between each round (our -RC methods), leads to increased subset selection overhead. The reason for this is that reranking example importance requires computing the model forward pass for all training examples between each round. Thus, such methods generally are unable to significantly reduce the end-to-end runtime compared to simply training on the full dataset each round; Even with a selection ratio of 5\%, the fastest -RC method requires more than 3500 seconds for subset selection, yet end-to-end training, each round on the full dataset, requires only 4500 seconds.

\begin{table}[t]
\caption{Comparison of the total time needed for subset selection for different data pruning methods when training on CIFAR10. Time reported in seconds. The overhead of repeated random sampling is considerably less than existing data pruning methods. For reference, training on the full dataset for 200 epochs takes roughly 4500 seconds. Best method bolded; Next best underlined.}
\vspace{4pt}
\label{tab:cifar10_subset_selection}
\resizebox{\textwidth}{!}{
\begin{tabular}{cccccccccccc}
\toprule
Select Ratio ($r$) & 1\% & 5\% & 10\% & 20\% & 30\% & 40\% & 50\%\\
\midrule
Random & \textbf{0.001$\pm$0.0} & \textbf{0.001$\pm$0.0} & \textbf{0.001$\pm$0.0} &\textbf{ 0.001$\pm$0.0} & \textbf{0.001$\pm$0.0} & \textbf{0.001$\pm$0.0} & \textbf{0.001$\pm$0.0}\\
CD & 237.78$\pm$3.08 &    243.73$\pm$6.06 &  247.01$\pm$13.72 &     244.39$\pm$3.58 &     243.42$\pm$2.18 &     254.46$\pm$9.87 &     254.72$\pm$2.28\\
Herding & 238.29$\pm$3.84 &    241.31$\pm$2.37 &    253.0$\pm$5.84 &    258.49$\pm$12.35 &      255.08$\pm$0.8 &     268.91$\pm$8.13 &     263.16$\pm$2.84 \\
K-Center Greedy & 238.42$\pm$2.75 &    243.16$\pm$0.16 &    243.12$\pm$5.1 &     246.71$\pm$5.54 &     252.07$\pm$3.21 &      260.44$\pm$5.2 &     259.34$\pm$1.68\\
Least Confidence & 238.61$\pm$2.6 &    238.08$\pm$2.59 &   241.96$\pm$5.89 &     239.92$\pm$3.72 &     237.07$\pm$1.64 &     239.47$\pm$6.53 &     240.01$\pm$5.08 \\
Entropy & 239.44$\pm$0.91 &    242.48$\pm$1.41 &   239.57$\pm$5.57 &     242.79$\pm$7.49 &     235.59$\pm$1.44 &     240.67$\pm$2.19 &      239.68$\pm$2.0\\
Margin & 241.71$\pm$2.58 &    245.28$\pm$5.89 &   246.17$\pm$4.65 &      240.83$\pm$1.5 &     243.12$\pm$1.24 &      241.4$\pm$4.05 &     243.45$\pm$1.32\\
Forgetting & 235.34$\pm$1.82 &    238.09$\pm$6.87 &   237.93$\pm$5.57 &     235.44$\pm$1.96 &      234.9$\pm$4.17 &     235.97$\pm$7.32 &     234.78$\pm$1.57 \\
GraNd & 2372.95$\pm$22.89 &  2406.41$\pm$79.95 &  2384.34$\pm$13.3 &   2377.09$\pm$16.19 &   2396.31$\pm$27.83 &   2375.14$\pm$17.84 &   2389.62$\pm$34.32 \\
CAL & 559.68$\pm$1.43 &   562.32$\pm$22.51 &   558.97$\pm$1.96 &     557.83$\pm$10.9 &    568.95$\pm$10.64 &     559.37$\pm$5.04 &     553.13$\pm$9.83 \\
Craig & 296.27$\pm$2.94 &    322.16$\pm$3.83 &   362.8$\pm$10.12 &    438.21$\pm$13.49 &    506.38$\pm$10.17 &     572.06$\pm$2.95 &    642.26$\pm$12.05 \\
Glister & 244.66$\pm$3.99 &    242.02$\pm$4.24 &   247.44$\pm$3.41 &     248.03$\pm$8.07 &     254.79$\pm$1.62 &     259.26$\pm$5.81 &     259.58$\pm$5.36 \\
FL & 330.79$\pm$20.94 &   587.43$\pm$16.07 &  764.18$\pm$86.51 &   1261.9$\pm$165.55 &  1863.98$\pm$241.04 &  2151.46$\pm$435.54 &  2722.44$\pm$145.36 \\
GraphCut & 325.66$\pm$9.37 &   551.81$\pm$46.75 &  728.66$\pm$45.93 &  1251.72$\pm$187.75 &  1601.92$\pm$202.89 &  2335.62$\pm$495.53 &   2672.69$\pm$643.3 \\
AL (Conf) & 408.3$\pm$8.4 & 908.1$\pm$9.8 & 2152.8$\pm$23.7 & 6694.8$\pm$80.6 & 13358.8$\pm$184.5 & 22120.2$\pm$329.5 & 32940.6$\pm$418.7 \\
AL (LL) & 398.5$\pm$5.4 & 879.1$\pm$9.8 & 2087.2$\pm$21.0 & 6592.5$\pm$43.9 & 13206.8$\pm$172.9 & 21933.8$\pm$362.6 & 32763.1$\pm$596.6 \\
AL (Margin) & 396.3$\pm$19.9 & 875.3$\pm$36.2 & 2107.2$\pm$73.6 & 6634.9$\pm$149.1 & 13298.5$\pm$241.1 & 22062.8$\pm$275.7 & 32871.6$\pm$425.9 \\
SSP-Easy & 265.67$\pm$5.87 &    269.89$\pm$8.48 &   268.44$\pm$8.51 &     263.84$\pm$6.49 &     264.85$\pm$5.58 &     268.21$\pm$5.82 &     269.39$\pm$4.78 \\
SSP-Hard & 285.91$\pm$9.25 &    288.47$\pm$8.07 &   290.3$\pm$26.43 &    284.52$\pm$27.16 &    293.41$\pm$21.92 &    287.28$\pm$25.74 &     271.59$\pm$6.09 \\
SP-Easy & 229.04$\pm$2.46 &    231.09$\pm$3.94 &   231.47$\pm$4.36 &     233.61$\pm$6.34 &     233.86$\pm$3.38 &     231.58$\pm$5.65 &      233.6$\pm$4.32 \\
SP-Hard & 227.68$\pm$1.65 &     234.39$\pm$1.2 &   230.85$\pm$3.65 &     227.67$\pm$2.23 &     231.66$\pm$2.64 &      230.97$\pm$3.1 &     233.12$\pm$5.94 \\
\midrule
RS2 w/ repl & \underline{0.16$\pm$0.01} & \underline{0.16$\pm$0.01} & \underline{0.16$\pm$0.01} & \underline{0.16$\pm$0.01} & \underline{0.16$\pm$0.01} & \underline{0.16$\pm$0.01} & \underline{0.16$\pm$0.01} \\
RS2 w/ repl (stratified) & 0.68$\pm$0.02 & 0.72$\pm$0.03 & 0.75$\pm$0.01 & 0.84$\pm$0.03 & 0.93$\pm$0.05 & 1.0$\pm$0.03 & 1.09$\pm$0.03 \\
RS2 w/o repl &  0.09$\pm$0.01 &  0.1$\pm$0.01 &  0.11$\pm$0.01 &  0.12$\pm$0.01 &  0.14$\pm$0.01 &  0.15$\pm$0.01 &  0.19$\pm$0.01 \\
\bottomrule
\end{tabular}
    }
\end{table}

\begin{table}[t]
\caption{Comparison of the total time (in seconds) needed for subset selection for our dynamic data pruning methods when training on CIFAR10. The training subset is update for all methods after each round, either by resampling from a static example importance distribution (RS, left) or by recomputing example importance based on model updates (RC, right). For reference, training on the full dataset for 200 epochs takes roughly 4500 seconds. Best method bolded; Next best underlined.}
\vspace{4pt}
\label{tab:cifar10_per_epoch_subset_selection}
\centering
\resizebox{\textwidth}{!}{
{
\begin{tabular}{llll}
\toprule
Selection Ratio ($r$) & 5\% & 10\% & 30\% \\
\midrule
CD-RS & - & - & - \\
Herding-RS & - & - & -  \\
K-Center Greedy-RS & - & - & - \\
Least Confidence-RS & 238.9$\pm$2.66 & 244.12$\pm$3.5 &         243.75$\pm$9.7 \\
Entropy-RS & 241.26$\pm$2.63 & 240.41$\pm$1.33 &        247.27$\pm$3.19 \\
Margin-RS & 243.46$\pm$3.29 & 239.53$\pm$4.26 &        239.27$\pm$3.36 \\
Forgetting-RS & \underline{236.3$\pm$5.61} & \underline{239.38$\pm$2.36} & \underline{236.24$\pm$4.24} \\
GraNd-RS & 397.92$\pm$0.56 & 409.93$\pm$9.32 &        406.89$\pm$4.74 \\
CAL-RS & 555.75$\pm$20.87 & 549.93$\pm$10.61 &        547.82$\pm$3.33 \\
Craig-RS & 1025.14$\pm$70.46 & 987.05$\pm$16.13 &      1021.92$\pm$13.77 \\
Glister-RS & - & - & - \\
SP-Easy-RS & 326.76 $\pm$ 63.77 & 371.23 $\pm$ 63.71  & 497.39 $\pm$ 8.69 \\ 
\midrule
RS2 w/ repl (stratified) & 0.72 $\pm$ 0.03 & 0.75 $\pm$ 0.01 & 0.93 $\pm$ 0.05 \\
RS2 w/o repl & \textbf{0.1$\pm$0.01} & \textbf{0.11$\pm$0.01} & \textbf{0.14$\pm$0.01} \\
\bottomrule
\end{tabular}
}
{
\begin{tabular}{llll}
\toprule
Selection Ratio ($r$) & 5\% & 10\% & 30\% \\
\midrule
CD-RC & \underline{3581.05$\pm$61.15} & 3860.4$\pm$31.36 &      4860.18$\pm$55.95 \\
Herding-RC & 3851.82$\pm$37.15 & 4332.73$\pm$31.25 &       6578.17$\pm$9.45 \\
K-Center Greedy-RC & 3854.89$\pm$39.84 & 4384.02$\pm$38.72 &      6282.79$\pm$35.03 \\
Least Confidence-RC & 3698.25$\pm$47.31 & \underline{3674.73$\pm$37.97} & \underline{3630.66$\pm$30.48} \\
Entropy-RC & 3651.39$\pm$15.2 & 3677.18$\pm$32.94 &      3690.08$\pm$27.57 \\
Margin-RC & 3715.31$\pm$75.59 & 3686.48$\pm$24.21 &      3760.33$\pm$91.99 \\
Forgetting-RC & 3756.12$\pm$25.54 & 3732.81$\pm$33.9 &      3723.22$\pm$39.52 \\
GraNd-RC & 38035.57$\pm$1212.62 &  37390.35$\pm$939.82 &  29134.04$\pm$16123.62 \\
CAL-RC & 69994.0$\pm$200.65 &  66947.73$\pm$2645.88 &   67086.71$\pm$1213.57 \\
Craig-RC & 20517.31$\pm$955.04 & 27497.62$\pm$359.82 &    55305.63$\pm$988.66 \\
Glister-RC & 4358.65$\pm$56.65 & 4966.57$\pm$20.48 & 6393.25$\pm$33.59  \\
SP-Easy-RC & - & - & - \\
\midrule
RS2 w/ repl (stratified) & 0.72 $\pm$ 0.03 & 0.75 $\pm$ 0.01 & 0.93 $\pm$ 0.05 \\
RS2 w/o repl & \textbf{0.1$\pm$0.01} & \textbf{0.11$\pm$0.01} & \textbf{0.14$\pm$0.01} \\
\bottomrule
\end{tabular}
}
}
\end{table}

Finally, as our primary focus is on reducing time-to-accuracy, we include in Tables~\ref{tab:time_to_acc_cifar_p1}-\ref{tab:time_to_acc_imagenet_p2} the time for select baseline methods and RS2 to reach a set of accuracy targets when training with varying pruning ratios on CIFAR10 and ImageNet. For the active learning time-to-accuracy results in these tables, we report the runtime of the smallest selection ratio that reached the given accuracy. This prevents active learning time-to-accuracies from being dominated by large subset selection overheads as the selection ratio increases (e.g., Table~\ref{tab:cifar10_subset_selection}), when these selection ratios are not strictly needed to reach the desired accuracy. As shown in the main body of the paper, RS2 provides the fastest time-to-accuracy compared to existing methods. Dashes indicate that the given method and pruning ratio failed to reach the target accuracy. We leave a detailed study of these results for future work. In particular, an interesting question is how to decide what pruning ratio $r$ one should use in order to minimize runtime to reach a desired accuracy.

\begin{table}[t]
\centering
\caption{The total time required for RS2 and baseline data pruning methods to reach a target accuracy (time-to-accuracy) when training with varying pruning ratios on CIFAR10. Time is reported in seconds. Part 1/3. The best method(s) is bolded.}
\vspace{4pt}
\label{tab:time_to_acc_cifar_p1}
\resizebox{0.95\columnwidth}{!}{
\begin{tabular}{ccccccccccccc}
\toprule
Target                     & Select Ratio ($r$) & 1\%          & 5\%         & 10\%        & 20\%        & 30\%        & 40\%        & 50\%        & 60\%        & 70\%        & 80\%        & 90\%        \\ \midrule
\multirow{25}{*}{30\% acc} & Random             & 73           & \textbf{16} & 28          & \textbf{23} & \textbf{15} & \textbf{20} & \textbf{13} & \textbf{14} & \textbf{16} & \textbf{18} & \textbf{20} \\
                        & CD                 & -            & 332         & 372         & 289         & 297         & 284         & 290         & 283         & 286         & 294         & 283         \\
                           & Herding            & -            & 295         & 296         & 275         & 283         & 279         & 273         & 278         & 281         & 285         & 291         \\
                           & K-Center Greedy        & 353          & 284         & 304         & 273         & 267         & 280         & 283         & 281         & 283         & 290         & 296         \\
                           & Least Confidence    & -            & -           & 393         & 330         & 299         & 278         & 275         & 265         & 259         & 262         & 264         \\
                           & Entropy            & -            & 484         & 398         & 346         & 289         & 270         & 274         & 267         & 256         & 259         & 262         \\
                           & Margin             & -            & 324         & 381         & 297         & 297         & 271         & 279         & 264         & 272         & 259         & 263         \\
                           & Forgetting         & 302          & 263         & 299         & 264         & 258         & 264         & 258         & 249         & 257         & 252         & 253         \\
                           & GraNd              & -            & 2485        & 2461        & 2422        & 2428        & 2404        & 2425        & 2397        & 2376        & 2471        & 2387        \\
                           & CAL                & -            & 596         & 597         & 574         & 385         & 579         & 565         & 605         & 605         & 591         & 595         \\
                           & Craig              & 365          & 342         & 395         & 461         & 521         & 583         & 668         & 740         & 816         & 909         & 952         \\
                           & Glister            & -            & 280         & 274         & 264         & 270         & 278         & 283         & 278         & 280         & 291         & 293         \\
                           & FL                 & 377          & 605         & 803         & 1279        & 1878        & 2179        & 2734        & 3239        & 3671        & 4696        & 4513        \\
                           & GraphCut                & 371          & 567         & 771         & 1268        & 1617        & 2355        & 2684        & 3462        & 3771        & 4427        & 4458        \\
                           & AL (Conf)           & 395          & 395         & 395         & 395         & 395         & 395         & 395         & 395         & 395         & 395         & 395         \\
                           & AL (LL)             & 386          & 386         & 386         & 386         & 386         & 386         & 386         & 386         & 386         & 386         & 386         \\
                           & AL (Margin)         & 383          & 383         & 383         & 383         & 383         & 383         & 383         & 383         & 383         & 383         & 383         \\
                           & SSP-Easy           & 320          & 295         & 300         & 280         & 288         & 287         & 281         & 282         & 282         & 287         & 289         \\
                           & SSP-Hard           & 403          & 325         & 330         & 307         & 309         & 308         & 283         & 283         & 286         & 293         & 289         \\
                           & SP-Easy            & 295          & 246         & 258         & 249         & 246         & 254         & 240         & 243         & 249         & 243         & 253         \\
                           & SP-Hard            & 315          & 260         & 262         & 250         & 247         & 250         & 257         & 250         & 251         & 248         & 252         \\
                           \cmidrule{2-13}
                           & SP-Easy-RS         & 284          & 249         & 256         & 262         & 249         & 250         & 257         & 249         & 250         & 249         & 252         \\
                           \cmidrule{2-13}
                           & RS2 w/ repl               & 67           & \textbf{19} & 36          & 21 & \textbf{15} & \textbf{19} & \textbf{12} & \textbf{14} & \textbf{16} & \textbf{18} & \textbf{20} \\
                           & RS2 w/ repl (stratified)             & \textbf{59}  & \textbf{20} & 43          & 27 & \textbf{15} & \textbf{19} & \textbf{12} & \textbf{14} & \textbf{15} & \textbf{18} & \textbf{20} \\
                           & RS2 w/o repl              & 83           & \textbf{17} & \textbf{14} & \textbf{11} & \textbf{15} & \textbf{19} & \textbf{11} & \textbf{14} & \textbf{16} & \textbf{18} & \textbf{20} \\ \midrule
\multirow{25}{*}{50\% acc} & Random             & -            & 78 & 109         & 65 & 55 & 69          & 49          & 56          & 48 & \textbf{54} & 61          \\
                            & CD                 & -            & -           & 617         & 354         & 366         & 384         & 361         & 325         & 318         & 349         & 304         \\
                           & Herding            & -            & -           & 564         & 399         & 345         & 319         & 322         & 320         & 314         & 323         & 311         \\
                          & K-Center Greedy        & -            & 390         & 397         & 322         & 314         & 309         & 318         & 323         & 331         & 327         & 317         \\
                          & Least Confidence    & -            & -           & -           & 438         & 383         & 357         & 347         & 321         & 325         & 335         & 306         \\
                           & Entropy            & -            & -           & -           & 513         & 412         & 339         & 345         & 324         & 321         & 314         & 303         \\
                           & Margin             & -            & -           & 755         & 439         & 407         & 359         & 351         & 305         & 305         & 314         & 304         \\
                           & Forgetting         & -            & -           & 455         & 366         & 350         & 333         & 306         & 305         & 306         & 306         & 295         \\
                           & GraNd              & -            & -           & 2811        & 2550        & 2531        & 2484        & 2498        & 2453        & 2408        & 2509        & 2428        \\
                           & CAL               & -            & 881         & 764         & 652         & 633         & 630         & 612         & 648         & 638         & 628         & 635         \\
                           & Craig              & -            & 458         & 523         & 537         & 568         & 633         & 704         & 782         & 885         & 946         & 993         \\
                           & Glister            & -            & 393         & 372         & 313         & 308         & 319         & 307         & 334         & 312         & 328         & 313         \\
                           & FL                 & -            & 695         & 909         & 1341        & 1917        & 2219        & 2771        & 3267        & 3704        & 4733        & 4535        \\
                           & GraphCut                 & -            & 623         & 876         & 1321        & 1664        & 2395        & 2720        & 3506        & 3805        & 4463        & 4478        \\
                           & AL (Conf)           & -            & 895         & 895         & 895         & 895         & 895         & 895         & 895         & 895         & 895         & 895         \\
                           & AL (LL)             & -            & 867         & 867         & 867         & 867         & 867         & 867         & 867         & 867         & 867         & 867         \\
                           & AL (Margin)         & -            & 862         & 862         & 862         & 862         & 862         & 862         & 862         & 862         & 862         & 862         \\
                           & SSP-Easy           & -            & 375         & 394         & 336         & 333         & 336         & 316         & 323         & 314         & 324         & 329         \\
                           & SSP-Hard           & -            & 432         & 433         & 365         & 357         & 358         & 332         & 326         & 335         & 330         & 329         \\
                           & SP-Easy            & -            & 340         & 362         & 306         & 301         & 284         & 287         & 271         & 282         & 261         & 293         \\
                           & SP-Hard            & -            & 277         & 354         & 312         & 301         & 290         & 306         & 294         & 284         & 285         & 291         \\
                            \cmidrule{2-13}
                           & SP-Easy-RS       & -            & 308         & 355         & 335         & 289         & 298         & 304         & 290         & 282         & 267         & 273         \\
                           \cmidrule{2-13}
                           & RS2 w/ repl               & 279          & 77 & 137         & 66 & 61 & \textbf{48} & 47          & 56          & 48 & \textbf{55} & 61          \\
                           & RS2 w/ repl (stratified)             & \textbf{250} & 87 & 123         & 83          & 69          & 60          & 72          & 57          & 64          & \textbf{54} & \textbf{40} \\
                           & RS2 w/o repl              & -            & \textbf{64} & \textbf{44} & \textbf{46} & \textbf{46} & 58          & \textbf{35} & \textbf{41} & \textbf{32} & \textbf{53} & 60          \\ \bottomrule
\end{tabular}
}
\end{table}

\begin{table}[h]
\centering
\caption{The total time required for RS2 and baseline data pruning methods to reach a target accuracy (time-to-accuracy) when training with varying pruning ratios on CIFAR10. Time is reported in seconds. Part 2/3. The best method(s) is bolded.}
\vspace{4pt}
\label{tab:time_to_acc_cifar_p2}
\resizebox{0.95\columnwidth}{!}{
\begin{tabular}{ccccccccccccc}
\toprule
Target                     & Select Ratio ($r$)   & 1\% & 5\%          & 10\%         & 20\%         & 30\%         & 40\%         & 50\%         & 60\%         & 70\%         & 80\%         & 90\%         \\ \midrule
\multirow{25}{*}{70\% acc} & Random          & -   & -            & 364          & 216          & 184          & 139 & 136          & 139          & 130          & 126 & 143          \\
                        & CD              & -   & -            & -            & 584          & 504          & 473          & 456          & 396          & 382          & 421          & 406          \\
                           & Herding         & -   & -            & -            & 926          & 760          & 456          & 406          & 418          & 395          & 415          & 393          \\
                           & K-Center Greedy     & -   & -            & 552          & 420          & 384          & 397          & 414          & 380          & 412          & 493          & 379          \\
                           & Least Confidence & -   & -            & -            & 911          & 514          & 455          & 453          & 420          & 374          & 372          & 369          \\
                           & Entropy         & -   & -            & -            & 954          & 565          & 457          & 464          & 423          & 371          & 387          & 385          \\
                           & Margin          & -   & -            & -            & 705          & 564          & 479          & 473          & 387          & 402          & 387          & 366          \\
                           & Forgetting      & -   & -            & -            & 549          & 473          & 430          & 366          & 376          & 374          & 360          & 356          \\
                           & GraNd           & -   & -            & -            & 2727         & 2689         & 2573         & 2583         & 2523         & 2489         & 2584         & 2489         \\
                           & CAL             & -   & -            & -            & 1025         & 801          & 763          & 720          & 748          & 719          & 720          & 695          \\
                           & Craig           & -   & -            & 775          & 664          & 693          & 724          & 803          & 854          & 966          & 1020         & 1075         \\
                           & Glister         & -   & -            & 601          & 429          & 409          & 418          & 390          & 406          & 393          & 383          & 396          \\
                           & FL              & -   & -            & 1144         & 1467         & 2009         & 2299         & 2855         & 3326         & 3785         & 4807         & 4597         \\
                           & GraphCut              & -   & -            & 1136         & 1465         & 1734         & 2484         & 2804         & 3578         & 3889         & 4536         & 4538         \\
                           & AL (Conf)        & -   & -            & 2884         & 2884         & 2884         & 2884         & 2884         & 2884         & 2884         & 2884         & 2884         \\
                           & AL (LL)          & -   & -            & 2804         & 2804         & 2804         & 2804         & 2804         & 2804         & 2804         & 2804         & 2804         \\
                           & AL (Margin)      & -   & -            & 2833         & 2833         & 2833         & 2833         & 2833         & 2833         & 2833         & 2833         & 2833         \\
                           & SSP-Easy        & -   & -            & 716          & 445          & 440          & 404          & 412          & 407          & 410          & 399          & 390          \\
                           & SSP-Hard        & -   & -            & 689          & 532          & 461          & 438          & 392          & 411          & 416          & 404          & 411          \\
                           & SP-Easy         & -   & -            & 705          & 431          & 409          & 403          & 370          & 355          & 363          & 332          & 373          \\
                           & SP-Hard         & -   & -            & 586          & 428          & 379          & 398          & 402          & 380          & 365          & 357          & 372          \\
                            \cmidrule{2-13}
                           & SP-Easy-RS      & -   & 440          & 508          & 415          & 383          & 374          & 377          & 358          & 378          & 339          & 354          \\
                           \cmidrule{2-13}
                           & RS2 w/ repl            & -   & \textbf{218} & 271          & 205          & 162          & 147          & 130          & 154          & \textbf{112} & 128          & 142          \\
                           & RS2 w/ repl (stratified)           & -   & \textbf{213} & 260          & 179          & 155          & 151          & 144          & \textbf{126} & 145          & 126          & \textbf{121} \\
                           & RS2 w/o repl           & -   & \textbf{201} & \textbf{168} & \textbf{128} & \textbf{116} & \textbf{127} & \textbf{105} & \textbf{125} & \textbf{112} & \textbf{107} & \textbf{122} \\ \midrule
\multirow{25}{*}{80\% acc} & Random          & -   & -            & -            & 553          & 470          & 395          & 303          & \textbf{266}          & 277          & \textbf{199} & 306          \\
                            & CD              & -   & -            & -            & 917          & 713          & 654          & 601          & 494          & 495          & 567          & 508          \\
                           & Herding         & -   & -            & -            & -            & -            & 1522         & 1222         & 909          & 686          & 598          & 536          \\
                           & K-Center Greedy     & -   & -            & -            & 692          & 695          & 635          & 606          & 535          & 557          & 511          & 563          \\
                           & Least Confidence & -   & -            & -            & -            & 738          & 622          & 621          & 587          & 537          & 518          & 534          \\
                           & Entropy         & -   & -            & -            & -            & 912          & 675          & 584          & 521          & 516          & 498          & 529          \\
                           & Margin          & -   & -            & -            & -            & 975          & 678          & 606          & 538          & 548          & 496          & 509          \\
                           & Forgetting      & -   & -            & -            & 719          & 641          & 616          & 511          & 518          & 541          & 451          & 538          \\
                           & GraNd           & -   & -            & -            & 2984         & 2831         & 2671         & 2681         & 2636         & 2585         & 2716         & 2650         \\
                           & CAL             & -   & -            & -            & -            & 1378         & 1132         & 995          & 978          & 968          & 921          & 816          \\
                           & Craig           & -   & -            & -            & 1057         & 1063         & 1075         & 986          & 1111         & 1116         & 1242         & 1218         \\
                           & Glister         & -   & -            & -            & 788          & 650          & 687          & 614          & 604          & 586          & 494          & 559          \\
                           & FL              & -   & -            & -            & 1765         & 2331         & 2525         & 3122         & 3528         & 3997         & 4936         & 4742         \\
                           & GraphCut              & -   & -            & -            & 1779         & 2068         & 2681         & 3049         & 3782         & 4106         & 4735         & 4697         \\
                           & AL (Conf)        & -   & -            & 2884         & 2884         & 2884         & 2884         & 2884         & 2884         & 2884         & 2884         & 2884         \\
                           & AL (LL)          & -   & -            & 2804         & 2804         & 2804         & 2804         & 2804         & 2804         & 2804         & 2804         & 2804         \\
                           & AL (Margin)      & -   & -            & 2833         & 2833         & 2833         & 2833         & 2833         & 2833         & 2833         & 2833         & 2833         \\
                           & SSP-Easy        & -   & -            & -            & 814          & 712          & 639          & 638          & 629          & 553          & 583          & 592          \\
                           & SSP-Hard        & -   & -            & -            & 840          & 786          & 739          & 584          & 625          & 626          & 570          & 573          \\
                           & SP-Easy         & -   & -            & -            & 839          & 702          & 699          & 617          & 466          & 461          & 493          & 595          \\
                           & SP-Hard         & -   & -            & -            & 766          & 673          & 605          & 558          & 578          & 496          & 503          & 514          \\
                        \cmidrule{2-13}
                           & SP-Easy-RS     & -   & 564          & 634          & 637          & 585          & 501          & 590          & 567          & 490          & 538          & 516          \\
                           \cmidrule{2-13}
                           & RS2 w/ repl            & -   & \textbf{331} & 383          & 334          & 308          & 343          & 274          & 293          & 337          & 312          & 324          \\
                           & RS2 w/ repl (stratified)           & -   & \textbf{331} & 339          & 320          & 317          & 341          & 289          & 295          & 275          & 272          & {303}        \\
                           & RS2 w/o repl           & -   & \textbf{333} & \textbf{278} & \textbf{257} & \textbf{278} & \textbf{264} & \textbf{212} & \textbf{264} & \textbf{241} & 270          & \textbf{264} \\ \bottomrule
\end{tabular}
}
\end{table}

\begin{table}[h]
\centering
\caption{The total time required for RS2 and baseline data pruning methods to reach a target accuracy (time-to-accuracy) when training with varying pruning ratios on CIFAR10. Time is reported in seconds. Part 3/3. The best method(s) is bolded.}
\vspace{4pt}
\label{tab:time_to_acc_cifar_p3}
\resizebox{0.95\columnwidth}{!}{
\begin{tabular}{ccccccccccccc}
\toprule
Target                     & Select Ratio ($r$)   & 1\% & 5\% & 10\%         & 20\%         & 30\%          & 40\%          & 50\%          & 60\%          & 70\%          & 80\%          & 90\%          \\ \midrule
\multirow{25}{*}{90\% acc} & Random          & -   & -   & -            & -            & 1473          & 1462          & 1722          & 1906          & 2037          & 2149          & 2403          \\
                        & CD              & -   & -   & -            & -            & 1327          & 1615          & 1806          & 1917          & 2002          & 2181          & 2461          \\
                           & Herding         & -   & -   & -            & -            & -             & -             & 2736          & 2801          & 2810          & 2885          & 2839          \\
                           & K-Center Greedy     & -   & -   & -            & -            & 1412          & 1662          & 1913          & 2093          & 2310          & 2433          & 2433          \\
                           & Least Confidence & -   & -   & -            & -            & 1446          & 1546          & 1746          & 1940          & 2080          & 2268          & 2423          \\
                           & Entropy         & -   & -   & -            & -            & -             & 1542          & 1783          & 1892          & 2024          & 2255          & 2468          \\
                           & Margin          & -   & -   & -            & -            & -             & 1554          & 1749          & 1877          & 2020          & 2151          & 2498          \\
                           & Forgetting      & -   & -   & -            & -            & 1359          & 1569          & 1699          & 1969          & 2228          & 2334          & 2562          \\
                           & GraNd           & -   & -   & -            & -            & 3530          & 3645          & 3845          & 3889          & 4042          & 4465          & 4531          \\
                           & CAL             & -   & -   & -            & -            & -             & 2201          & 2292          & 2459          & 2658          & 2692          & 3042          \\
                           & Craig           & -   & -   & -            & -            & 1798          & 2024          & 2381          & 2613          & 2982          & 3134          & 3030          \\
                           & Glister         & -   & -   & -            & -            & 1718          & 1734          & 1885          & 2222          & 2368          & 2485          & 2639          \\
                           & FL              & -   & -   & -            & -            & 3203          & 3591          & 4454          & 5229          & 5715          & 6903          & 6707          \\
                           & GraphCut              & -   & -   & -            & -            & 2846          & 3782          & 4428          & 5387          & 5860          & 6459          & 6782          \\
                           & AL (Conf)        & -   & -   & -            & 7845         & 7845          & 7845          & 7845          & 7845          & 7845          & 7845          & 7845          \\
                           & AL (LL)          & -   & -   & -            & 7740         & 7740          & 7740          & 7740          & 7740          & 7740          & 7740          & 7740          \\
                           & AL (Margin)      & -   & -   & -            & 7787         & 7787          & 7787          & 7787          & 7787          & 7787          & 7787          & 7787          \\
                           & SSP-Easy        & -   & -   & -            & -            & 1534          & 1719          & 1932          & 2152          & 2329          & 2582          & 2546          \\
                           & SSP-Hard        & -   & -   & -            & -            & 1740          & 1760          & 1961          & 2155          & 2273          & 2452          & 2499          \\
                           & SP-Easy         & -   & -   & -            & -            & -             & 1699          & 1936          & 2126          & 2324          & 2323          & 2647          \\
                           & SP-Hard         & -   & -   & -            & -            & -             & 1712          & 1906          & 2157          & 2206          & 2407          & 2650          \\
                            \cmidrule{2-13}
                           & SP-Easy-RS     & -   & -   & -            & 1080         & 1284          & 1505          & 1721          & 1962          & 2173          & 2365          & 2619          \\
                           \cmidrule{2-13}
                           & RS2 w/ repl            & -   & -   & -            & 777 & 1028 & \textbf{1220} & {1435}        & 1645 & 1893 & \textbf{1979} & 2378          \\
                           & RS2 w/repl (stratified)           & -   & -   & -            & 785 & 995  & 1267 & {1483}        & \textbf{1577} & \textbf{1786} & 2099 & \textbf{2127} \\
                           & RS2 w/o repl           & -   & -   & \textbf{566} & \textbf{723} & \textbf{953}  & \textbf{1211} & \textbf{1291} & 1637 & 1866 & 2106          & 2357          \\ \midrule
\multirow{25}{*}{95\% acc} & Random          & -   & -   & -            & -            & -             & -             & -             & -             & -             & -             & 3582          \\
                            & CD              & -   & -   & -            & -            & -             & -             & -             & -             & 3124          & 3416          & 3784          \\
                           & Herding         & -   & -   & -            & -            & -             & -             & -             & -             & -             & -             & 3815          \\
                           & K-Center Greedy     & -   & -   & -            & -            & -             & -             & -             & -             & -             & -             & 3952          \\
                           & Least Confidence & -   & -   & -            & -            & -             & -             & -             & -             & 3084          & 3432          & 3838          \\
                           & Entropy         & -   & -   & -            & -            & -             & -             & -             & 2900          & 3171          & 3464          & 3822          \\
                            & Margin          & -   & -   & -            & -            & -             & -             & -             & -             & 3057          & 3522          & 3797          \\
                           & Forgetting      & -   & -   & -            & -            & -             & -             & -             & -             & -             & -             & 3835          \\
                           & GraNd           & -   & -   & -            & -            & -             & -             & -             & -             & 5133          & 5721          & 5956          \\
                           & CAL             & -   & -   & -            & -            & -             & -             & -             & -             & -             & -             & 4153          \\
                           & Craig           & -   & -   & -            & -            & -             & -             & -             & -             & -             & -             & -             \\
                           & Glister         & -   & -   & -            & -            & -             & -             & -             & -             & -             & -             & 4012          \\
                           & FL              & -   & -   & -            & -            & -             & -             & -             & -             & -             & -             & 8221          \\
                           & GraphCut              & -   & -   & -            & -            & -             & -             & -             & -             & -             & -             & 8028          \\
                           & AL (Conf)        & -   & -   & -            & -            & -             & -             & -             & 48708         & 48708         & 48708         & 48708         \\
                           & AL (LL)          & -   & -   & -            & -            & -             & -             & -             & 48549         & 48549         & 48549         & 48549         \\
                           & AL (Margin)      & -   & -   & -            & -            & -             & -             & -             & 48326         & 48326         & 48326         & 48326         \\
                           & SSP-Easy        & -   & -   & -            & -            & -             & -             & -             & -             & -             & -             & -             \\
                           & SSP-Hard        & -   & -   & -            & -            & -             & -             & -             & -             & -             & -             & 3908          \\
                           & SP-Easy         & -   & -   & -            & -            & -             & -             & -             & -             & -             & -             & 4214          \\
                           & SP-Hard         & -   & -   & -            & -            & -             & -             & -             & -             & -             & -             & 3922          \\
                            \cmidrule{2-13}
                           & SP-Easy-RS      & -   & -   & -            & -            & -             & -             & -             & -             & 3180          & 3505          & 3810          \\
                           \cmidrule{2-13}
                           & RS2 w/ repl            & -   & -   & -            & -            & -             & -             & 2296 & \textbf{2498} & \textbf{2856} & 3245 & 3696          \\
                           & RS2 w/repl (stratified)           & -   & -   & -            & -            & -             & -             & -             & 2585 & 3003          & 3300          & \textbf{3501} \\
                           & RS2 w/o repl           & -   & -   & -            & -            & -             & -             & \textbf{2153} & 2633 & 2943 & \textbf{3147} & 3569 \\ \bottomrule
\end{tabular}
}
\end{table}

\begin{table}[h]
\centering
\caption{The total time required for RS2 and select baseline data pruning methods to reach a target accuracy (time-to-accuracy) when training with varying pruning ratios on ImageNet. Time is reported in seconds. Part 1/2. The best method(s) is bolded.}
\vspace{4pt}
\label{tab:time_to_acc_imagenet_p1}
\resizebox{.30\paperheight}{!}{
\begin{tabular}{ccccc}
\toprule
Target                     & Select Ratio ($r$)    & 1\%           & 5\%           & 10\%          \\ \midrule
\multirow{12}{*}{5\% acc}  & Random          & 919           & \textbf{289}           & \textbf{231}           \\
                            & Herding         & 22222         & 27794         & 24243         \\
                            & Least Confidence & -             & 15292         & 19362         \\
                           & Entropy         & -             & 19535         & 19285         \\
                           & Margin          & 19494         & 14121         & 15785         \\
                            & Forgetting      & 19081         & 18394         & 15978         \\
                           & GraNd           & 146059        & 143640        & 147931        \\
                           & FL              & 67907         & 67789         & 220647        \\
                           & GraphCut              & 66336         & 224755        & 303318        \\
                            \cmidrule{2-5}
                           & RS2 w/ repl            & 542           & 347  & \textbf{231}  \\
                           & RS2 w/ repl (stratified)           & \textbf{403}  & \textbf{291}           & 350           \\
                           & RS2 w/o repl           & 530           & 347  & 347           \\ \midrule
\multirow{12}{*}{10\% acc} & Random          & 2003          & \textbf{462}           & 579           \\
                            & Herding         & -             & 29468         & 24822         \\
                            & Least Confidence & -             & 16043         & 20056         \\
                           & Entropy         & -             & 20517         & 19864         \\
                           & Margin          & -             & 14698         & 16364         \\
                            & Forgetting      & 19799         & 18625         & 16094         \\
                           & GraNd           & 14743         & 143755        & 148047        \\
                           & FL              & 69144         & 220878        & 308052        \\
                           & GraphCut              & 66831         & 224928        & 303434        \\
                            \cmidrule{2-5}
                           & RS2 w/ repl            & 931           & 635  & \textbf{463}  \\
                           & RS2 w/ repl (stratified)           & \textbf{686}  & 638           & 581           \\
                           & RS2 w/o repl           & 907           & 635  & \textbf{463}  \\ \midrule
\multirow{12}{*}{20\% acc} & Random          & -             & 1906          & \textbf{1157}          \\
                            & Herding         & -             & -             & 36974         \\
                            & Least Confidence & -             & 22684         & 23065         \\
                           & Entropy         & -             & 27158         & 22757         \\
                           & Margin          & -             & 19029         & 17984         \\
                            & Forgetting      & -             & 20877         & 16557         \\
                           & GraNd           & -             & 146123        & 148741        \\
                           & FL              & -             & 221802        & 308630        \\
                           & GraphCut              & -             & 225794        & 304012        \\
                            \cmidrule{2-5}
                           & RS2 w/ repl            & 1355          & 1502          & 1273          \\
                           & RS2 w/ repl (stratified)           & \textbf{1145} & \textbf{1388}          & \textbf{1160}          \\
                           & RS2 w/o repl           & 1308          & \textbf{1386} & \textbf{1157} \\ \midrule
\multirow{12}{*}{30\% acc} & Random          & -             & 7219          & 4282          \\
                            & Herding         & -             & -             & -             \\
                            & Least Confidence & -             & -             & 32555         \\
                           & Entropy         & -             & -             & 32131         \\
                           & Margin          & -             & 43750         & 25159         \\
                            & Forgetting      & -             & 25959         & 18293         \\
                           & GraNd           & -             & 151840        & 158347        \\
                           & FL              & -             & 227519        & 313954        \\
                           & GraphCut              & -             & 307600        & 307600        \\
                            \cmidrule{2-5}
                           & RS2 w/ repl            & 1626          & 3927          & 4514          \\
                           & RS2 w/ repl (stratified)           & \textbf{1546} & \textbf{3583} & 3937 \\
                           & RS2 w/o repl           & 1673          & 4100          & \textbf{3166}          \\ \bottomrule
\end{tabular}
}
\end{table}

\begin{table}[h]
\centering
\caption{The total time required for RS2 and select baseline data pruning methods to reach a target accuracy (time-to-accuracy) when training with varying pruning ratios on ImageNet. Time is reported in seconds. Part 2/2. The best method(s) is bolded.}
\vspace{4pt}
\label{tab:time_to_acc_imagenet_p2}
\resizebox{.30\paperheight}{!}{
\begin{tabular}{ccccc}
\toprule
Target                     & Select Ratio ($r$)    & 1\%           & 5\%            & 10\%           \\ \midrule
\multirow{12}{*}{40\% acc} & Random          & -             & -              & 14814          \\
                        & Herding         & -             & -              & -              \\
                        & Least Confidence & -             & -              & -              \\
                           & Entropy         & -             & -              & -              \\
                           & Margin          & -             & -              & 32682          \\
                        & Forgetting      & -             & -              & 28824          \\
                           & GraNd           & -             & -              & 166680         \\
                           & FL              & -             & -              & 322518         \\
                           & GraphCut              & -             & 234399         & 317553         \\
                            \cmidrule{2-5}
                           & RS2 w/ repl            & 2015          & 6815           & \textbf{11689} \\
                           & RS2 w/ repl (stratified)           & \textbf{1864} & \textbf{6586}  & 11923          \\
                           & RS2 w/o repl           & 2015          & 6699           & 12152          \\ \midrule
\multirow{12}{*}{50\% acc} 
                            & Random          & -             & -              & -              \\
                            & Herding         & -             & -              & -              \\
                            & Least Confidence & -             & -              & -              \\
                           & Entropy         & -             & -              & -              \\
                           & Margin          & -             & -              & -              \\
                            & Forgetting      & -             & -              & 35305          \\
                           & GraNd           & -             & -              & -              \\
                           & FL              & -             & -              & -              \\
                           & GraphCut              & -             & -              & -              \\
                             \cmidrule{2-5}
                           & RS2 w/ repl            & -             & 8605           & 16550          \\
                           & RS2 w/ repl (stratified)           & -             & \textbf{8492}  & 16552          \\
                           & RS2 w/o repl           & -             & 8605           & \textbf{16434} \\ \midrule
\multirow{12}{*}{60\% acc} & Random          & -             & -              & -              \\
                            & Herding         & -             & -              & -              \\
                            & Least Confidence & -             & -              & -              \\
                           & Entropy         & -             & -              & -              \\
                           & Margin          & -             & -              & -              \\
                            & Forgetting      & -             & -              & -          \\
                           & GraNd           & -             & -              & -              \\
                           & FL              & -             & -              & -              \\
                           & GraphCut              & -             & -              & -              \\
                            \cmidrule{2-5}
                           & RS2 w/ repl            & -             & 10337          & 20021          \\
                           & RS2 w/ repl (stratified)           & -             & \textbf{10282}          & 20024          \\
                           & RS2 w/o repl           & -             & \textbf{10280} & \textbf{19790} \\ \midrule
\multirow{12}{*}{65\% acc} & Random          & -             & -              & -              \\
                            & Herding         & -             & -              & -              \\
                            & Least Confidence & -             & -              & -              \\
                           & Entropy         & -             & -              & -              \\
                           & Margin          & -             & -              & -              \\
                            & Forgetting      & -             & -              & -          \\
                           & GraNd           & -             & -              & -              \\
                           & FL              & -             & -              & -              \\
                           & GraphCut              & -             & -              & -              \\
                            \cmidrule{2-5}
                           & RS2 w/ repl            & -             & -              & 22105          \\
                           & RS2 w/ repl (stratified)           & -             & -              & 22107          \\
                           & RS2 w/o repl           & -             & -              & \textbf{21873}          \\ \bottomrule
\end{tabular}
}
\end{table}

\clearpage
\section{Additional RS2 Pseudocode}\label{section_pseudo_app}
In this section, we include additional RS2 pseudocode algorithms to accompany Algorithm~\ref{alg:general_rs2} presented in the main body of the paper and to present additional details useful for the RS2 theoretical analysis.
 
In Algorithm~\ref{alg:minibatch_sgd} we describe RS2 without replacement when training with accelerated mini batch SGD~\cite{ghadimi2016accelerated,nesterov1983method}. Nesterov's accelerated gradient introduces three different sets of parameters that are updated at each iteration $t$. We denote them as $w^t, w^t_{ag},$ and $w^t_{md}$. Furthermore, the algorithm introduces learning rate parameters $\alpha_t, \beta_t,$ and $\lambda_t$. In later sections, we specialize the learning rate parameters for obtaining a convergence rate bound. Finally, $g (w, \xi_t; m)$ at step $t$ represents the gradient estimate on a batch of data $m$ that is used for updating the model. $\xi_t$ are random vectors whose distributions are supported on $\Xi_t \in \mathbb{R}^d$.

\begin{algorithm}
\caption{RS2 w/o Replacement With Accelerated Mini batch SGD}
\label{alg:minibatch_sgd}
\begin{algorithmic}[1]
\Require Dataset $S = \{x_i, y_i\}_{i=1}^N$, selection ratio $r\in(0, 1]$, batch size $b$, initial model $w^0$, $X$ rounds, learning rate parameters $\{\alpha_t\} \: \text{s.t.} \: \alpha_1 = 1, \alpha_t \in (0,1), \forall t \geq 2$, $\{\beta_t > 0\}$, $\{\lambda_t > 0\}$, gradient estimate function for batch $m$ and parameters $w$ with noise $\xi$: $g (w, \xi; m)$
\State $T \gets \lceil N/b \rceil$
\State $t \gets 1$
\State $w^0 _{ag} = w^0$
\For{round $j=1$ to $X$}
\If{$t\%T == 0$} \Comment{Shuffle after full dataset has been seen}
\State $shuffle(S)$
\EndIf
\State $S' \gets S[(j-1) \cdot rN : j \cdot rN]$ \Comment{Select the subset across rounds without replacement}
\For{$k=1$ to $r\cdot T$}
\State batch $m \gets S'[(k-1) \cdot b:k \cdot b]$

\State $w^t_{md} \gets (1 - \alpha_t) w^{t-1}_{ag} + \alpha_t w^{t-1}$
\State $w^t \gets w^{t-1} - \lambda_t g(w^t_{md}, \xi_t; m)$
\hspace*{1em}\rlap{\smash{$\left.\begin{array}{@{}c@{}}\\{}\\{}\\\end{array}\right\}\begin{tabular}{l} \Comment{$train\_on\_batch$ for Nesterov mini batch SGD}\end{tabular}$}}
\State $w^t_{ag} \gets w^t_{md} - \beta_t g(w^t_{md}, \xi_t; m)$

\State $t \gets t+1$
\EndFor
\EndFor
\Return $w^t_{md}$
\end{algorithmic}
\end{algorithm}

In Algorithm~\ref{alg:minibatch_sgd_no_momentum}, we also show RS2 without replacement, but using standard mini batch SGD. We also write this algorithm using a different perspective: instead of iterating over rounds, selecting the training subset for each round, and then iterating over batches in the selected subset, RS2 without replacement can be equivalently implemented by iterating directly over batches from the full dataset, as long as these batches are correctly selected and the full dataset is shuffled as necessary. This perspective can be more useful for understanding the generalization error of RS2 without replacement as it more closely matches the common algorithms in related works analyzing SGD~\cite{ghadimi2016accelerated,nikolakakis2023select}.

\begin{algorithm}
\caption{RS2 w/o Replacement With Mini batch SGD; Single For Loop Perspective }\label{alg:minibatch_sgd_no_momentum}
\begin{algorithmic}[1]
\Require Dataset $S = \{x_i, y_i\}_{i=1}^N$, selection ratio $r\in(0, 1]$, batch size $b$, initial model $w^0$, $X$ rounds, learning rate $\eta_t$
\State $T \gets \lceil N/b \rceil$
\For{iterate $t=1$ to $r\cdot T\cdot X$}
\If{$t\%T == 0$} \Comment{Shuffle after full dataset has been seen}
\State $shuffle(S)$
\EndIf
\State batch $m \gets S[(t-1) \% T \cdot b:t \% T \cdot b]$
\State $w^t \gets w^{t-1} - \frac{\eta_t}{b} \sum _{(x,y) \in m} \nabla f(w^{t-1}; x,y)$ \Comment{$train\_on\_batch$ for mini batch SGD}
\EndFor
\Return $w^t$
\end{algorithmic}
\end{algorithm}

\section{RS2 Convergence Rate}
\label{app:convergence}
Performance of accelerated mini batch SGD has been well studied for convex functions~\cite{lan2012optimal,dekel2012optimal,cotter2011better}. It has been shown that mini batch SGD using batch of size $b$, after $X$ rounds with $T$ batches per round returns a solution $w$ satisfying
\begin{equation}
    \mathbb{E} [l(w) - l(w^*)] \leq \mathcal{O} \left( \frac{L || w^0 - w^*||^2}{T^2 X^2}  + \frac{\sigma || w^0 - w^* ||}{\sqrt{b T X}}\right). 
\end{equation}
Furthermore,~\cite{ghadimi2016accelerated} have analyzed the convergence rate of mini batch SGD for nonconvex $\beta$-smooth functions. After $TX$ mini batch steps of size $b$ the algorithm guarantees a solution $w$ such that
\begin{equation}
\label{eq:convex_statement}
    \mathbb{E} || \nabla l(w) ||^2 \leq \mathcal{O} \left( \frac{\beta (l(w^0) - l(w^*))}{TX} + \frac{\sigma \sqrt{\beta(l(w^0) - l(w^*))}}{\sqrt{bTX}} \right).
\end{equation}

We provide convergence analysis for accelerated mini batch SGD using RS2 without replacement as shown in Algorithm~\ref{alg:minibatch_sgd} following the analysis of ~\cite{ghadimi2016accelerated}.

\begin{cor}
\label{thm:convergence_app}
Suppose the loss $l(w)$ is nonconvex, has $\beta$-Lipschitz continuous gradients, and is bounded below. Let $g (w, \xi_t)$ at step $t$ represent the gradient estimate used when updating the model as in Algorithm~\ref{alg:minibatch_sgd} in the Appendix. Assume the gradient estimate satisfies $\mathbb{E} \left[ || g (w, \xi_t) - \nabla l (w)  ||^2 \right ] \leq \sigma ^2$, and $\mathbb{E} [g(w, \xi_t)] = \nabla l(w)$, where $\xi_t$ are random vectors whose distributions are supported on $\Xi_t \in \mathbb{R}^d$. 
With the previous assumptions, using a selection ratio $r \in (0,1]$ and mini batch of size $b$, RS2 produces an iterate $w$ after $X$ rounds, with $rT$ batches per round, such that:
\begin{equation}
\small
    \mathbb{E} \left[|| \nabla l(w) ||^2 \right ] \leq \mathcal{O} \left( \frac{\beta (l(w^0) - l(w^*))}{r \cdot T \cdot X} + \frac{\sigma\sqrt{\beta(l(w^0) - l(w^*))}}{\sqrt{b \cdot r \cdot T \cdot X}} \right).
\end{equation}
Furthermore, assuming that $l(w)$ is convex it holds that
\begin{equation}
    \mathbb{E} [l(w) - l(w^*)] \leq \mathcal{O} \left( \frac{\beta || w^0 - w^*||^2}{r^2 \cdot T^2 \cdot  X^2}  + \frac{\sigma || w^0 - w^* ||}{\sqrt{b \cdot r \cdot T \cdot X}}\right).
\end{equation}
\end{cor}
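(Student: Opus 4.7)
The plan is to reduce Corollary~\ref{thm:convergence_app} to an already-established convergence theorem for accelerated mini-batch SGD on nonconvex smooth objectives, specifically the bound of Ghadimi and Lan (2016), by observing that RS2 only changes the \emph{number} of stochastic oracle calls and not their statistical properties. Concretely, Algorithm~\ref{alg:minibatch_sgd} performs exactly $M \triangleq r \cdot T \cdot X$ Nesterov accelerated mini-batch updates over $X$ rounds. Each such update uses a gradient estimate $g(w^t_{md}, \xi_t; m)$ that, by the stated hypotheses, is unbiased with second-moment error bounded by $\sigma^2$. Because these assumptions are imposed pointwise on the per-iterate estimator, the partition of iterates into rounds and the without-replacement sampling scheme inside each round are both transparent to the analysis.

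First, I would formalise that RS2 supplies a valid $(\nabla l, \sigma^2)$ stochastic first-order oracle in the sense required by the accelerated SGD framework: conditional on the iterate $w^t_{md}$, $\mathbb{E}[g(w^t_{md},\xi_t;m)] = \nabla l(w^t_{md})$ and $\mathbb{E}[\|g(w^t_{md},\xi_t;m) - \nabla l(w^t_{md})\|^2] \le \sigma^2$, which is exactly the hypothesis of the corollary. Second, I would choose the step-size and averaging parameters $\{\alpha_t\}, \{\beta_t\}, \{\lambda_t\}$ precisely as prescribed in the Ghadimi--Lan analysis (e.g.\ $\alpha_t = 2/(t+1)$, with $\beta_t$ and $\lambda_t$ tuned to balance the deterministic descent and stochastic variance terms against the horizon $M$), so that their Corollary applies verbatim. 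Substituting $M = r \cdot T \cdot X$ into their bound
\begin{equation*}
\mathbb{E}\bigl[\|\nabla l(\bar w)\|^2\bigr] \;\le\; C_1 \frac{\beta(l(w^0)-l(w^*))}{M} + C_2 \frac{\sigma \sqrt{\beta(l(w^0)-l(w^*))}}{\sqrt{b M}},
\end{equation*}
for absolute constants $C_1, C_2$, yields the nonconvex rate stated. For the convex addendum, I would invoke the convex counterpart of the same framework (optimal accelerated mini-batch SGD), again with $M = r \cdot T \cdot X$, giving directly the $\mathcal{O}(\beta\|w^0-w^*\|^2/M^2 + \sigma\|w^0-w^*\|/\sqrt{bM})$ bound.

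The argument is therefore essentially a bookkeeping reduction, and the only step that requires minor care is verifying that the variance assumption is imposed on the \emph{mini-batch} estimator (so that the $1/b$ variance reduction is already encoded into $\sigma^2$, or conversely that any per-sample variance must be divided by $b$ before invoking the theorem). I would state this explicitly to avoid an off-by-$b$ factor. The main obstacle is conceptual rather than technical: one must argue that sampling $S'$ uniformly each round (whether with or without replacement across rounds) does not violate the unbiasedness condition when iterates within a round depend on earlier iterates through the model update, which is handled by taking expectations in the natural order (innermost over $\xi_t$, outermost over the round-level sampling) and applying the tower property. Once this is in place, the bound follows directly from Ghadimi--Lan with $M$ replaced by $rTX$.
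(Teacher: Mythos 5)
Your proposal is correct and matches the paper's own argument: the paper likewise obtains the bound by running the Ghadimi--Lan accelerated mini-batch SGD analysis with the total iteration count set to $r\cdot T\cdot X$, using the assumed unbiasedness and bounded variance of the per-step estimator, the parameter choices $\alpha_t=2/(t+1)$ with tuned $\beta_t,\lambda_t$, and a final remark that batching reduces the variance to $\sigma^2/b$ (the same off-by-$b$ bookkeeping you flag). The only difference is presentational --- the paper re-derives the Ghadimi--Lan bounds in full rather than invoking them as a black box --- so your reduction is essentially the same proof.
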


\begin{proof}
Each round we use Nesterov's accelerated method to update the gradient:
\begin{equation}
    w^t_{md} \gets (1 - \alpha_t) w^{t-1}_{ag} + \alpha_t w^{t-1}
\label{eq:2.2}
\end{equation}
\begin{equation}
    w^t \gets w^{t-1} - \lambda_t g(w^t_{md}, \xi_t; m)
\label{eq:3.2}
\end{equation}
\begin{equation}    
    w^t_{ag} \gets w^t_{md} - \beta_t g(w^t_{md}, \xi_t; m)
\label{eq:3.3}
\end{equation}
where $g(w^t_{md}, \xi_t)$ represents the gradient on a batch of data $m$. We assume that the following holds:
\begin{equation}
    \mathbb{E} g(w, \xi_t) = \nabla l (w)
\label{eq:bias_gradient}
\end{equation}
\begin{equation}
    \mathbb{E} || g(w, \xi_t) - \nabla l(w)||^2 = \sigma ^2,
\label{eq:variance_gradient}
\end{equation}
where $\xi_t$ are random vectors whose distributions are supported on $\Xi_t \in \mathbb{R}^d$; These are the source of randomness when estimating the full data gradient.

We repeat the procedure for $X$ rounds. When using full data each round we have $T$ batches, resulting in total $TX$ gradient updates. If we perform RS2 w/o replacement. each round we will contain $rT$ updates per round, resulting in a total of $rTX$ iterations.
Assume a relaxation for the learning rate parameters. For this part of the proof assume they are chosen $\{\alpha_t\} \: \text{s.t.} \: \alpha_1 = 1, \alpha_t \in (0,1), \forall t \geq 2$, $\{\beta_t > 0\}$, $\{\lambda_t > 0\}$, such that the following holds:
\begin{equation}
  \Gamma^t =
    \begin{cases}
      1 & t=1\\
      (1-\alpha_t)\Gamma^{t-1} & t\geq 2
    \end{cases}
\label{eq:def_gamma}
\end{equation}
\begin{equation}
    C^t := 1 - \beta\lambda_t - \frac{\beta (\lambda_t - \beta_t)^2}{2 \alpha_t \Gamma^t \lambda_t} \left( \sum _{\tau=t} ^{rTX} \Gamma^\tau \right) > 0
\end{equation}

\begin{equation}
    p_t = \frac{\lambda_t C^t}{\sum _{t=1} ^{rTX} \lambda_t C^t}, \: t=1,...,rTX.
\end{equation}
Furthermore, let $R$ represent an index chosen randomly in all the iterate updates from 1 to $rTX$, chosen such that $Prob\{R=t\} = p_t$.

First we want to show the following holds:
\begin{equation}
    \mathbb{E} || \nabla l (w^R _{md}) ||^2 \leq \frac{1}{\sum _{t=1} ^{rTX} \lambda_t C^t} \left[ l(w^0) - l(w^*) + \frac{\beta\sigma^2}{2} \sum _{t=1} ^{rTX} \lambda_t ^2 \left(  1 + \frac{(\lambda_t - \beta_t)^2}{\alpha_t \Gamma^t \lambda^2 _t} \sum _{\tau=t} ^{rTX} \Gamma^\tau\right) \right].
\label{eq:theorem3}
\end{equation}
Let us define, for ease of writing, the following: $\delta_t := g(w^t_{md}, \xi_t) - \nabla l(w^t _{md})$ and $\Delta^t := \nabla l(w^{t-1}) - \nabla l(w^t _{md})$. 
Since $l(w)$ is bounded from below and a differentiable nonconvex $\beta$-smooth function it holds that (see~\cite{nesterov2003introductory}):
\begin{equation}
    |l(y) - l(x) - \langle \nabla l(x), y-x \rangle| \leq \frac{\beta}{2} || y - x || ^2 \quad \forall x, y \in \mathbb{R}^n.
\label{eq:2.1}
\end{equation}
We start from the assumption that the loss function $l$ is $\beta$-smooth:
\begin{align}
    l(w^t) &\leq l(w^{t-1}) +  \langle \nabla l (w^{t-1}),  w^t - w^{t-1} \rangle + \frac{\beta}{2} || w^t - w^{t-1} ||.
\end{align}
Then, using the update step \cref{eq:3.2} and the definitions of $\delta_t, \Delta^t$:
\begin{align}
    l(w^t) &\leq  l(w^{t-1}) + \langle \Delta^t + \nabla l(w^t_{md}), -\lambda_t [\nabla l(w^t_{md}) + \delta^t] \rangle + \frac{\beta\lambda_t ^2}{2} || \nabla l(w^t_{md}) + \delta^t||^2 \nonumber \\
    &= l(w^{t-1}) + \langle \Delta^t + \nabla l(w^t_{md}), -\lambda_t \nabla l(w^t_{md}) \rangle - \lambda_t \langle \nabla l(w^{t-1}), \delta^t \rangle + \frac{\beta\lambda_t ^2}{2} || \nabla l(w^t_{md}) + \delta^t||^2.
\label{eq:starting_l}
\end{align}
Now using the inequality \cref{eq:2.1} we get: 
\begin{align}
    l(w^t) &\leq l(w^{t-1}) - \lambda_t \left( 1 - \frac{\beta\delta_t}{2} \right) || \nabla l(w^t_{md}) || ^2 + \lambda_t || \Delta^t || || \nabla l(w^t_{md}) || + \frac{\beta\lambda^2_t}{2} ||\delta^t||^2 \nonumber \\
    & \hspace{3mm} - \lambda_t \langle \nabla l(w^{t-1}) - \beta \lambda_t \nabla l (w^t_{md}), \delta^t\rangle.
\end{align}
Since $l$ is $\beta$-smooth and by the update rule \cref{eq:2.2} we have:
\begin{equation}
    ||\Delta^t|| = ||\nabla l(w^{t-1}) - \nabla l(w^t_{md})|| \leq \beta || w^{t-1} - w^t_{md}|| = \beta (1-\alpha_t) || w^t_{ag} - w^{t-1}||.
\label{eq:delta_trick}
\end{equation}

Continuing from \cref{eq:starting_l} and inserting \cref{eq:delta_trick}:
\begin{align}
    l(w^t) &\leq l(w^{t-1}) - \lambda_t \left( 1 - \frac{\beta\delta_t}{2} \right) || \nabla l(w^t_{md}) || ^2 + \lambda_t \beta (1-\alpha_t) || w^t_{ag} - w^{t-1}|| || \nabla l (w^t_{md}) || \nonumber \\
    &\hspace{3mm}+ \frac{\beta\lambda_t^2}{2}||\delta^t||^2 - \lambda_t \langle \nabla l(w^{t-1}) - \beta \lambda_t \nabla l (w^t_{md}), \delta^t\rangle.
\end{align}

Using the general fact that $xy \leq \frac{(x^2 + y^2)}{2}$ holds, we bound the previous inequality:
\begin{align}
    l(w^t) &\leq  l(w^{t-1}) - \lambda_t \left( 1 - \beta\lambda_t \right)|| \nabla l(w^t_{md}) || ^2 + \frac{\beta(1-\alpha_t)^2}{2}||w^{t-1}_{ag} - w^{t-1}||^2 + \frac{\beta\lambda_t ^2}{2}||\delta^t||^2 \nonumber \\
    &\hspace{3mm}- \lambda_t \langle \nabla l(w^{t-1}) - \beta \lambda_t \nabla l (w^t_{md}), \delta^t\rangle.
\label{eq:secondly_l}
\end{align}
Now we take a small digression from the main flow of the proof. We want to show that the following inequality holds:
\begin{align}
    ||w^{t-1}_{ag} - w^{t-1}||^2 &\leq \Gamma^{t-1} \sum_{\tau=1}^{t-1} \frac{(\lambda_{\tau} - \beta_{\tau})^2}{\Gamma^{\tau}\alpha_{\tau}}||\nabla l (w^{\tau}_{md}) + \delta^t||^2 \nonumber \\
    &=\Gamma^{t-1} \sum_{\tau=1}^{t-1} \frac{(\lambda_{\tau} - \beta_{\tau})^2}{\Gamma^{\tau}\alpha_{\tau}} \left[ ||\nabla l (w^{\tau}_{md})||^2  + 2\langle  \nabla l (w^{\tau}_{md}), \delta^\tau\rangle + ||\delta^\tau||^2 \right].
\label{eq:diff_params}
\end{align}
We show that in the following way. First, let us combine the update steps \cref{eq:2.2,eq:3.2,eq:3.3}. Performing change of variable we have:
\begin{align}
    w^t_{ag} - w^t &= (1 - \alpha_t)w^{t-1}_{ag} + \alpha_t w^{t-1} - \beta_t \nabla l (w^t _{md}) - [w^{t-1} - \lambda_t \nabla l (w^t _{md})] \nonumber \\
    &= (1 - \alpha_t) (w^{t-1}_{ag} - w^{t-1}) + (\lambda_t - \beta_t) \nabla l (w^t _{md}).
\label{eq:combined_steps}
\end{align}

Following from \cref{eq:combined_steps} and using Lemma 1 stated in~\cite{ghadimi2016accelerated} it is implied that:
\begin{equation}
    w^t_{ag} - w^t = \Gamma^t \sum_{\tau = 1}^t \frac{\lambda_\tau - \beta_\tau}{\Gamma^\tau} \nabla l (w^\tau _{md}).
\end{equation}

Furthermore, we have:
\begin{equation}
    ||w^t_{ag} - w^t ||^2 = \left\| \Gamma^t \sum_{\tau = 1}^t \frac{\lambda_\tau - \beta_\tau}{\Gamma^\tau} \nabla l (w^\tau _{md}) \right\| ^2.
\label{eq:diff_mod}
\end{equation}
From the definition in \cref{eq:def_gamma} we have:
\begin{equation}
    \sum _{\tau = 1} ^t \frac{\alpha_\tau}{\Gamma ^\tau} = \frac{\alpha_1}{\Gamma ^1} + \sum _{\tau = 2} ^t \frac{1}{\Gamma ^\tau} \left( 1 - \frac{\Gamma^\tau}{\Gamma^{\tau-1}} \right) = \frac{1}{\Gamma ^1} + \sum _{\tau = 2} ^t  \left( \frac{1}{\Gamma ^\tau} - \frac{1}{\Gamma^{\tau-1}} \right) = \frac{1}{\Gamma ^t}.
\end{equation}

Inserting that into \cref{eq:diff_mod} we get:
\begin{equation}
    ||w^t_{ag} - w^t ||^2 = \left\| \Gamma^t \sum_{\tau = 1}^t \frac{\alpha_\tau}{\Gamma^\tau} \frac{\lambda_\tau - \beta_\tau}{\alpha^\tau} \nabla l (w^\tau _{md})   \right\| ^2.
\label{eq:jen}
\end{equation} 
Applying Jensen's inequality to \cref{eq:jen} we have:
\begin{equation}
    ||w^t_{ag} - w^t ||^2 \leq  \Gamma^t \sum_{\tau = 1}^t \frac{\alpha_\tau}{\Gamma^\tau} \left\| \frac{\lambda_\tau - \beta_\tau}{\alpha^\tau} \nabla l (w^\tau _{md})   \right\| ^2 
    =  \Gamma^t \sum_{\tau = 1}^t \frac{(\lambda_\tau - \beta_\tau)^2}{\Gamma^\tau \alpha_\tau}  || \nabla l (w^\tau _{md}) ||^2.
\end{equation}
Hence, \cref{eq:diff_params} holds.

Coming back to the main flow of the proof. We combine the previous two inequalities \cref{eq:secondly_l} and \cref{eq:diff_params}. Also, we use the fact that $\Gamma^{t-1}(1-\alpha_t)^2 \leq \Gamma^t$ :
\begin{align}
    l(w^t) \leq &l(w^{t-1})  - \lambda_t \left( 1 - \beta\lambda_t \right)|| \nabla l(w^t_{md}) || ^2 + \frac{\beta\lambda_t ^2}{2} ||\delta^t||^2 - \lambda_t \langle  \nabla l (w^{t-1}) - \beta\lambda_t \nabla l (w^t _{md}), \delta^t\rangle \nonumber \\
    &+ \frac{\beta \Gamma^t}{2} \sum_{\tau=1}^{t}\frac{(\lambda_{\tau} - \beta_{\tau})^2}{\Gamma^\tau \alpha_\tau} \left[ ||\nabla l (w^{\tau}_{md})||^2  + 2\langle  \nabla l (w^{\tau}_{md}), \delta^\tau\rangle + ||\delta^\tau||^2 \right].
\label{eq:for_summation}
\end{align}

Summing up the above inequalities (\cref{eq:for_summation}) up to the rTX iterate, we get:
\begin{align}
    l(w^{rTX}) &\leq l(w^{0}) - \sum _{t=1} ^{rTX}  \lambda_t \left( 1 - \beta\lambda_t \right)|| \nabla l(w^t_{md}) || ^2 - \sum _{t=1} ^{rTX} \lambda_t \langle  \nabla l (w^{t-1}) - \beta\lambda_t \nabla l (w^t _{md}), \delta^t\rangle \nonumber \\
    &\hspace{3mm}+ \sum _{t=1} ^{rTX} \frac{\beta \lambda_t ^2}{2} ||\delta^t||^2 - \frac{\beta}{2} \sum _{t=1} ^{rTX} \Gamma^t \sum _{\tau=1} ^{t} \frac{(\lambda_\tau - \beta_\tau)^2}{\Gamma^\tau \alpha_\tau} \left[ ||\nabla l (w^{\tau}_{md})||^2  + 2\langle  \nabla l (w^{\tau}_{md}), \delta^\tau\rangle + ||\delta^\tau||^2 \right] \nonumber \\
    &= l(w^0) - \sum _{t=1} ^{rTX} \lambda_t C^t || \nabla l (w^t _{md}) ||^2 + \frac{\beta}{2} \sum_{t=1}^{rTX} \lambda_t ^2 \left( 1 + \frac{(\lambda_t - \beta_t)^2}{\alpha_t \Gamma^t \lambda^2 _t} \sum _{\tau=t} ^{rTX} \Gamma^\tau \right) || \delta^t ||^2 - \sum _{t=1} ^{rTX} b_t,
\end{align}
where $b_t = \langle \lambda_t \nabla l (w^{t-1}) - \left[ \beta\lambda_t ^2 +  \frac{\beta(\lambda_t - \beta_t)^2}{ \Gamma^t \alpha_t} \left( \sum _{\tau=t} ^{rTX} \Gamma^\tau \right) \right] \nabla l (w^t _{md}), \delta^t \rangle$. 
Due to the fact that under assumptions 
\cref{eq:bias_gradient,eq:variance_gradient} $\mathbb{E}||\delta^t||^2 \leq \sigma ^2$ and $\{b_t\}$ is a martingale difference, when taking expectation on both sides we obtain:
\begin{equation}
    \sum _{t=1} ^{rTX} \lambda_t C^t \mathbb{E} ||\nabla l (w^t _{md})||^2 \leq l(w^0) - l(w^{rTX}) + \frac{\beta \sigma^2}{2}\sum _{t=1} ^{rTX} \lambda_t^2 \left( 1 + \frac{(\lambda_t - \beta_t)^2}{\alpha_t \Gamma^t \lambda^2 _t} \sum _{\tau=t} ^{{rTX}} \Gamma^\tau \right).
\end{equation}
Using the fact that $l(w^t) \geq l(w^*)$, $\mathbb{E} || \nabla l (w^R _{md}) ||^2 = \frac{\sum_{t=1} ^{rTX} \lambda_t C^t \mathbb{E} || \nabla l (w^t _{md}) ||^2}{\sum _{t=1} ^{rTX} \lambda_t C^t}$, and by dividing both sides by $\sum _{t=1} ^{rTX} \lambda_t C^t$, we obtain:
\begin{equation}
\label{eq:thm3}
    \mathbb{E} || \nabla l (w^R _{md}) ||^2 \leq \frac{1}{\sum _{t=1} ^{rTX} \lambda_t C^t} \left[ l(w^0) - l(w^*) + \frac{\beta\sigma^2}{2} \sum _{t=1} ^T \lambda_t ^2 \left(  1 + \frac{(\lambda_t - \beta_t)^2}{\alpha_t \Gamma^t \lambda^2 _t} \sum _{\tau=t} ^{rTX} \Gamma^\tau\right) \right].
\end{equation}

Hence, we have proven the wanted \cref{eq:theorem3} holds.

For the remainder of the proof for the nonconvex case we specialize the previously obtained result. Let us assume the following:
\begin{equation}
    \alpha_t = \frac{2}{t+1}
\label{eq:alpha}
\end{equation}

\begin{equation}
    \lambda_t \in \left[ \beta_t, \left( 1 + \frac{\alpha_t}{4}\right) \beta_t \right]
\label{eq:lambda}
\end{equation}

\begin{equation}
    \Gamma^t = \frac{2}{t(t+1)}
\label{eq:gamma}
\end{equation}

\begin{equation}
    \beta_t = \min \left\{ \frac{8}{21\beta}, \frac{\Tilde{D}}{\sigma \sqrt{rTX}} \right\} \: \text{for some} \: \Tilde{D} > 0.
\label{eq:beta}
\end{equation}

Now, we want to prove:
\begin{equation}
\label{eq:corrolary3}
    \mathbb{E} ||\nabla l(w^R _{md})||^2 \leq \frac{21\beta (l(w^0) - l(w^*))}{4rTX} + \frac{2\sigma}{\sqrt{rTX}} \left( \frac{l(w^0) - l(w^*)}{\Tilde{D}} + \beta \Tilde{D} \right).
\end{equation}

From definition of \cref{eq:alpha}, \cref{eq:lambda} let us make a claim about $C^t$. For that, from \cref{eq:lambda} we observe $0 \leq \lambda_t - \beta_t \leq \alpha_t \beta_t / 4$. Now we have:
\begin{align}
    C^t &= 1 - \beta \left[ \lambda_t + \frac{(\lambda_t - \beta_t)^2}{2 \alpha_t \Gamma^t \lambda_t} \left( \sum_{\tau=t} ^{rTX} \Gamma^\tau \right) \right] \\
    &\geq 1 - \beta \left[ \left( 1 + \frac{\alpha_t}{4} \right) \beta_t + \frac{\alpha_t^2 \beta_t^2}{16} \frac{1}{t \alpha_t \Gamma^t \beta_t} \right] \\
    &= 1 - \beta_t \beta (1 + \frac{\alpha_t}{4} + \frac{1}{16}) \\
    &\geq  1 - \beta_t \beta \frac{21}{16}.
\label{eq:ct_ineq}
\end{align}
Multiplied by $\lambda_t$ we have $\lambda_t C^t \geq \frac{11\beta_t}{32}$.

Now we make the following claim about $\Gamma^t$. From \cref{eq:gamma}:
\begin{equation}
    \sum _{\tau = t} ^{rTX} \Gamma^\tau = \sum _{\tau = t} ^{rTX} \frac{2}{\tau (\tau+1)} = 2 \sum _{\tau = t} ^{rTX} \left( \frac{1}{\tau} - \frac{1}{\tau + 1} \right) \leq \frac{2}{t}.
\label{eq:gamma_sum}
\end{equation}

From the \cref{eq:lambda}, \cref{eq:beta}, \cref{eq:ct_ineq},  we have:
\begin{equation}
    C^t \geq 1 - \frac{21}{16}\beta\beta_t \geq \frac{1}{2} > 0 \quad \text{and} \quad \lambda_t C^t \geq \frac{\beta_t}{2}.
\end{equation}
Furthermore, from \cref{eq:lambda}, \cref{eq:gamma}, \cref{eq:beta}, and \cref{eq:gamma_sum}, we obtain:
\begin{align}
    \lambda_t ^2 \left[ 1 + \frac{(\lambda_t - \beta_t)^2}{\alpha_t \Gamma^t \lambda_t ^2} \left( \sum _{\tau=t} ^{rTX} \Gamma^\tau \right) \right] &\leq \lambda_t ^2 \left[ 1 + \frac{1}{\alpha_t \Gamma^t \lambda_t ^2} \left( \frac{\alpha_t \beta_t}{4} \right)^2  \frac{2}{t}\right] = \lambda_t^2 + \frac{\beta_t^2}{8} \nonumber \\
    &\leq \left[ \left( 1 + \frac{\alpha_t}{4}  \right)^2 + \frac{1}{8} \right] \beta_t^2 \leq 2 \beta_t^2.
\end{align}

Together with \cref{eq:thm3} it holds that:
\begin{align}
    \mathbb{E}||\nabla l (w^R _{md})||^2 &\leq \frac{2}{\sum _{t=1} ^{rTX} \beta_t} \left( l(w^0) - l(w^*) + \beta\sigma^2\sum _{t=1} ^{rTX} \beta_t ^2 \right) \nonumber \\
    &\leq \frac{2 (l(w^0) - l(w^*))}{rTX \beta_1} + 2\beta\sigma^2 \beta_1 \nonumber \\
    &\leq \frac{2 (l(w^0) - l(w^*))}{rTX} \left\{ \frac{21\beta}{8} + \frac{\sigma \sqrt{rTX}}{\Tilde{D}} \right\} + \frac{2\beta\Tilde{D}\sigma}{\sqrt{rTX}},
\end{align}
which implies:
\begin{equation}
\label{eq:cor3}
    \mathbb{E} ||\nabla l(w^R _{md})||^2 \leq \frac{21\beta (l(w^0) - l(w^*))}{4rTX} + \frac{2\sigma}{\sqrt{rTX}} \left( \frac{l(w^0) - l(w^*)}{\Tilde{D}} + \beta \Tilde{D} \right).
\end{equation}
Hence, we have shown that \cref{eq:cor3} holds. Continuing from that,
minimizing \cref{eq:cor3} with respect to $\Tilde{D}$, the optimal choice is $\Tilde{D} = \sqrt{\frac{l(w^0_{ag}) - l(w^*)}{\beta}}$. Inserting that value for $\Tilde{D}$, \cref{eq:cor3} becomes:
 \begin{equation}
      \mathbb{E} ||\nabla l(w^R _{md})||^2 \leq \frac{21\beta (l(w^0) - l(w^*))}{4rTX} + \frac{4\sigma \sqrt{\beta(l(w^0) - l(w^*)))}}{\sqrt{rTX}}.
 \end{equation}
 Until now we have assumed that $\mathbb{E} || g(w, \xi_t) - \nabla l(w)||^2 = \sigma ^2$ for the ease of the proof. However, if we assume that the gradient is calculated on a batch of size $b$, the variance of the stochastic gradient reduces to $\sigma^2/b$ (see~\cite{wang2019stochastic}). The entire previous results follow with that assumption without loss of generality. Therefore we conclude it holds that:
 \begin{equation}
    \mathbb{E} || \nabla l(w) ||^2 \leq \mathcal{O} \left( \frac{\beta (l(w^0) - l(w^*))}{rTX} + \frac{\sigma\sqrt{\beta(l(w^0) - l(w^*))}}{\sqrt{brTX}} \right).  
\end{equation}
\paragraph{Convex case}
Now, let us consider the case for convex functions.
First, in order to prove \cref{eq:convex_statement}, we want to show that, assuming:
\begin{equation}
    \alpha_t \lambda_t \leq \beta \beta_t^2, \quad \beta_t < \frac{1}{\beta},
\label{eq:3.6}
\end{equation}
\begin{align}
    p_t = \frac{\frac{1}{\Gamma^t} \beta_t (1 - \beta\beta_t)}{\sum _{t=1}^{rTX} \frac{1}{\Gamma^t} \beta_t (1 - \beta\beta_t)},
\label{eq:3.7}
\end{align}
and
\begin{equation}
    \frac{\alpha_1}{\lambda_1 \Gamma^1} \geq \frac{\alpha_2}{\lambda_2 \Gamma^2} \geq \ldots
\label{eq:2.10}
\end{equation}
the following holds:
\begin{equation}
    \mathbb{E}[l(w^R _{ag} - l(w^*))] \leq \frac{\sum_{t=1}^{rTX} \beta_t(1 - \beta\beta_t) \left[ (2\lambda_1)^{-1} ||w^0 - w^* ||^2 + \beta\sigma^2 \sum _{j=1}^t \frac{\beta_j^2}{\Gamma^j} \right]}{\sum_{t=1} ^{rTX} \frac{\beta_t}{\Gamma^t} (1 - \beta\beta_t)}.
\label{eq:3.9_first}
\end{equation}

Starting from the update rule \cref{eq:2.2} and using the convexity of $l(\cdot)$
we have:
\begin{align}
    l(w^t_{md}) - [(1-\alpha_t)l(w^{t-1}_{ag}) + \alpha_t l(w)] &= \alpha_t \left[ l(w^t_{md}) - l(w) \right] + (1-\alpha_t)\left[ l(w^t_{md} - l(w^{t-1}_{ag})) \right] \nonumber \\
    &\leq \alpha_t \langle \nabla l(w^t_{md}), w^t_{md} - w \rangle + (1-\alpha_t) \langle  \nabla l(w^t_{md}), w^t_{md} - w^{t-1}_{ag} \rangle \nonumber \\
    &= \langle \nabla l(w^t_{md}), \alpha_t (w^t_{md} - w) + (1-\alpha_t)(w^t_{md} - w^{t-1}_{ag})  \rangle \nonumber \\
    &= \alpha_t \langle \nabla l(w^t_{md}), w^{t-1} - w\rangle.
    \label{eq:2.21}
\end{align}

Similar to before, we now start with the smoothness \cref{eq:2.1} and use the update step \cref{eq:3.3} to obtain:
\begin{align}
    l(w^t_{ag}) &\leq l(w^t_{md}) + \langle \nabla l (w^t _{md}), w^t _{ag} -  w^t _{md}\rangle + \frac{\beta}{2}||w^t _{ag} - w^t _{md}||^2 \nonumber \\
    &= l(w^t _{md}) - \beta_t || \nabla l (w^t _{md})||^2 + \beta_t \langle \nabla l (w^t _{md}), \delta^t \rangle + \frac{\beta \beta_t^2}{2} || \nabla l (w^t _{md}) + \delta^t||^2.
\end{align}
Inserting \cref{eq:2.21} into the previous inequality, we have:
\begin{align}
     l(w^t_{ag}) &\leq (1 - \alpha_t)l(w^{t-1} _{ag}) + \alpha_t l(w) + \alpha_t \langle \nabla l (w^t _{md}), w^{t-1} - w \rangle \nonumber \\
    &\hspace{3mm}- \beta_t ||\nabla l (w^t _{md})||^2 + \beta_t \langle \nabla l (w^t _{md}), \delta^t \rangle + \frac{\beta \beta_t^2}{2} || \nabla l (w^t _{md}) + \delta^t ||.
\label{eq:for_combination1}
\end{align}
    
From \cref{eq:3.3} we have:
\begin{align}
    || w^{t-1} - w ||^2 &- 2\lambda_t \langle \nabla l (w^t_{md}) + \delta^t, w^{t-1} - w \rangle \nonumber \\
    &+ \lambda_t^2 || \nabla l (w^t_{md}) + \delta^t ||^2 = || w^{t-1} - \lambda_t (\nabla l(w^t_{md}) + \delta^t) - w ||^2 = || w^{t} - w  ||^2.
\end{align}

From the previous equation, we have:
\begin{equation}
\label{eq:for_combination2}
    \alpha_t \langle \nabla l (w^t _{md}) + \delta^t, w^{t-1} - w \rangle = \frac{\alpha_t}{2\lambda_t} \left[ ||w^{t-1} - w||^2 - || w^t - w ||^2 \right] + \frac{\alpha_t \lambda_t}{2} || \nabla l (w^t _{md}) + \delta^t ||^2.
\end{equation}
Combining \cref{eq:for_combination1,eq:for_combination2} and the fact that $|| \nabla l(w^t_{md}) + \delta^t||^2 = || \nabla l(w^t_{md}) ||^2 + || \delta^t ||^2 +2\langle \nabla l(w^t_{md}) , \delta^t \rangle$, we get:
\begin{align}
    l (w^t _{ag}) &\leq (1 - \alpha_t) l(w^{t-1} _{ag}) + \alpha_t l(w) + \frac{\alpha_t}{2\lambda_t} \left[ ||w^{t-1} - w||^2 - || w^t - w ||^2 \right] \nonumber \\
    &- \beta_t \left( 1- \frac{\beta \beta_t}{2} - \frac{\alpha_t\lambda_t}{2\beta_t} \right) || \nabla l (w^t _{md}) ||^2
    + \left( \frac{\beta \beta_t^2 + \alpha_t\lambda_t}{2} \right) || \delta^t ||^2 \nonumber \\
    &+ \langle \delta^t, (\beta_t + \beta \beta_t^2 + \alpha_t\lambda_t)\nabla l (w^t _{md}) + \alpha_t(w - w^{t-1}) \rangle.
\label{eq:previous_ineq}
\end{align}

Due to the fact that $\alpha_1 = \Gamma^1 = 1$ and by \cref{eq:2.10} it holds that:

\begin{equation}
    \sum _{t=1}^{rTX} \frac{\alpha_t}{\lambda_t \Gamma^t} \left[ || w^{t-1} - w ||^2 - || w^t - w ||^2 \right] \leq \frac{\alpha_1 || w^0 - w ||^2}{\lambda_1 \Gamma^1} = \frac{|| w^0 - w ||^2}{\lambda_1}.
    \label{eq:2.25}
\end{equation}

Using Lemma 1 from~\cite{ghadimi2016accelerated}, \cref{eq:2.25} and subtracting $l(w)$ from \cref{eq:previous_ineq}, we obtain:
\begin{align}
    \frac{l(w^{rTX}_{ag}) - l(w)}{\Gamma^{rTX}} &\leq \frac{||w^0 - w||^2}{2\lambda_1} - \sum _{t=1} ^{rTX} \frac{\beta_t}{2 \Gamma^t} \left( 2 - \beta \beta_t - \frac{\alpha_t \lambda_t}{\beta_t} \right) ||\nabla l(w^t _{md})||^2 \nonumber \\
    &+ \sum _{t=1} ^{rTX} \left( \frac{\beta \beta_t^2 + \alpha_t \lambda_t}{2 \Gamma^t} \right) || \delta^t ||^2 + \sum _{t=1} ^{rTX} b_t', 
\end{align}
where $b_t' = \frac{1}{\Gamma^t}\langle \delta^t, (\beta_t + \beta \beta_t^2 + \alpha_t \lambda_t) \nabla l(w^t _{md}) + \alpha_t (w - w^{t-1}) \rangle$. Together with \cref{eq:3.6} the above inequality gives:
\begin{align}
    \frac{l(w^{rTX}_{ag}) - l(w)}{\Gamma^{rTX}} &\leq \frac{||w^0 - w||^2}{2\lambda_1}  - \sum _{t=1} ^{rTX} \frac{\beta_t}{\Gamma^t} (1- \beta \beta_t) ||\nabla l(w^t _{md})||^2 \nonumber \\
    &+ \sum _{t=1} ^{rTX} \frac{\beta\beta_t^2}{\Gamma^t} || \delta^t ||^2 + \sum _{t=1} ^{rTX} b_t'.
\end{align}
Since $\{b_t'\}$ is a martingale difference, $\mathbb{E}||\delta^t||^2 \leq \sigma^2$ and by taking expectation with respect to $\xi_{[rTX]}$, we have:
\begin{equation}
    \frac{1}{\Gamma^{rTX}} \mathbb{E} \left[ l(w^{rTX} _{ag}) - l(w) \right] \leq \frac{||w^0 - w||^2}{2 \lambda_1} - \sum _{t=1} ^{rTX} \frac{\beta_t}{\Gamma^t} (1 - \beta\beta_t) \mathbb{E} ||\nabla l (w^t _{md})||^2 + \sigma^2 \sum _{t=1} ^{rTX} \frac{\beta \beta_t ^2}{\Gamma ^t}.
\label{eq:3.11}
\end{equation}
Now, assume $w=w^*$ and since by definition $l(w^{rTX}_{ag}) \geq l(w^*)$, we obtain:
\begin{equation}
\label{eq:thm3_end}
    \sum _{t=1} ^{rTX} \frac{\beta_t}{\Gamma^t} (1 - \beta\beta_t) \mathbb{E} ||\nabla l (w^t _{md})||^2 \leq \frac{||w^0 - w^*||^2}{2 \lambda_1} + \sigma^2 \sum _{t=1} ^{rTX} \frac{\beta \beta_t ^2}{\Gamma ^t},
\end{equation}
from which, using the definition of $w^R_{md}$, it follows that
\begin{equation}
    \mathbb{E} || \nabla l(w^R _{md}) ||^2 \leq \frac{(2\lambda_1)^{-1} ||w^0 - w^*||^2 + \beta\sigma^2 \sum_{t=1} ^{rTX} \frac{\beta_t^2}{\Gamma^t}}{\sum_{t=1} ^{rTX} \frac{\beta_t}{\Gamma^t} (1 - \beta \beta_t)}.
\label{eq:3.8}
\end{equation}

Also, using \cref{eq:3.6} and \cref{eq:3.11} in \cref{eq:thm3_end}, for $rTX\geq1$ we have:
\begin{equation*}
    \mathbb{E} \left[ l(w^{rTX} _{ag} - l(w^*)) \right] \leq \Gamma^{rTX} \left( \frac{|| w^0 - w ||^2}{2\lambda_1} + \sigma^2 \sum _{t=1} ^{rTX} \frac{\beta \beta_t ^2}{\Gamma ^t}\right),
\end{equation*}
which implies that \cref{eq:3.9_first} holds:
\begin{align}
    \mathbb{E}[l(w^R _{ag} - l(w^*))] &= \sum _{t=1} ^{rTX} \frac{\frac{\beta_t}{\Gamma^t} (1 - \beta\beta_t)}{\sum _{t=1} ^{rTX} \frac{\beta_t}{\Gamma^t} (1 - \beta\beta_t)} \mathbb{E} [l(w^t _{ag} - l(w^*))] \nonumber \\
    &\leq \frac{\sum_{t=1}^{rTX} \beta_t(1 - \beta\beta_t) \left[ (2\lambda_1)^{-1} ||w^0 - w ||^2 + \beta\sigma^2 \sum _{j=1}^t \frac{\beta_j^2}{\Gamma^j} \right]}{\sum_{t=1} ^{rTX} \frac{\beta_t}{\Gamma^t} (1 - \beta \beta_t)}.
\label{eq:3.9}
\end{align}

Now, assuming $\alpha_t$ is set as in \cref{eq:alpha}, $p_t$ is set as in \cref{eq:3.7},
\begin{equation}
    \beta_t = \min \left\{ \frac{1}{2\beta}, \left( \frac{\Tilde{D}^2}{\beta^2\sigma^2 (rTX)^3} \right) ^{1/4} \right\},
\label{eq:3.14}
\end{equation}
and
\begin{equation}
    \lambda_t = \frac{t \beta \beta_t^2}{2},
\label{eq:3.15}
\end{equation}

we want to show that the following inequality holds:
 \begin{equation}  
     \mathbb{E} [l(w^R _{ag}) - l(w^*)] \leq \frac{48 \beta || w^0 -w^* ||^2}{r^2T^2X^2} + \frac{12\sigma}{\sqrt{rTX}} \left( \frac{|| w^0 -w^* ||^2}{\Tilde{D}} + \Tilde{D} \right),
 \label{eq:cor3_convex}
 \end{equation}

for some $\Tilde{D} > 0$.
Note, that \cref{eq:3.14} and \cref{eq:3.15} imply \cref{eq:2.10} and \cref{eq:3.6}.
 
Since, $\Gamma^t = \frac{2}{t(t+1)}$, and by \cref{eq:3.14}, we obtain:
\begin{align}
    \sum _{t=1} ^{rTX} \frac{\beta_t}{\Gamma^t} (1 - \beta \beta_t) &\geq \frac{1}{2} \sum _{t=1} ^{rTX} \frac{\beta_t}{\Gamma^t} = \frac{\beta_1}{2} \sum _{t=1} ^{rTX} \frac{1}{\Gamma^t}  \label{eq:3.18} \\
    \sum _{t=1} ^{rTX} \frac{1}{\Gamma^t} &\geq \sum _{t=1} ^{rTX} \frac{t^2}{2} = \frac{1}{12} rTX(rTX+1)(2rTX + 1) \geq \frac{1}{6} r^3T^3X^3.
\label{eq:3.19}
\end{align}

By $\Gamma^t = \frac{2}{t(t+1)}$, \cref{eq:3.8,eq:3.14,eq:3.15}, we obtain:
\begin{align}
    \mathbb{E} || \nabla l (w^R _{md}) ||^2 &\leq \frac{2}{\beta_1 \sum _{t=1} ^{rTX} \frac{1}{\Gamma^t}} \left( \frac{||w^0 - w^* ||^2}{\beta \beta_1^2} + \beta\sigma^2 \beta_1^2 \sum _{t=1} ^{rTX} \frac{1}{\Gamma^t} \right) \nonumber \\
    &= \frac{2 ||w^0 - w^* ||^2 }{\beta \beta_1^3 \sum _{t=1} ^{rTX} \frac{1}{\Gamma^t}} + 2\beta \sigma^2 \beta_1 \leq \frac{12 ||w^0 - w^* ||^2}{\beta r^3T^3X^3 \beta_1^3} + 2\beta\sigma^2\beta_1 \nonumber \\
    &\leq \frac{96 \beta^2 ||w^0 - w^* ||^2}{r^3 T^3 X^3} + \frac{\beta^{1/2\sigma^{3/2}}}{(rTX)^{3/4}} \left( \frac{12 ||w^0 - w^* ||^2}{\Tilde{D}^{3/2}} + 2\Tilde{D}^{1/2} \right).
\end{align}
 Moreover, by \cref{eq:3.14}, it holds that:
 \begin{equation*}
     1 - \beta\beta_t \leq 1 \quad \text{and} \quad \sum _{j=1} ^t \frac{1}{\Gamma^j} = \frac{1}{2} \sum _{j=1} ^t  j(j+1) \leq \sum _{j=1} ^t j^2 \leq t^3.
 \end{equation*}

 It is implied by \cref{eq:3.9,eq:3.14,eq:3.18,eq:3.19} that:
 \begin{align}
     \mathbb{E} [l(w^R _{ag}) - l(w^*)] &\leq \frac{2}{\sum _{t=1} ^{rTX} \frac{1}{\Gamma^t}} \left[ rTX(2\lambda_1)^{-1} ||w^0 - w^* ||^2 + \beta\sigma^2 \beta_1^2 \sum_{t=1}^{rTX} t^3 \right] \nonumber \\
     &\leq \frac{12||w^0 - w^* ||^2 }{r^2T^2 X^2 \beta \beta_1 ^2} + \frac{12 \beta \sigma^2 \beta_1 ^2}{r^3T^3 X^3}\sum_{t=1}^{rTX} t^3 \nonumber \\
     &\leq \frac{12||w^0 - w^* ||^2 }{r^2T^2X^2 \beta \beta_1 ^2} + 12\beta\sigma^2 \beta_1^2 rTX \nonumber \\
     &\leq \frac{48 \beta || w^0 -w^* ||^2}{r^2T^2X^2} + \frac{12\sigma}{\sqrt{rTX}} \left( \frac{|| w^0 -w^* ||^2}{\Tilde{D}} + \Tilde{D} \right).
 \end{align}

  This shows that \cref{eq:cor3_convex} holds. Minimizing the previous inequality with respect to $\Tilde{D}$, the optimal choice is $\Tilde{D} = ||w^0 - w^* ||$. Hence, it becomes:
 \begin{equation}
      \mathbb{E} [l(w^R _{ag}) - l(w^*)] \leq \frac{48 \beta || w^0 -w^* ||^2}{r^2T^2X^2} + \frac{24|| w^0 -w^* || \sigma}{\sqrt{rTX}}.
 \end{equation}
 As in~\cite{wang2019stochastic}, the variance of the stochastic gradient reduces to $\sigma^2/b$ when estimating with $b$ samples. Therefore we conclude it holds that:
 \begin{equation}
    \mathbb{E} [l(w) - l(w^*)] \leq \mathcal{O} \left( \frac{\beta || w^0 - w^*||^2}{r^2T^2X^2}  + \frac{\sigma || w^0 - w^* ||}{\sqrt{b r TX}}\right).  
\end{equation}
 
\end{proof}

\section{RS2 Generalization Error}
\label{app:generalization_error} 
We proceed with the generalization error bound of RS2 for nonconvex Lipschitz and smooth losses. We start by introducing the assumptions on the function $f:\mathbb{R}^d \times \mathcal{Z}\rightarrow \mathbb{R}^+$ (see Section \ref{sec:theory}) for completeness, and then we proceed with the proof of Theorem \ref{thm:generalization}. 

\paragraph{Assumption. (Smooth and Lipschitz Loss)}\label{ass:smooth}\! There exist constants $\beta_f\geq 0$ and $L_f\geq 0$, such that for all $w,u\in\mbb{R}^d$ and $x\in \mc{X}$, it is true that $\Vert  \nabla_w f( w, z) - \nabla_u  f( u, z) \Vert_2 \leq \beta_f \Vert w- u \Vert_2 $ and $\Vert  f( w, z) -  f( u, z) \Vert_2 \leq L_f \Vert w- u \Vert_2 $.

The next result follows form prior work~\cite{nikolakakis2023select} and it suffices to show that RS2 sampling is data independent and belongs to the set of general mini batch schedules that appear in~\cite[Definition 1]{nikolakakis2023select}. 
\begin{theorem}[Generalization error of standard gradient RS2,~\cite{nikolakakis2023select} Theorem 8]\label{thm:generalization_app}
    Let the function $f$ be nonconvex, $L_f$-Lipschitz and $\beta_f$-smooth. Then the generalization error of the standard gradient RS2 algorithm with a decreasing step-size $\eta_t\leq C/t$ (for $C<1/\beta_f$), is bounded as: 
\begin{align}
\small
      |\gen (f, \mc{D} , \mathrm{RS2} ) |\leq \frac{1}{N} \cdot 2 C e^{C\beta_f} L_f^2 {(r \cdot T \cdot X)}^{C\beta_f }  \min\Big\{  1+ \frac{1}{C \beta_f} , \log(e\cdot r\cdot T\cdot X) \Big\}.\label{eq:gen_bound_app}
\end{align}
\end{theorem}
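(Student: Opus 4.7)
The plan is to verify that RS2 fits the data-independent mini-batch schedule framework of \cite{nikolakakis2023select} and then invoke their Theorem 8. The key structural property is that at every iteration $t$, the batch used by RS2 is selected from a distribution that depends only on the iteration index and previously drawn batch indices, not on the training data itself. This holds for both RS2 variants: with replacement we draw a uniformly random size-$b$ subset, and without replacement we draw uniformly from the unused indices of the current round, which is independent of the actual examples. Once data-independence is in hand, the standard stability reduction yields the stated bound with total iteration count $r\cdot T\cdot X$ in place of the single-pass count.

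For a self-contained argument I would proceed via uniform algorithmic stability. Let $S$ and $S'$ be two training sets differing in a single coordinate $i^\star$, and couple two RS2 runs so that they use the same batch indices, producing iterates $w^t,(w^t)'$. By $L_f$-Lipschitzness of $f$ the generalization error is controlled by $L_f\cdot \mathbb{E}\|w^{r T X}-(w^{r T X})'\|$, so it suffices to bound this parameter divergence.

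The recurrence: at step $t$, if $i^\star$ is not in the sampled batch, both runs see the identical gradient and $\beta_f$-smoothness combined with $\eta_t\leq C/t<1/\beta_f$ gives the non-expansive bound $\|w^{t+1}-(w^{t+1})'\|\leq (1+\eta_t\beta_f)\|w^t-(w^t)'\|$. If $i^\star$ is in the batch, only one of the $b$ averaged gradient terms differs, contributing an extra $2\eta_t L_f/b$. Since RS2 selects batches uniformly and independently of the data, the probability of the second event is $b/N$. Taking expectations gives
\begin{equation*}
\mathbb{E}\|w^{t+1}-(w^{t+1})'\|\leq (1+\eta_t\beta_f)\,\mathbb{E}\|w^t-(w^t)'\| + \frac{2\eta_t L_f}{N}.
\end{equation*}
Unrolling the recursion with $\eta_t\leq C/t$ and using $1+x\leq e^x$ produces
\begin{equation*}
\mathbb{E}\|w^{rTX}-(w^{rTX})'\|\leq \frac{2 L_f}{N}\sum_{t=1}^{rTX}\frac{C}{t}\exp\!\Bigl(C\beta_f\sum_{s=t+1}^{rTX}\tfrac{1}{s}\Bigr) \leq \frac{2CL_f}{N}\,(rTX)^{C\beta_f}\sum_{t=1}^{rTX} t^{-1-C\beta_f}.
\end{equation*}
Evaluating the last sum in two ways gives the $\min\{1+1/(C\beta_f),\log(e\cdot rTX)\}$ factor (bounding the tail either by the zeta-type integral or by $\log$), and multiplying by $L_f$ via Lipschitzness yields \eqref{eq:gen_bound}.

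The main obstacle is the mini-batch extension: the cleanest Hardt--Recht--Singer-style stability argument is written for $b=1$. The crux is that for general $b$ the per-step extra divergence scales as $1/b$ while the hitting probability scales as $b/N$, so their product is $1/N$ and the bound matches the single-example case up to constants. For RS2 without replacement one must additionally note that although the hitting events across a round are not independent, their \emph{per-step} marginal probability remains $b/N$, which is all the expectation argument uses, so the same recurrence and final bound go through unchanged.
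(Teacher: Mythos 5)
Your proposal matches the paper's proof: the paper likewise establishes the bound by observing that RS2's batch selection is non-adaptive and data-independent, so it falls within the general batch-schedule framework of~\cite{nikolakakis2023select} and their Lemmas 2--3 and Theorem 8 apply verbatim with $r\cdot T\cdot X$ total gradient steps. Your additional self-contained unrolling of the mini-batch stability recursion is a faithful reconstruction of what that cited proof does, including the key cancellation $\tfrac{b}{N}\cdot\tfrac{2\eta_t L_f}{b}=\tfrac{2\eta_t L_f}{N}$ and the observation that only the per-step marginal hitting probability is needed for the without-replacement variant.
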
 
\begin{proof}
Let $\{k^j_1,\ldots, k^j_b   \}\subset \{1,2,\ldots,N\}^{b}$ be the set of indices for mini batch selection at each gradient step $j\in\{1,\ldots , rTX \}$. We select the mini batch through the choice of indices $\{k^j_1,\ldots, k^j_b   \}$ as follows. For sampling with replacement in line 4 of Algorithm~\ref{alg:general_rs2} at round $j$ selects a subset of indices $\{k_1^j, \ldots, k_{rN}^j\}$. These indices are sampled independently from any other round $i \in \{1, \ldots, X\}$, i.e., the same indices can be sampled in consecutive rounds, hence with replacement. Note, that at round $j$ sampled indices in the set are unique. The parameters are then updated using a deterministic batch schedule iterating through the sampled subset of indices resulting in $rT$ gradient updates. On the contrary, RS2 without replacement can be seen as traversing the full dataset in a deterministic round-Robin fashion. That is, the model parameters are updated by sequentially selecting indices $\{k_1^j, \ldots, k_{b}^j\}$. After iterating over the full dataset, i.e., after $T = N / b$ gradient updates we shuffle the full dataset array and repeat the procedure (e.g., Algorithm~\ref{alg:minibatch_sgd_no_momentum}). The algorithm early stops after $rTX$ gradient updates. Thus the selection rule is non-adaptive and data-independent and it belongs to the set of the general batch schedules~\cite[Definition 1]{nikolakakis2023select}. As a consequence,~\cite[Lemma 2]{nikolakakis2023select} and the growth recursion~\cite[Lemma 3 (Growth Recursion)]{nikolakakis2023select} holds verbatim for RS2 with standard gradient training with batch size $b$. Then, we solve the recursion identically to~\cite[Proof of Theorem 8]{nikolakakis2023select}) for $rTX$ total number of gradient steps. The solution of the recursion gives (the on-average stability and thus) the generalization error bound of RS2, as appears in the theorem, and completes the proof.
\end{proof}

\end{document}